\pgfplotsset{compat=1.18}
\DeclareMathOperator{\eps}{\varepsilon}
\DeclareMathOperator{\proba}{\mathbb{P}}
\DeclareMathOperator{\AAA}{\mathcal{A}}
\DeclareMathOperator{\XX}{\mathcal{X}}
\DeclareMathOperator{\ZZ}{\mathcal{Z}}
\DeclareMathOperator{\CC}{\mathcal{C}}
\DeclareMathOperator{\BB}{\mathcal{B}}
\DeclareMathOperator{\NN}{\mathcal{N}}
\DeclareMathOperator{\PP}{\mathcal{P}}
\DeclareMathOperator{\RRR}{\mathcal{R}}
\DeclareMathOperator{\NNo}{{\mathbb{N}}_{0}}
\DeclareMathOperator*{\Esup}{ess\,sup}
\DeclareMathOperator{\aaa}{\mathbf{a}}
\DeclareMathOperator{\uu}{\mathbf{u}}
\newcommand{\norm}[1]{\left\|#1\right\|}
\newcommand{\normtv}[1]{\left\|#1\right\|_{\text{TV}}}
\newcommand{\prob}[1]{\mathbb{P}\left(#1\right)}
\newcommand{\RR}[1]{\mathbb{R}^{#1}}
\newcommand{\expo}[1]{\text{exp}\left(#1\right)}
\newcommand{\lnn}[1]{\ln\left(#1\right)}
\newcommand{\expected}[2]{\mathbb{E}_{#1}\left[#2\right]}
\newtheorem{theorem}{Theorem}[section]
\newtheorem{lemma}[theorem]{Lemma}
\newtheorem{corollary}[theorem]{Corollary}
\newtheorem{proposition}[theorem]{Proposition}
\newtheorem{remark}[theorem]{Remark}
\newtheorem{definition}[theorem]{Definition}
\title[Langevin under a Modulus of Continuity]{Mixing Times and Privacy Analysis for the Projected Langevin Algorithm under a Modulus of Continuity}
\author[M. Bravo]{Mario Bravo$^{\dagger}$}
\thanks{}
\address{$^{\dagger}$Departamento de Administraci\'on, Facultad de Administraci\'on y Econom\'ia, Universidad de Santiago de Chile, Av. Libertador Bernardo O’Higgins 3363, Santiago, Chile} 
\email{ \href{mailto:mario.bravo.g@usach.cl}{\nolinkurl{mario.bravo.g@usach.cl}}}
\author[J.P. Flores-Mella]{Juan Pablo Flores-Mella$^{\ast}$}
\address{$^{\ast}$Facultad de Matemáticas, Pontificia Universidad Cat\'olica de Chile, Vicu\~na Mackenna 4860, Santiago, Chile }
\email{ \href{mailto:jpflores1@uc.cl}{\nolinkurl{jpflores1@uc.cl}}}
\author[C. Guzm\'an]{Crist\'obal Guzm\'an$^{\ast,\ddagger}$}
\address{$^{\ddagger}$Institute for Mathematical and Computational Engineering, Facultad de Matem\'aticas and School of Engineering, Pontificia Universidad Cat\'olica de Chile, Vicu\~na Mackenna 4860, Santiago, Chile}
\email{ \href{mailto:crguzmanp@uc.cl}{\nolinkurl{crguzmanp@uc.cl}}}
\begin{document}

\begin{abstract}
We study the mixing time of the projected Langevin algorithm (LA) and the privacy curve of noisy Stochastic Gradient Descent (SGD), beyond nonexpansive iterations. Specifically, we derive new mixing time bounds for the projected LA which are, in some important cases, dimension-free and poly-logarithmic on the accuracy, closely matching the existing results in the smooth convex case. Additionally, we establish new upper bounds for the privacy curve of the subsampled noisy SGD algorithm. These bounds show a crucial dependency on the regularity of gradients, and are useful for a wide range of convex losses beyond the smooth case. Our analysis relies on a suitable extension of the Privacy Amplification by Iteration (PABI) framework \citep{feldman2018privacy,altschuler2022privacy,altschuler2022resolving} to noisy iterations whose gradient map is not necessarily nonexpansive. This extension is achieved by designing an optimization problem which accounts for the best possible R\'enyi divergence bound obtained by an application of PABI, where the tractability of the problem is crucially related to the modulus of continuity of the associated gradient mapping. We show that, in several interesting cases --namely the nonsmooth convex, weakly smooth and (strongly) dissipative-- such optimization problem can be solved exactly and explicitly, yielding the tightest possible PABI-based bounds.

\end{abstract}

\maketitle

\begin{small}
  \noindent{\bf Keywords:}  Privacy Amplification by Iteration, Langevin Algorithm,  Noisy Stochastic Gradient Descent, Differential Privacy
  \end{small}



\section{Introduction}

Sampling from a log-concave distribution $\pi$ (i.e., $\pi\propto e^{-f}$, where $f$ is a convex potential) is a fundamental algorithmic problem and a basic building block for problems such as volume estimation \citep{kannan1997random}, optimization \citep{Kalai:2006}, Bayesian statistics \citep{welling2011bayesian}, machine learning \citep{Ho:2020}, and differential privacy \citep{mcsherry2007mechanism}. There is a wide variety of algorithms designed to solve this problem, with the Langevin algorithm \eqref{LA} as one of the prominent examples. The idea is to consider the Euler-Maruyama discretization of the Langevin diffusion
\begin{equation*}
    dL_t=-\nabla f(L_t)dt+\sqrt{2}dW_t\quad (t\geq 0),
\end{equation*}
where $W_t$ is the $d$-dimensional Brownian motion. It is well-known that under mild assumptions, the diffusion has $\pi\propto e^{-f}$ as its unique stationary distribution --referred to as the \textit{target distribution}. The rationale is that by discretizing the diffusion with a small step $\eta>0$, we can use the Markov chain
\begin{equation}\label{LA}\tag{LA}
    X_{t+1}=X_t-\eta\nabla f(X_t)+\sqrt{2\eta}\xi_{t}\quad (t\in\NNo),
\end{equation}
where $(\xi_t)_t$ are i.i.d.~standard $d$-dimensional Gaussians, to (approximately) simulate $\pi$.

Recently,
\citet{altschuler2022privacy,altschuler2022resolving} have made major progress on understanding the procedure defined by 
\begin{equation}\label{eqn:langevin_dynamic} \tag{PLA}
 X_{t+1}=\Pi_{\XX}[X_t-\eta\nabla f(X_t)+\sigma\xi_{t}]\quad (t\in\NNo),
\end{equation}
where $(\xi_t)_t$ are i.i.d. $d$-dimensional Gaussians, $\XX \subseteq \RR{d}$ is a compact and convex set and $\Pi$ is the projection operator. Their analysis focuses particularly on two cases: $(i)$ when $\sigma=\sqrt{2\eta}$, referred to as the \textit{Projected Langevin Algorithm}, first introduced in \cite{BubeckEldanLehec}, for which they establish mixing times, and $(ii)$ when $\sigma=O(\eta)$, corresponding to \textit{Noisy Stochastic Gradient Descent}, for which they investigate the privacy curve.

The thrust of their analysis is based on a technique known as {\em Privacy Amplification by Iteration} (henceforth, PABI) \citep{feldman2018privacy}, which leverages the nonexpansive properties of the gradient step in the smooth convex setting to gradually and recursively control the R\'enyi divergence of iterates, either under different initializations (used for mixing time arguments) or potentials (used for privacy arguments). Given the power of this technique and its potential to aid in understanding both privacy and sampling, we consider it important to study the PABI technique beyond the smooth convex scenario. We highlight that among the cases that we study are the convex and $L$-Lipschitz, which encompasses functions that can be nondifferentiable, the convex and $(p,M)$-weakly smooth, which interpolates between the Lipschitz and the smooth one, and the smooth strongly dissipative case. See Table~\ref{table} for a more precise summary.

\subsection{Our Results}
    In this work, we conduct a study of the PABI technique beyond the case of nonexpansive iterations, together with some consequences for the mixing time and privacy analysis of this algorithm.

 \textit {Extension of PABI for general mappings in terms of the modulus of continuity.} 
    We start by providing an extension of the PABI technique to iterations beyond the nonexpansive case. In order to do this, we quantify the regularity of the underlying mapping 
    by its {\em modulus of continuity}. More precisely, for a vector valued map $\Phi:\RR{d}\to \RR{d}$, the nondecreasing function $\varphi:\RR{}_{+}\to\RR{}_{+}$ is a modulus of continuity of $\Phi$ if
        $\norm{\Phi(x)-\Phi(y)}\leq \varphi(\norm{x-y})$ for all $x,y$. An interesting feature of this extension is that we can even address discontinuous mappings (characterized  by their moduli of continuity being discontinuous at the origin). For instance, this extension is crucial for studying PABI under convex Lipschitz potentials, which are only subdifferentiable.

    PABI works by gradually interpolating between a worst-case distance bound (quantified by the $\infty$-Wasserstein distance) and a R\'enyi divergence bound. This interpolation is performed by using the {\em shifted R\'enyi divergence}, which is an infimal convolution between the convex indicator of a $\infty$-Wasserstein ball (with radius given by the shift) and the R\'enyi divergence. 
    By using a shift-reduction property of Gaussian noise addition, in conjunction with the nonexpansiveness of the gradient mapping, one can gradually reduce the shifts on the shifted divergences at the expense of an increase in the upper bound. This process concludes with zero shift, i.e.~an upper bound on the R\'enyi divergence. Notice in particular that the shifts applied at the different steps are tunable parameters, which in the case of nonexpansive mappings are easily optimized by uniform shifts. In our case, the modulus of continuity leads to a nonconvex optimization problem in terms of the tuning parameters. Remarkably, when the modulus of continuity of the iteration is of the form $\varphi(\delta)=\sqrt{c\delta^2+h}$, where $c,h\geq0$ are two parameters\footnote{It is understood that for any function the modulus of continuity is such that $\varphi(0)=0$. Hence, we will use formulae as above to refer to $\varphi$ for strictly positive values. With this in mind, it is clear that $\varphi(\delta)=\sqrt{c\delta^2+h}$ is discontinuous at zero if and only if $h>0$.}, this optimization problem has a unique optimal solution with a closed-form expression. 
    
    The list below highlights important contexts where this type of modulus of continuity arises. Table \ref{table} provides expressions for these moduli and also the bounds on the Rényi divergence of the final iterate, obtained from the application of Theorem~\ref{cor:minimized_E} to each case. To the best of our knowledge, all bounds in Table~\ref{table} are new. We regard this as our main technical contribution.
     
     Let $f:\XX\subseteq\RR{d}\to\RR{}$ be a function where $\XX$ is a closed convex set. We say that
     
\vspace{1.5ex} 
\noindent $\bullet$   $f$ is convex if for all $0\leq \lambda\leq 1$, and $x,y\in \XX$, 
        $f\left(\lambda x+(1-\lambda)y\right)\leq \lambda f(x)+(1-\lambda )f(y).$

\vspace{1.5ex} 

\noindent $\bullet$ $f$ is $(\lambda,\kappa)$-strongly dissipative\footnote{The denomination of strongly dissipative is not standard in the literature. We introduce it to distinguish it from the more standard notion of dissipativity, where $y$ is a fixed vector.} if there exist $\lambda,\kappa>0$ such that for all $x,y\in\XX$, $\langle \nabla f(x)-\nabla f(y),x-y\rangle \geq -\lambda+\kappa\left\|x-y\right\|^2$.

  \vspace{1.5ex} 
  
\noindent $\bullet$ $f$ is $L$-Lipschitz if there exists $L>0$ such that for all $x,y\in \XX$,
                $|f(x)-f(y)|\leq L\norm{x-y}.$

  \vspace{1.5ex} 
  
 \noindent $\bullet$ $f$ is $(p,M)$-weakly smooth (or have $p$-H\"older continuous gradient) if there exist $M>0$ and $0\leq p\leq 1$ such that for all $x,y\in \XX$,
                $\norm{\nabla f(x)-\nabla f(y)}\leq M\norm{x-y}^{p}.$

  \vspace{1.5ex} 

\noindent $\bullet$ $f$ is $\beta$-smooth if there exist $\beta>0$ such that for all $x,y\in \XX$,
                $\norm{\nabla f(x)-\nabla f(y)}\leq \beta\norm{x-y}.$
  \vspace{1.5ex} 
  
See Section \ref{section:Preliminaries} for further details.

 \vspace{1.5ex} 
\begin{table}[ht]
    \newcolumntype{M}[1]{>{\centering\arraybackslash}m{#1}}
    \newcolumntype{N}{@{}m{0pt}@{}}
    \begin{tabular}{|M{2cm}|M{5.4cm}|M{2.1cm}|M{3.6cm}|N}
    \hline
      {\small Assumptions on $f$} & {\small R\'enyi divergence of order $\alpha$} & $c$ & $h$\\
      \hline
      {\small Convex, $L$-Lipschitz} & $\frac{\alpha}{2\sigma^2}\left(\frac{D^2}{ T}+h\sum_{t=1}^{T}\frac{1}{t}\right)$ & $1$ & $(2\eta L)^2$\\
      \hline
       {\small Convex, $(p,M)$-w.s.} & $\frac{\alpha}{2\sigma^2}\left(\frac{D^2}{ T}+h\sum_{t=1}^{T}\frac{1}{t}\right)$ & $1$ & $\Big(\!2\eta^{\frac{1}{1\!-\!p}}\!\mbox{$\sqrt{\frac{1-p}{1+p}}$}\left(\frac{M}{2}\right)^{\frac{1}{1\!-\!p}}\!\Big)^2$\\
      \hline
      {\small $(\lambda,\kappa)$-Str. Dissip, $\beta$-smooth}  & $ \frac{\alpha}{2\sigma^2}\!\!\left(\!\frac{D^2 c^T(1-c)}{(1-c^T)}\!+\!h\ln\left(\left(\frac{1-c^T}{1-c}\right)\! e\right)\right)$ & $1\!-\!2\eta \kappa \!+\! \eta^2\beta^2$ & $2\eta \lambda$\\
      \hline
    \end{tabular}
    \caption{Summary of the R\'enyi divergence bounds between the last iterates under two different initializations of \eqref{eqn:langevin_dynamic}. For all rows, the corresponding moduli of continuity can be bounded by $\varphi(\delta)=\sqrt{c\delta^2+h}$, with $c,h$ given in the corresponding columns. Here $D$ is the diameter of $\XX$, $T$ is the number of iterations in the algorithm and $\sigma^2$ is the (coordinate-wise) variance for Gaussian noise. For more information, see the list below Definition \ref{def:modulus_of_continuity}.}\label{table}
\end{table}

 \textit {Mixing times.}
    We show that the PABI technique yields a polynomial upper bound on the mixing time in total variation distance of the projected Langevin algorithm in the convex and nonsmooth case, including cases where the potential is only subdifferentiable. We also provide a mixing time bound for the strongly dissipative case, which is logarithmic in the diameter, but exponential in the parameter $\lambda$. The following are informal versions of the Theorems, whose complete statements can be found in Section \ref{section:mixing_time}.
    \begin{theorem}[Abridged version of Theorem~\ref{thm:mixing_for_weakly_smooth}]
    Let $\XX\subseteq\RR{d}$ be a convex, compact set with diameter $D>0$ and suppose that $f:\XX\to\RR{}$ is a convex and $(p,M)$-weakly smooth function, with $0\leq p\leq 1$. There exists a constant $\Theta$ such that if $1/\eta\geq \Theta$,     then for all $\eps>0$,
        $$T_{mix,TV}(\eps)\leq \left\lceil\frac{D^2}{\eta}\right\rceil\cdot\lceil\log_2(1/\eps)\rceil.$$
        
\end{theorem}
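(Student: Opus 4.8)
The plan is to derive the mixing time bound from the Rényi divergence estimate that appears in the second row of Table~\ref{table}, combining it with standard conversions between Rényi divergence, total variation, and the triangle-type inequalities for mixing. Concretely, I would argue as follows.

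\textbf{Step 1: Reduce to a divergence bound between two copies of the chain.} Let $\mu_T$ denote the law of the $T$-th iterate of \eqref{eqn:langevin_dynamic} started from an arbitrary point (or arbitrary distribution) $x_0\in\XX$, and let $\nu_T$ denote the law when started from the stationary distribution $\pi$. Since $\pi$ is stationary, $\nu_T=\pi$ for all $T$. By the data-processing/joint-convexity properties of Rényi divergence and by conditioning on the (coupled) starting points, it suffices to bound $R_\alpha(\mu_T\|\pi)$ uniformly over pairs of deterministic initializations in $\XX$; this is exactly the quantity controlled by Theorem~\ref{cor:minimized_E} specialized to the convex $(p,M)$-weakly smooth case, whose gradient step has modulus of continuity bounded by $\varphi(\delta)=\sqrt{c\delta^2+h}$ with $c=1$ and $h=\big(2\eta^{1/(1-p)}\sqrt{(1-p)/(1+p)}(M/2)^{1/(1-p)}\big)^2$ (and the obvious limiting interpretation when $p=1$). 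The resulting bound reads
\begin{equation*}
 R_\alpha(\mu_T\|\pi)\leq \frac{\alpha}{2\sigma^2}\left(\frac{D^2}{T}+h\sum_{t=1}^{T}\frac{1}{t}\right).
\end{equation*}

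\textbf{Step 2: Convert to total variation and choose parameters.} Using Pinsker-type inequalities for Rényi divergence (e.g.\ $\normtv{\mu-\nu}\le\sqrt{R_\alpha(\mu\|\nu)/2}$ for $\alpha\ge 1$, or the standard $\alpha=1$ KL bound), the total variation error after $T$ steps is at most $O\big(\sqrt{(D^2/T+h\log T)/\sigma^2}\big)$. For the Projected Langevin Algorithm we have $\sigma^2=2\eta$, and $h$ scales like $\eta^{2/(1-p)}$ up to constants depending on $p,M$; hence $h/\sigma^2 = O(\eta^{2/(1-p)-1})=O(\eta^{(1+p)/(1-p)})$, which is small once $1/\eta$ exceeds a threshold $\Theta=\Theta(p,M,D)$. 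The $D^2/(T\sigma^2)=D^2/(2\eta T)$ term is driven below a constant as soon as $T\gtrsim D^2/\eta$. Tracking constants, one shows that after $T_0:=\lceil D^2/\eta\rceil$ iterations the total variation distance to $\pi$ is at most $1/2$ (this is where the condition $1/\eta\ge\Theta$ is used, to absorb the $h\log T$ term).

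\textbf{Step 3: Boost the accuracy by iterating in blocks.} A single block of length $T_0$ contracts the total variation distance to $\pi$ from any initial distribution to at most $1/2$; equivalently, the $T_0$-step Markov kernel has Dobrushin coefficient (in TV) at most $1/2$. Applying this $k:=\lceil\log_2(1/\eps)\rceil$ times and using submultiplicativity of the Dobrushin coefficient under composition, the $kT_0$-step kernel brings the TV distance below $2^{-k}\le\eps$. Therefore $T_{mix,TV}(\eps)\le k\,T_0=\lceil D^2/\eta\rceil\cdot\lceil\log_2(1/\eps)\rceil$, which is the claimed bound.

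\textbf{Main obstacle.} The delicate point is Step~2: verifying that the threshold $\Theta$ can be chosen independent of $\eps$ and $T$ so that the $h\sum_{t=1}^T 1/t \asymp h\log T$ term stays controlled \emph{for all} $T$ that arise in the block construction, not just for $T=T_0$. Since in later blocks the chain is effectively restarted and the relevant horizon within each block is again $T_0$, the harmonic sum is always $\log T_0=O(\log(D^2/\eta))$, so one needs $h\log(D^2/\eta)/\sigma^2$ to be bounded by an absolute constant; this is precisely the inequality that defines $\Theta$, and making it quantitative (with explicit dependence on $p$, $M$, $D$) is the one genuinely computational part of the argument. A secondary subtlety is justifying the reduction in Step~1 when the potential is only subdifferentiable, so that ``$\nabla f$'' denotes a measurable selection of subgradients; here one invokes the modulus-of-continuity formulation of Theorem~\ref{cor:minimized_E}, which was designed to tolerate exactly this lack of smoothness, together with the fact that the projected noisy subgradient step still admits $\pi\propto e^{-f}$ as stationary distribution on the compact convex set $\XX$.
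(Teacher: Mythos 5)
Your proposal follows essentially the same route as the paper: apply the PABI bound for the modulus $\varphi(\delta)=\sqrt{\delta^2+h}$ (Corollary~\ref{cor:renyi_weaklysmooth} with $\alpha=1$, $\sigma^2=2\eta$) to two copies of the chain with different initializations, convert to total variation via Pinsker, choose $T_0=\lceil D^2/\eta\rceil$ and $1/\eta\geq\Theta$ to get $\bar d(T_0)\leq 1/2$, and then boost by submultiplicativity of $\bar d$ (Proposition~\ref{prop:appendix_boosting}) to obtain the $\lceil\log_2(1/\eps)\rceil$ factor; this is exactly Lemma~\ref{thm:contraction_of_iterates_holder} followed by Theorem~\ref{thm:mixing_for_weakly_smooth}. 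Two caveats. First, your closing claim that the projected noisy (sub)gradient step ``still admits $\pi\propto e^{-f}$ as stationary distribution'' is false and is explicitly not claimed in the paper: neither the Euler discretization nor the projection preserves the Gibbs measure, and the mixing time here is measured to the chain's \emph{own} stationary distribution, whose existence (in the nondifferentiable case) is what Appendix~\ref{appendix:existence_of_stationary_distributions} establishes. This does not break your argument, since the Dobrushin-type contraction of the $T_0$-step kernel only requires existence of some stationary distribution, but the identification with $e^{-f}$ should be dropped. Second, the quantitative determination of $\Theta$ --- showing that $\frac{D^2}{4\eta T_0}+\frac{h}{4\eta}\ln(T_0 e)\leq 1/2$ once $1/\eta\geq\Theta$ with $\Theta$ as in \eqref{eqn:bd_eta} --- is the actual content of the paper's Lemma~\ref{thm:contraction_of_iterates_holder} and is only sketched in your Step~2; for the abridged statement (mere existence of $\Theta$) your observation that $\eta^{\frac{1+p}{1-p}}\ln(D^2/\eta)\to 0$ as $\eta\to 0$ suffices, but the explicit constant requires the bookkeeping you deferred.
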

  We remark that $\Theta$ depends polylogarithmicaly in the diameter $D$ and polynomially in $M,p$; furthermore, $p$ modulates these dependencies, making them near-quadratic on $M$ when $p=0$ (Lipschitz case) and linear on $M$ when $p=1$ (smooth case). For an explicit expression, see \eqref{eqn:bd_eta}. Also, we stress that our upper bound operate under a slightly more restrictive stepsize constraint, but otherwise the bound on mixing time is identical to that of the smooth one \citep{altschuler2022resolving}, for all $0\leq p\leq 1$. This includes the whole interpolation from the convex and nonsmooth ($p=0$) to the convex and smooth case ($p=1$).
\begin{theorem}[Abridged version of Theorem~\ref{thm:mixing2}]
    Let $\XX\subseteq\RR{d}$ be a convex, compact set with diameter $D>0$ and suppose that $f:\XX\to\RR{}$ is a $(\lambda,\kappa)$-strongly dissipative and $\beta$-smooth function. If $c=1-2\eta\kappa+\eta^2\beta^2<1$, then for all $\eps>0$,
    \begin{equation*}
        T_{mix,TV}(\eps)= O\left(\log_{1/c}\left(1+\frac{D^2(1-c)}{4\eta}\right)\cdot \left(\frac{e}{1-c}\right)^{\lambda/2}\log_2(1/\eps)\right).
    \end{equation*}
\end{theorem}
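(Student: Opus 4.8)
The plan is to derive the mixing bound from a \emph{single} application of the PABI-based R\'enyi estimate in Table~\ref{table} over a carefully chosen \emph{block} of iterations, followed by a Dobrushin submultiplicativity argument. Write $P$ for the Markov kernel of \eqref{eqn:langevin_dynamic} with $\sigma^2=2\eta$ and let $\pi$ be its stationary distribution on the compact convex set $\XX$ (existence follows from compactness; uniqueness and convergence will come out of the Dobrushin condition below). We assume $0<c<1$ throughout; when $c\le 0$ the modulus is dominated by the constant $\sqrt{h}$ and an analogous, easier argument applies. The gradient step equals $\Pi_\XX\circ\psi$ with $\psi=\mathrm{id}-\eta\nabla f$, and because $\Pi_\XX$ is $1$-Lipschitz while, by $(\lambda,\kappa)$-strong dissipativity and $\beta$-smoothness,
\[
\|\psi(x)-\psi(y)\|^2=\|x-y\|^2-2\eta\langle\nabla f(x)-\nabla f(y),x-y\rangle+\eta^2\|\nabla f(x)-\nabla f(y)\|^2\le c\|x-y\|^2+h
\]
with $c=1-2\eta\kappa+\eta^2\beta^2$ and $h=2\eta\lambda$, the gradient step admits the modulus of continuity $\varphi(\delta)=\sqrt{c\delta^2+h}$. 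Feeding this into Theorem~\ref{cor:minimized_E} reproduces the third row of Table~\ref{table}; letting the R\'enyi order $\alpha\downarrow 1$ there and using $\mathrm{KL}=D_1$, we obtain, for all $x,y\in\XX$ and every block length $B\in\NNo$,
\[
\mathrm{KL}\bigl(P^B(x,\cdot)\,\|\,P^B(y,\cdot)\bigr)\le\frac{1}{2\sigma^2}\left(\frac{D^2c^B(1-c)}{1-c^B}+h\ln\!\left(\frac{1-c^B}{1-c}\,e\right)\right).
\]

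Next I would select the block length so that the diameter term is pushed below the noise level. Taking $B:=\bigl\lceil\log_{1/c}\!\bigl(1+\tfrac{D^2(1-c)}{4\eta}\bigr)\bigr\rceil$ forces $c^B\le\frac{4\eta}{4\eta+D^2(1-c)}$, whence $\frac{D^2c^B(1-c)}{1-c^B}\le 4\eta=2\sigma^2$, while trivially $\frac{1-c^B}{1-c}\le\frac1{1-c}$. Substituting into the previous display and using $h/(2\sigma^2)=\lambda/2$ yields the \emph{per-block} bound
\[
\rho_0:=\sup_{x,y\in\XX}\mathrm{KL}\bigl(P^B(x,\cdot)\,\|\,P^B(y,\cdot)\bigr)\le 1+\frac{\lambda}{2}\ln\!\frac{e}{1-c}.
\]
This is where I expect the real obstacle: since the gradient step is \emph{not} nonexpansive, the PABI bound over \emph{any} number of iterations retains the irreducible floor $\tfrac{h}{2\sigma^2}\ln\tfrac{e}{1-c}$, so running one long pass and converting to total variation is futile; worse, $\rho_0$ can exceed $2$, so a Pinsker-type conversion would only return the vacuous bound $1$. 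One must therefore use the estimate over a \emph{single} block and then iterate.

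For that last step I would invoke the Bretagnolle--Huber inequality, which converts \emph{any} finite $\mathrm{KL}$ into a strictly sub-$1$ total-variation bound: for all $x,y\in\XX$,
\[
\|P^B(x,\cdot)-P^B(y,\cdot)\|_{TV}\le\sqrt{1-e^{-\rho_0}}=:\theta<1.
\]
Hence the Dobrushin coefficient of $P^B$ is at most $\theta$, and by its submultiplicativity $\|P^{kB}(x,\cdot)-\pi\|_{TV}\le\theta^{k}$ for every $x\in\XX$ and $k\in\NNo$; since the total variation to $\pi$ is nonincreasing along the chain, $T_{mix,TV}(\eps)\le kB$ as soon as $\theta^{k}\le\eps$. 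Using $e^{-\rho_0}\ge e^{-1}\bigl(\tfrac{1-c}{e}\bigr)^{\lambda/2}$ together with $-\ln(1-u)\ge u$, the choice $k=\bigl\lceil 2e\,\ln(1/\eps)\,(e/(1-c))^{\lambda/2}\bigr\rceil$ makes $\theta^{k}\le\eps$, and multiplying by $B$ gives
\[
T_{mix,TV}(\eps)=O\!\left(\log_{1/c}\!\left(1+\frac{D^2(1-c)}{4\eta}\right)\cdot\left(\frac{e}{1-c}\right)^{\lambda/2}\log_2(1/\eps)\right),
\]
which is the claimed estimate.
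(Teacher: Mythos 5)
Your proof is correct and follows essentially the same route as the paper's: the same PABI-based KL bound evaluated at the block length $T^{\ast}=\lceil\log_{1/c}(1+D^2(1-c)/(4\eta))\rceil$, the same Bretagnolle--Huber conversion to a total-variation contraction strictly below $1$, and the same submultiplicativity (boosting) argument with the same estimate $\ln(1/\theta)\geq\tfrac12 e^{-\rho_0}$ yielding the $(e/(1-c))^{\lambda/2}$ factor. Your "Dobrushin coefficient of $P^B$" phrasing is just the paper's $\bar d$-submultiplicativity step in different language, so there is nothing substantive to add.
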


    \textit {Privacy curve.}
    We also study the impact of our PABI results for the privacy curve of noisy SGD, in the convex setting. For any nontrivial  H\"older gradient regularity, we have that the privacy curve caps in a similar fashion to that proven in \citep{altschuler2022privacy}, except for the addition of an extra term (denoted by $V$ in the corresponding section) that depends on $\eta$ (the step) and the Hölder regularity of the gradient. Particularly, we prove that for $L$-Lipschitz and $(p,M)$-weakly smooth losses, and under mild restrictions over the R\'enyi divergence parameter $\alpha\geq 1$, variance $\sigma^2$, and number of iterations $T$; noisy SGD satisfies the following R\'enyi DP bound.

    \begin{theorem}[Abridged version of Theorem~\ref{thm:privacy_analysis_nsgd}]
        Let $\XX\subseteq\RR{d}$ be a convex and compact set with diameter $D>0$. There exists $\overline{T}>0$ and a function $V$ such that for $T>\overline{T}$, datasize $n\in\mathbb{N}$, expected batch size $b$, stepsize $\eta>0$ and initialization $x_0\in\XX$, the last iteration satisfies $(\alpha,\eps)$-RDP for
         \begin{equation*}
            \eps\leq\frac{16\alpha L^2}{n^2 \sigma^2} \min\Big\{T,2\overline{T}+V(D,M,\overline{T},\eta,p)\Big\}.
        \end{equation*}
    \end{theorem}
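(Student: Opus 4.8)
\emph{Proof strategy.} The proof combines two upper bounds on the R\'enyi divergence between the laws of the last iterate of the subsampled noisy SGD recursion (Poisson subsampling with expected batch size $b$) under two neighboring datasets $S,S'$, and then takes the better of the two: a naive composition bound, linear in $T$, and a bound obtained from the extended PABI framework of Theorem~\ref{cor:minimized_E} after \emph{forgetting} all but the last $s$ iterations. The first accounts for the $T$ branch of the minimum; the second, optimized over the forgetting horizon $s$, produces the capped quantity $2\overline T+V$. The overall scheme parallels that of \citet{altschuler2022privacy}; the only genuinely new ingredient is the use of our modulus-of-continuity PABI in place of the nonexpansive one, and this is precisely what introduces the extra term $V$.

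First I would record the per-step data. For a convex, $L$-Lipschitz and $(p,M)$-weakly smooth loss, the projected subsampled gradient step associated to any fixed minibatch is a map with modulus of continuity $\varphi(\delta)=\sqrt{\delta^2+h}$, where $h$ is the parameter in the corresponding row of Table~\ref{table} (and $h=(2\eta L)^2$ when $p=0$); the projection onto $\XX$ is nonexpansive and only helps. Moreover, for neighboring datasets the two minibatch maps coincide on every minibatch not containing the differing record, and otherwise differ in sup-norm by $O(\eta L/b)$; under Poisson subsampling this happens with probability $q=b/n$, so the subsampled-Gaussian $\alpha$-R\'enyi DP of a single step is $O(\alpha\eta^2L^2/(n^2\sigma^2))$ (the factors of $b$ cancel). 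Feeding these two facts into the shift-reduction recursion behind PABI — exactly as in the derivation of Table~\ref{table}, but now also tracking the differing maps and the subsampling probability $q$ — produces, for every $s\le T$, an optimization problem over the shift schedule of the same type solved by Theorem~\ref{cor:minimized_E}, since the sensitivity and subsampling contributions can be folded into an effective pair $(c,h)$; its value gives a bound of the form
\[
R_\alpha\bigl(\mathcal{L}(X_T^S)\,\big\|\,\mathcal{L}(X_T^{S'})\bigr)\ \le\ \frac{\alpha}{2\sigma^2}\Bigl(q^2\,\tfrac{D^2}{s}+h{\textstyle\sum_{t=1}^{s}}\tfrac{1}{t}+C\,s\,\tfrac{\eta^2L^2}{n^2}\Bigr),
\]
with $D$ the diameter of $\XX$ and $C$ a universal constant. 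Here the first term is the price of pushing the worst-case shift $D$ (valid at step $T-s$ because both trajectories live in $\XX$, so $W_\infty\le D$) through $s$ noisy steps, and the middle term is the price of the non-nonexpansiveness of those $s$ steps.

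Then I would optimize this bound. Taking $s=T$ and using $\sum_{t=1}^T 1/t\le T$ recovers the linear branch $\eps\le\frac{16\alpha L^2}{n^2\sigma^2}\,T$; the mild restrictions on $\sigma,\alpha,\eta$ come in here, both to validate the subsampled-RDP step and to absorb constants (for instance $\eta\le 2$ turns $4\eta^2$ into $16$). For the capped branch, define $\overline T$ as the value of $s$ at which the decreasing term $q^2 D^2/s$ is balanced by the linear-in-$s$ per-step contribution; whenever $T>\overline T$ one may take $s=\overline T$, and the two balanced terms then amount to at most $2\overline T$ units of $\frac{16\alpha L^2}{n^2\sigma^2}$, while the residual $h\sum_{t=1}^{\overline T}1/t\le h(1+\ln\overline T)$ — which vanishes in the smooth case, thereby recovering the cap of \citet{altschuler2022privacy} — is collected, after normalization, into $V=V(D,M,\overline T,\eta,p)$. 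Taking the better of the two branches gives the stated $\min\{T,\,2\overline T+V\}$.

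The main obstacle is the step that feeds subsampling into the modulus-of-continuity PABI while keeping the shift-optimization within the scope of Theorem~\ref{cor:minimized_E}. In the nonexpansive case the injected sensitivity can simply be carried unchanged to the last step; but when $\varphi(\delta)=\sqrt{\delta^2+h}$ with $h>0$ the shifts strictly grow along the recursion, so the sensitivity injected at each (rare) differing step interacts with that growth, and one must check that all of these contributions still combine into a single effective pair $(c,h)$ so that the closed-form optimizer of Theorem~\ref{cor:minimized_E} applies, and that the subsampling probability $q=b/n$ attaches to the right terms — in particular to the worst-case $D^2/s$ term, which is what yields the $n^{-2}$ scaling. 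A secondary point is the precise choice of $\overline T$ together with the verification that $V$ grows only logarithmically in $\overline T$ (hence polylogarithmically in $D$ and the problem parameters), which is exactly what makes the cap meaningful and rests on the slow growth of the harmonic sum.
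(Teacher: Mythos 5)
Your high-level architecture is the same as the paper's (a linear composition branch versus a PABI branch that is ``capped'' by optimizing the forgetting horizon, with the balance defining $\overline{T}$ and the weakly-smooth $h\log$-term collected into $V$), but the step you yourself identify as the main obstacle is precisely where the argument breaks, and the paper resolves it in a different way that your sketch does not reproduce. The paper never folds the sensitivity or the subsampling probability into the shifts optimization problem: it splits each Gaussian $\xi_t$ into two independent halves $Y_t+Z_t$, absorbs the gradient discrepancy on the differing record (present with probability $b/n$) into the mean of $Z_t'$, conditions on the event $Z_t=Z_t'$ for $t\ge\tau$, and then applies strong composition (Proposition~\ref{prop:strong_composition}): the $Z$-block is controlled by the sampled Gaussian mechanism (Lemmas~\ref{lemma:bound_mixture} and \ref{lemma:bound_of_S_alpha}), giving $(T-\tau)\cdot 16\alpha L^2/(n^2\sigma^2)$, while the conditional trajectories share \emph{identical} maps, so Corollary~\ref{cor:renyi_weaklysmooth} applies verbatim with half-variance noise, giving $\alpha D^2/(\eta^2\sigma^2(T-\tau))$ plus the $h\ln((T-\tau)e)$ term. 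Your alternative — feeding the differing minibatch maps and $q=b/n$ directly into the shift-reduction recursion and claiming they ``can be folded into an effective pair $(c,h)$'' — is not justified and is unlikely to work as stated: the shifted-R\'enyi/$W_\infty$ machinery treats a per-step map discrepancy as a worst-case displacement, so by itself it cannot produce the $q^2$ amplification-by-sampling gain that yields the $n^{-2}$ scaling; that gain comes from the mixture-divergence analysis of the subsampled Gaussian mechanism, which is exactly what the noise-splitting trick isolates.

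Two concrete errors in your intermediate bound confirm the gap. First, the worst-case term carries no $q^2$: the diameter shift $D$ at step $T-s$ has nothing to do with subsampling, and in the paper the bound has $D^2/(\eta^2\sigma^2(T-\tau))$ with no $b/n$ factor; the $n$-dependence of the cap arises only after balancing this term against $16L^2\overline{T}/n^2$, which is what fixes $\overline{T}=\lceil Dn/(4\eta L)\rceil$ and makes the two balanced terms equal to $2\overline{T}$ units of $16L^2/n^2$. Attaching $q^2$ to $D^2/s$, as you propose, would claim a much stronger (and false) bound. Second, taking $s=T$ in your displayed inequality does not recover the linear branch $\eps\le 16\alpha L^2 T/(n^2\sigma^2)$, since the $D^2/T$ and $h$-terms do not vanish; in the paper that branch is obtained separately, by plain sequential composition of the subsampled Gaussian mechanism (Remark following Theorem~\ref{thm:privacy_analysis_nsgd}), with no PABI involved, and the final statement is the minimum of the two independently derived bounds. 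So the conclusion $\min\{T,\,2\overline{T}+V\}$ is correct, but your sketch as written does not establish it; the missing ingredient is the decomposition via noise splitting, conditioning, and strong composition that separates the amplification-by-sampling part from the modulus-of-continuity PABI part.
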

    
    This shows that for convex and $(p,M)$-weakly smooth losses, the privacy curve caps in a similar fashion to that of the smooth and convex ones, except for an additive term $V$. For details, see Section \ref{section:privacy_analysis}. In particular, see Figure \ref{figure:privacy_curve_caps} for a plot comparing the privacy curves.
    However, our conclusions for the nondifferentiable case fall short: it is not possible to obtain any nontrivial privacy amplification, even when the sample size tends to infinity. 
    Note however that due to the optimality of our PABI optimization problem, and the tighness of the
    modulus of continuity for the gradient mapping we use, 
    these pessimistic results exhibit 
    the inherent limits of PABI in the nonsmooth convex setting.

\subsection{Related Work}

        A substantial part of the studies for the Langevin algorithm have focused on the strongly convex and smooth potential setting   (e.g.~\citealt{Dalalyan:2017,dalalyan2017further,durmus2018highdimensional}). 
        Most of the research here has focused on approximation bounds (e.g.~in Wasserstein or total variation distance) between the last iterate of LA with the target distribution, $\pi$. These arguments are based on using the stationarity of the target distribution under the difussion, together with a coupling between the discrete and continuous Langevin dynamics, to control the distance. These results lead to bounds that blow up with respect to the time (of the discrete chain), and therefore these are inherently finite-iteration statements. This body of work primarily targets approximation to the stationary distribution, $\pi$, of the diffusion, typically in unconstrained Euclidean settings. By contrast, the PABI approach undertaken in \citet{altschuler2022resolving}, provides a mixing time bound of the Projected Langevin algorithm to its own stationary distribution. Although this distribution may not be the target distribution $\pi$, for some purposes this approximation in unnecessary: this is the case e.g.~in differentially-private optimization, where the goal is bounding the empirical or population risk, together with a divergence bound (e.g.~R\'enyi) between two outputs using datasets that only differ in a single example. For references on the capabilities of noisy iterative methods for this problem, we refer the reader to  \citep{bassily2014private, bassily2020stability}.

        Far less work has been devoted to the case of nonsmooth convex potentials; in fact, several works consider the Langevin algorithm for nonsmooth potentials as far less understood (e.g.~\citealt{Pereyra:2016,Chatterji:2020,Mitra:2024}). A first natural approach to reduce the nonsmooth setting to a smooth one is using a convolution-type smoothing. This includes the case of proximal algorithms (e.g.~\citealt{Pereyra:2016,Durmus:2018,Wibisono:2019}), or randomized smoothing (e.g.~\citealt{Chatterji:2020}). Regarding the former, while it can be preferable to use proximal smoothing due to its stability properties, these methods require  proximal mapping computations, which are tractable only for very structured cases. Regarding the latter, even if randomized smoothing is easily implementable, the smoothness of the resulting functions have polynomial dependence on the dimension, which leads to sample complexity results which are quite expensive. All these works focus on the approximation to the diffusion’s stationary distribution, without any projection.

        To the best of our knowledge, the only existing work that analyzes the (Projected) Langevin algorithm in the convex Lipschitz case is \citet{Lehec:2023}. This work extends the coupling techniques used in the smooth case \citep{Dalalyan:2017}, observing that the monotonicity of the gradient suffices for nonexpansiveness of the Langevin difussion, and using discrete/continuous time couplings establishes approximation results in the Wasserstein metric. \cite{Lehec:2023}  covers both constrained (compactly supported) and unconstrained domains, illustrating how the same coupling techniques can be adapted to different geometric settings.
        
    We emphasize that almost all of these results are independent and incomparable to ours, as they do not imply mixing-time bounds. On the other hand, our results do not focus on the approximation to the target distribution. Additionally, upon completing this paper, we learned of related work by \cite{johnston2025performanceunadjustedlangevinalgorithm}, which establishes approximation results to~$\pi$ in the Wasserstein-1 and Wasserstein-2 metrics for potentials that are semiconvex on a ball and strongly convex outside it. Their analysis also accommodates discontinuous gradients by employing subgradient techniques. The results are of a different nature from ours, due to the differing assumptions and approximation criteria.

In the case of differential privacy, classical analyses of differentially private iterative algorithms assume that all iterates are published, leading to unbounded growth in privacy parameters with the number of iterations. It is often the case however that only the last iteration is published. Recent works have explored this setting. Among these, \citet{chourasia2021differential} and \citet{ye2022differentially} show that, in the smooth and strongly convex case, the R\'enyi Differential Privacy (RDP) of variants of noisy SGD approaches a constant bound exponentially quickly. \citet{altschuler2022privacy} show, using PABI, that in the smooth and (strongly) convex settings the RDP of projected noisy SGD stops growing after a certain number of iterations. \citet{asoodeh2023privacy} prove a convergent upper bound for the privacy of DP-SGD, even in nonconvex settings, using Hockey-Stick divergence.

We further note that several other recent studies have addressed related questions, highlighting the critical role of modulus of continuity in the privacy analysis of the last iterate of noisy SGD, underscoring the importance of this line of investigation. 
The first, \citet{kong2024privacyiteratecyclicallysampleddpsgd}, investigates the last iteration of noisy SGD without assuming random sampling of the data set, employing clipped gradients. Their analysis diverges from ours for several reasons: they operate under distinct hypotheses, leverage an affine modulus of continuity, and directly incorporate sensitivity into their methodology. The second paper, \citet{chien2024convergentprivacylossnoisysgd}, explores scenarios where the gradients exhibit H\"older continuity, allowing the objective function to be non-convex. This assumption leads to a modulus of continuity that differs from ours and yields distinct bounds. Moreover, they cannot analytically resolve their shift optimization problem. Their work also employs a variation of the PABI mechanism, which differs slightly from the diameter-aware version introduced by \citet{altschuler2022privacy}.

\subsection{Organization of the Paper} This paper is organized as follows. Section \ref{section:Preliminaries} presents the necessary background results for the reading of the paper. Section \ref{section:PABI} provides the extension of PABI to the modulus of continuity setting. Section \ref{section:mixing_time} presents mixing times in total variation for convex and Lipschitz, convex and $(p,M)$-weakly smooth and $(\lambda,\kappa)$-strongly dissipative and $\beta$-smooth settings. Section \ref{section:privacy_analysis} presents a privacy analysis for the last iteration of noisy SGD for convex and $(p,M)$-weakly smooth functions.

We also add several appendices to complement the presentation. Appendix \ref{appendix:definitions_and_results} presents a brief summary of useful results that may be used for quick inspection. Appendix \ref{app:convexity_E} shows that the problem we solve for the PABI extension is nonconvex, while Appendix \ref{appendix:existence_of_stationary_distributions} deals with the existence of a stationary distribution of \eqref{eqn:langevin_dynamic} with $\sigma^2=2\eta$, when the potential, $f$, is convex and $L$-Lipschitz  (possibly nondifferentiable).


\section{Preliminaries}\label{section:Preliminaries}

\subsection{Vector Spaces and Convex Functions} We work over the standard Euclidean space $(\mathbb{R}^{d},\|\cdot\|)$ (that is $\|\cdot\|=\|\cdot\|_2$ is the $\ell_2$-norm). If $\XX$ is a closed convex set, we denote by $\Pi_{\XX}:\mathbb{R}^d\mapsto{\XX}$ the Euclidean projection operator, which we recall is nonexpansive. We denote by $I_{d\times d}$ the $d$-dimensional identity matrix.

We now introduce the modulus of continuity, which serves as a measure of regularity of functions, and it is the main property used in the PABI technique we will introduce later.

\begin{definition}[Modulus of continuity]\label{def:modulus_of_continuity}
    Let $\Phi:\XX\subseteq\RR{d}\to\RR{d}$ be a map. We say that a nondecreasing function $\varphi:\RR{}_{+}\to\RR{}_{+}$ is a modulus of continuity of $\Phi$ if
    \begin{equation*}
        \norm{\Phi(x)-\Phi(y)}\leq \varphi(\norm{x-y})\quad\forall x,y\in \XX.
    \end{equation*}
\end{definition}

    Note that $\lim_{t\to0^+}\varphi(t)=0$ implies $\Phi$ is continuous and that we can always assume that $\varphi(0)=0$.
    
    For a function $f:\XX\subseteq\RR{d}\to\RR{}$, given $\eta>0$, let $\Phi(x)=x-\eta\nabla f(x)$ be the gradient mapping. We have that 
    \begin{enumerate}
        \item If $f$ is convex and $L$-Lipschitz, then $\varphi(\delta)=\sqrt{\delta^2+(2\eta L)^2}$ \citep[Lemma~3.1]{bassily2020stability}.
        \item If $f$ is convex and $(p,M)$-weakly smooth, then $\varphi(\delta)=\sqrt{\delta^2+\left(2\eta^{\frac{1}{1-p}}\sqrt{\frac{1-p}{1+p}}\left(\frac{M}{2}\right)^{\frac{1}{1-p}}\right)^{2}}$ \citep[Lemma~D.3]{lei2020fine}.

        Note that we recover the modulus of continuity of the Lipschitz case when $p=0$. The only difference is in the Lipschitz constant that is now $(M/2)$. Thus, a $(0, 2L)$-weakly smooth function has the same modulus of continuity as a $L$-Lipschitz one.

        \item If $f$ is $(\!\lambda,\!\kappa\!)$-strongly dissipative and $\beta$-smooth, then $\varphi(\delta)\!\!=\!\!\sqrt{(1-2\eta\kappa+\eta^2\beta^2)\delta^2+2\eta\lambda}$.
    \end{enumerate}
    
            We were unable to find a reference providing a modulus of continuity bound in the strongly dissipative case; therefore, we include a brief explanation for completeness.
            Suppose $\left\|x-y\right\|\leq \delta$. Then
            \begin{align*}
                \left\|x-\eta\nabla f(x)-(y-\eta\nabla f(y))\right\|^2&=\left\|x-y\right\|^2-2\eta\langle \nabla f(x)-\nabla f(y),x-y\rangle + \eta^2\left\|\nabla f(x)-\nabla f(y)\right\|^2\\
                &\leq \delta^2 -2\eta \langle \nabla f(x)-\nabla f(y),x-y\rangle +\eta^2\beta^2\delta^2\\
                &\leq (1-2\eta\kappa+\eta^2\beta^2)\delta^2+2\eta\lambda,
            \end{align*}
            where in the first line we expand the square; in the second line we use the $\beta$-smoothness of $f$ and the bound on $\left\|x-y\right\|$; in the third line we use the $(\lambda,\kappa)$-strong dissipativity of $f$.

\subsection{Information Theory and Probability Divergences} In the following we use $\PP(\XX)$ to denote the set of all probability measures supported in $\XX$ and $\mathcal{B}(\XX)$ to denote the Borel $\sigma$-algebra of $\XX$.

\begin{definition}[R\'enyi Divergence]
    Let $\alpha\in (1,+\infty)$ and $\mu,\nu\in\PP(\RR{d})$ be two probability measures on $\RR{d}$. We define the R\'enyi divergence of order $\alpha$ between $\mu,\nu$ as:
    \begin{equation*}
        R_{\alpha}(\mu||\nu)=\begin{cases}
            \frac{1}{\alpha-1}\ln\left(\int_{\RR{d}}\left(\frac{d\mu}{d\nu}(x)\right)^{\alpha}\nu(dx)\right)&\text{ if }\mu\ll\nu\\
            +\infty&\text{ otherwise}.
        \end{cases}
    \end{equation*} 
\end{definition}

    It is well-known (e.g.~\citet[Theorem 5]{van2014renyi}) that if there exists $\beta>1$ such that $R_\beta(\mu||\nu)<\infty$, then $\mathrm{KL}(\mu||\nu)=\lim_{\alpha\to 1^{+}}R_\alpha(\mu||\nu)$.

    In a slight abuse of notation, we write R\'enyi divergences applied to random variables meaning the divergence of the respective distributions.

\begin{definition}[Coupling and $\infty$-Wasserstein Distance] Let $\mu,\nu\in \PP(\RR{d})$ be two probability measures. We say that $\gamma\in\PP(\RR{d})$ is a coupling of $\mu$ and $\nu$ if for all $A\in{\mathcal B}(\RR{d})$
\begin{equation*}
    \gamma(A\times \RR{d})=\mu(A)\qquad\text{and}\qquad \gamma(\RR{d}\times A)=\nu(A).
\end{equation*}
We denote by $\Gamma(\mu,\nu)$ the set of all couplings between $\mu$ and $\nu$.

We say that a pair of random variables is a coupling of $\mu$ and $\nu$ if its joint distribution is in $\Gamma(\mu,\nu)$; i.e.~ $X\sim \mu$ and $X^{\prime}\sim \nu$.

Finally, we define the $\infty$-Wasserstein distance between $\mu$ and $\nu$ as
\begin{equation*}
    W_{\infty}(\mu,\nu):=\inf_{\gamma\in\Gamma(\mu,\nu)}\Esup_{(x,y)\sim\gamma}\norm{x-y}.
\end{equation*}
    
\end{definition}

A key tool for analyzing is the so called \textit{Shifted R\'enyi Divergence}, which helps to interpolate between a $W_{\infty}$ guarantee --which holds trivially in the compact setting-- and a R\'enyi divergence one.

\begin{definition}[Shifted R\'enyi Divergence]
    Let $\alpha\in [1,+\infty)$, $\delta\geq 0$, and $\mu,\nu\in \PP(\RR{d})$ be two probability measures. We define the $\delta$-shifted R\'enyi divergence of order $\alpha$ as:
    \begin{equation*}
        R_{\alpha}^{(\delta)}(\mu||\nu)=\underset{\mu^{\prime}:W_\infty(\mu,\mu^{\prime})\leq \delta}{\inf} R_{\alpha}(\mu^{\prime}||\nu).
    \end{equation*}
\end{definition}

    Two useful properties of the shifted R\'enyi divergence are $R_{\alpha}^{(0)}(\mu||\nu)=R_{\alpha}(\mu||\nu)$, and that
        $W_{\infty}(\mu,\nu)\leq \delta$, implies $R_{\alpha}^{(\delta)}(\mu||\nu)=0$. These properties account for the final and initial bounds on PABI. A key property for PABI is the shift-reduction property of Gaussian noise addition \citep[Lemma~20]{feldman2018privacy}.

\begin{lemma}[Shift-reduction]\label{lemma:shift_reduction}
    For  $\mu,\nu\in\PP(\RR{d})$ and $a,\delta\geq 0$,
    \begin{equation*}
        R_\alpha^{(\delta)}\left(\mu\ast\NN\left(0,\sigma^2 I_{d\times d}\right)||\nu\ast \NN\left(0,\sigma^2 I_{d\times d}\right)\right)\leq R_{\alpha}^{(\delta+a)}\left(\mu||\nu\right)+\frac{\alpha a^2}{2\sigma^2}.
    \end{equation*}
\end{lemma}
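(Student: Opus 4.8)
The statement is a ``Gaussian mechanism with a head start'': the right-hand side controls how close $\mu$ and $\nu$ are once we are allowed to move $\mu$ by $W_\infty$-distance $\delta+a$, and the claim is that after convolving with $\NN(0,\sigma^2 I_{d\times d})$ we may trade $a$ of those units of displacement for the fixed Gaussian cost $\tfrac{\alpha a^2}{2\sigma^2}$, retaining a $W_\infty$-slack of only $\delta$. The plan is to exhibit an explicit interpolating measure $\mu'$ realizing this trade-off. If $R_\alpha^{(\delta+a)}(\mu\|\nu)=+\infty$ there is nothing to prove, so fix $\epsilon>0$ and choose $\zeta\in\PP(\RR{d})$ with $W_\infty(\mu,\zeta)\le\delta+a$ and $R_\alpha(\zeta\|\nu)\le R_\alpha^{(\delta+a)}(\mu\|\nu)+\epsilon$; by finiteness $\zeta\ll\nu$. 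Take a coupling $(X,Z)$ of $\mu$ and $\zeta$ with $\norm{X-Z}\le\delta+a$ a.s., let $G\sim\NN(0,\sigma^2 I_{d\times d})$ be independent of $(X,Z)$, and split the displacement radially: $V:=\min\{1,a/\norm{X-Z}\}\,(X-Z)$ and $U:=X-Z-V$, so that $\norm{V}\le a$, $\norm{U}\le\delta$, and the algebraic identity $X-U=Z+V$ holds pointwise. Set $\mu':=\mathcal{L}(X-U+G)=\mathcal{L}(Z+V+G)$. The coupling $(X+G,\,X-U+G)$ of $\mu\ast\NN(0,\sigma^2 I)$ and $\mu'$ has a.s.\ gap $\norm{U}\le\delta$, hence $R_\alpha^{(\delta)}(\mu\ast\NN\|\nu\ast\NN)\le R_\alpha(\mu'\|\nu\ast\NN)$, and it remains to prove the ``crux'' bound $R_\alpha\big(\mathcal{L}(Z+V+G)\,\big\|\,\nu\ast\NN\big)\le R_\alpha(\zeta\|\nu)+\tfrac{\alpha a^2}{2\sigma^2}$.

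For the crux I would work with $\Lambda_\alpha(\mu\|\nu):=\exp((\alpha-1)R_\alpha(\mu\|\nu))=\int(d\mu/d\theta)^\alpha(d\nu/d\theta)^{1-\alpha}\,d\theta$, which is \emph{jointly convex} in $(\mu,\nu)$ because it integrates the perspective of the convex map $t\mapsto t^\alpha$ ($\alpha>1$). Let $r=d\zeta/d\nu$, let $\phi_\sigma$ denote the $\NN(0,\sigma^2 I_{d\times d})$ density, and let $K(z,\cdot)=\mathcal{L}(V\mid Z=z)$, which is supported in the ball of radius $a$. The density of $\mu'$ is $p(y)=\int \psi_y(z)\,r(z)\,\nu(dz)$ with $\psi_y(z):=\int\phi_\sigma(y-z-v)\,K(z,dv)$, while $\nu\ast\NN$ has density $q(y)=\int\phi_\sigma(y-z)\,\nu(dz)$. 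A first application of Jensen to the jointly convex $\Phi(a,b)=a^\alpha b^{1-\alpha}$, pointwise in $y$ against the mixing measure $\nu$, followed by Tonelli, gives
\[
\Lambda_\alpha(\mu'\|\nu\ast\NN)\;\le\;\int r(z)^\alpha\,\Lambda_\alpha\!\Big(\mathcal{L}(z+V+G\mid Z=z)\,\Big\|\,\NN(z,\sigma^2 I_{d\times d})\Big)\,\nu(dz).
\]
Each inner term compares the mixture $\int\NN(z+v,\sigma^2 I_{d\times d})\,K(z,dv)$ with $\NN(z,\sigma^2 I_{d\times d})$; a second application of Jensen, now mixing over $v$, bounds it by $\sup_{\norm{v}\le a}\Lambda_\alpha(\NN(z+v,\sigma^2 I)\|\NN(z,\sigma^2 I))=\exp\!\big(\alpha(\alpha-1)a^2/2\sigma^2\big)$, using the closed form of the R\'enyi divergence between equal-covariance Gaussians. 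Hence $\Lambda_\alpha(\mu'\|\nu\ast\NN)\le \exp\!\big(\alpha(\alpha-1)a^2/2\sigma^2\big)\int r^\alpha\,d\nu=\exp\!\big(\alpha(\alpha-1)a^2/2\sigma^2\big)\,\Lambda_\alpha(\zeta\|\nu)$; taking $\tfrac1{\alpha-1}\ln(\cdot)$ yields the crux bound.

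Combining, $R_\alpha^{(\delta)}(\mu\ast\NN\|\nu\ast\NN)\le R_\alpha(\zeta\|\nu)+\tfrac{\alpha a^2}{2\sigma^2}\le R_\alpha^{(\delta+a)}(\mu\|\nu)+\epsilon+\tfrac{\alpha a^2}{2\sigma^2}$, and letting $\epsilon\downarrow 0$ completes the proof. (The reduction $R_\alpha(\zeta\ast\NN\|\nu\ast\NN)\le R_\alpha(\zeta\|\nu)$ via data processing is subsumed here in the first Jensen step, since the $\nu$-mixture component is $\NN(z,\sigma^2 I_{d\times d})$.)

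The only real obstacle is the crux bound. The tempting shortcut --- apply the data-processing inequality after the first Jensen step --- fails, because it would compare $K(z,\cdot)$ with the point mass $\delta_z$ and produce $+\infty$; the right move is to keep the Gaussian smoothing on \emph{both} arguments and absorb the $\le a$ shift through the second Jensen step against the explicit Gaussian R\'enyi formula. The radial split of the displacement into a size-$\delta$ part (charged to $W_\infty$) and a size-$a$ part (charged to the noise) is precisely what lets the two halves dovetail. Measurability of the split map and existence of the regular conditional kernel $K$ are routine on $\RR{d}$.
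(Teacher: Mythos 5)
Your proof is correct. Note that the paper itself does not prove this lemma: it is imported verbatim from \citet[Lemma~20]{feldman2018privacy}, so the comparison is with that original argument rather than with anything proved in this paper. Your outline is essentially the same as the original's --- pick an $\epsilon$-optimal witness $\zeta$ for the $(\delta+a)$-shifted divergence, couple it to $\mu$, and split the displacement radially into a part of norm at most $\delta$ (kept as the residual $W_\infty$ shift) and a part of norm at most $a$ (charged to the Gaussian) --- but the way you discharge the $a$-part differs in execution: Feldman et al.\ encapsulate it in a separate lemma about adding noise to boundedly shifted random variables, whereas you do it in one pass via the joint convexity of $(s,t)\mapsto s^{\alpha}t^{1-\alpha}$ (Jensen against the mixing measure $\nu$, then against the conditional kernel $K(z,\cdot)$), combined with the closed form $R_\alpha\big(\NN(v,\sigma^2 I_{d\times d})\,\|\,\NN(0,\sigma^2 I_{d\times d})\big)=\alpha\|v\|^2/(2\sigma^2)$. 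This yields a clean, self-contained derivation, and your remark that the naive data-processing shortcut would compare against a point mass and produce $+\infty$ is accurate; keeping the Gaussian on both arguments is indeed the right fix.

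Two small technical points, neither of which is a gap: (i) you implicitly use that a coupling with $\|X-Z\|\le\delta+a$ almost surely exists; attainment of the $W_\infty$ infimum does hold on $\RR{d}$ (weak compactness of $\Gamma(\mu,\zeta)$ plus lower semicontinuity of the essential supremum cost), and in any case one can run the argument with $\delta+a+\epsilon'$ and let $\epsilon'\downarrow 0$ together with $\epsilon$; (ii) your convexity argument requires $\alpha>1$, and the case $\alpha=1$ (used later in the paper for the KL/mixing bounds) follows by letting $\alpha\downarrow 1$ or, directly, from the joint convexity of relative entropy.
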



\section{Privacy Amplification by Iteration Under a Modulus of Continuity}\label{section:PABI}

We start by providing a simple extension of the PABI framework for nonexpansive iterations to the case of general maps under a modulus of continuity assumption. As discussed in the introduction, we want to study iterations of the form 
\[ X_{t+1}=\Pi_{\XX}[\Phi_t(X_t)+\xi_t], \qquad \xi_t\sim {\NN}(0,\sigma_t^2 I_{d\times d}). \]
In what follows, we denote by $\varphi_t$ the modulus of continuity of the map $\Phi_t$. We are interested in bounding the R\'enyi divergence of two different trajectories of this algorithm, either under different initializations (to obtain mixing time results) or under different maps $\Phi_t,\Phi_t^{\prime}$ (to prove privacy results). The following two results are an adaptation of  \citet[Lemma 21]{feldman2018privacy} and \citet[Theorem 22]{feldman2018privacy}, respectively.

\begin{lemma}[Coupling under modulus of continuity\label{lemma:nombre_pendiente}]
    Let $\mu, \nu\in\PP(\RR{d})$, $\delta\geq 0$ and\\ $\alpha\in [1,+\infty)$. Let also $\Phi:\RR{d}\to\RR{d}$ be a map with modulus of continuity $\varphi$. Then
    \begin{equation*}
        R_{\alpha}^{\left(\varphi(\delta)\right)}\left(\Phi_{\#}\mu||\Phi_{\#}\nu\right)\leq R_{\alpha}^{(\delta)}\left(\mu||\nu\right),
    \end{equation*}
    where $\Phi_{\#}\mu$ and $\Phi_{\#}\nu$ denote the pushforward measure of $\mu$ and $\nu$ through $\Phi$, respectively.
\end{lemma}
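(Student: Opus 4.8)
The plan is to prove this by a direct coupling argument: unwind the definition of the shifted R\'enyi divergence on the left, exhibit a feasible competitor using a near-optimal coupling on the right, and then push everything through $\Phi$.

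\textbf{Setup.} Fix $\mu,\nu\in\PP(\RR{d})$, $\delta\geq 0$, $\alpha\in[1,+\infty)$ and a map $\Phi$ with modulus of continuity $\varphi$. The quantity $R_\alpha^{(\delta)}(\mu\|\nu)$ is, by definition, $\inf_{\mu'\,:\,W_\infty(\mu,\mu')\le\delta} R_\alpha(\mu'\|\nu)$. If this infimum is $+\infty$ there is nothing to prove, so assume it is finite and fix an $\eps>0$ together with a measure $\mu'$ with $W_\infty(\mu,\mu')\le\delta$ and $R_\alpha(\mu'\|\nu)\le R_\alpha^{(\delta)}(\mu\|\nu)+\eps$.

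\textbf{Main step.} The goal is to produce a measure that is $\varphi(\delta)$-close to $\Phi_\#\mu$ in $W_\infty$ and whose R\'enyi divergence against $\Phi_\#\nu$ is at most $R_\alpha(\mu'\|\nu)$. The natural candidate is $\Phi_\#\mu'$. First I would check the divergence bound: by the data-processing inequality for R\'enyi divergence (pushforward through any measurable map cannot increase $R_\alpha$), $R_\alpha(\Phi_\#\mu'\,\|\,\Phi_\#\nu)\le R_\alpha(\mu'\|\nu)$. Next I would check the $W_\infty$ bound: pick a coupling $\gamma\in\Gamma(\mu,\mu')$ with $\Esup_{(x,y)\sim\gamma}\norm{x-y}\le\delta$ (actually, since $W_\infty$ is attained one may take an optimal coupling; otherwise work with an $\eps$-optimal one). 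Pushing $\gamma$ forward through the map $(x,y)\mapsto(\Phi(x),\Phi(y))$ yields a coupling $\tilde\gamma\in\Gamma(\Phi_\#\mu,\Phi_\#\mu')$, and almost surely under $\tilde\gamma$ one has $\norm{\Phi(x)-\Phi(y)}\le\varphi(\norm{x-y})\le\varphi(\delta)$, using that $\varphi$ is nondecreasing and the modulus-of-continuity bound. Hence $W_\infty(\Phi_\#\mu,\Phi_\#\mu')\le\varphi(\delta)$, so $\Phi_\#\mu'$ is feasible in the infimum defining $R_\alpha^{(\varphi(\delta))}(\Phi_\#\mu\|\Phi_\#\nu)$. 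Combining, $R_\alpha^{(\varphi(\delta))}(\Phi_\#\mu\|\Phi_\#\nu)\le R_\alpha(\Phi_\#\mu'\|\Phi_\#\nu)\le R_\alpha(\mu'\|\nu)\le R_\alpha^{(\delta)}(\mu\|\nu)+\eps$, and letting $\eps\downarrow 0$ finishes the argument.

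\textbf{Expected obstacle.} The calculus is routine; the only delicate points are measure-theoretic bookkeeping. One must make sure the essential-supremum bound for $\tilde\gamma$ is legitimate: the $\tilde\gamma$-almost-sure statement $\norm{\Phi(x)-\Phi(y)}\le\varphi(\delta)$ follows from the $\gamma$-almost-sure statement $\norm{x-y}\le\delta$ precisely because $\tilde\gamma$ is the pushforward of $\gamma$ under $(\Phi,\Phi)$, so no new mass appears outside the good set. One should also be slightly careful about whether the $W_\infty$ infimum is attained; if it is not, the clean way is to carry the extra $\eps$ through both the coupling and the near-optimal shift simultaneously, which does not change the structure of the proof. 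A final minor point is measurability of $(\Phi,\Phi)$, which holds since $\Phi$ is continuous (or at least Borel) under the modulus-of-continuity hypothesis whenever $\varphi(0^+)=0$, and in the discontinuous case one simply takes $\Phi$ Borel as part of the standing assumptions. None of these are real difficulties, so the proof is essentially a one-paragraph application of data processing plus a pushforward of couplings.
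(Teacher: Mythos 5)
Your proposal is correct and follows essentially the same route as the paper's proof: data processing for the divergence bound, plus pushing an $\infty$-Wasserstein coupling through $(\Phi,\Phi)$ to certify that $\Phi_{\#}\mu'$ is a feasible competitor at shift $\varphi(\delta)$. The only difference is that you handle possible non-attainment of the infima via an $\eps$-argument, whereas the paper takes an optimal $\mu'$ and coupling directly; this is a minor refinement, not a different approach.
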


\begin{proof}
    Let $\mu^{\prime}$ be such that $W_{\infty}(\mu,\mu^{\prime})\leq \delta$ and
        $R_{\alpha}(\mu^{\prime}||\nu)= R_{\alpha}^{(\delta)}(\mu||\nu).$
    Let $(X,X^{\prime})$ be a coupling of $(\mu,\mu^{\prime})$ such that
        $\norm{X-X^{\prime}}\leq \delta$ a.s.\\ 
    Then $ \norm{\Phi(X)-\Phi(X^{\prime})}\leq \varphi(\norm{X-X^{\prime}})\leq \varphi(\delta)$ a.s.  
    Also, by the data-processing inequality (Proposition \ref{prop:data_processing_ineq}), 
    \begin{equation*}
        R_{\alpha}(\Phi_{\#}\mu^{\prime}||\Phi_{\#}\nu)\leq R_{\alpha}(\mu^{\prime}||\nu)=R_{\alpha}^{(\delta)}(\mu||\nu).
    \end{equation*}
    Therefore, since $(\Phi(X),\Phi(X^{\prime}))$ is a coupling of $(\Phi_{\#}\mu,\Phi_{\#}\mu^{\prime})$,
    \begin{equation*}
        R_{\alpha}^{(\varphi(\delta))}(\Phi_{\#}\mu||\Phi_{\#}\nu)\leq R_{\alpha}(\Phi_{\#}\mu^{\prime}||\Phi_{\#}\nu)\leq R_{\alpha}^{(\delta)}(\mu||\nu).
    \end{equation*}
\end{proof}

\begin{lemma}\label{lemma:mixture_of_the_lemmas}
    Let $\XX\subseteq\RR{d}$ be a closed convex set and $\Phi:\RR{d}\to\RR{d}$ be a map with modulus of continuity $\varphi$. Also, let $X,X^{\prime}$ be two random variables in $\RR{d}$ and $\xi\sim\NN(0,\sigma^2 I_{d\times d})$ be centered Gaussian noise with variance $\sigma^2$. Let $Y=\Pi_{\XX}\left[\Phi(X)+\xi\right]$ and $Y^{\prime}=\Pi_{\XX}\left[\Phi(X^{\prime})+\xi\right]$, 
    and let $\delta\geq 0$. Then, for any $0<a\leq \varphi(\delta)$,
    \begin{equation*}
        R_{\alpha}^{(\varphi(\delta)-a)}(Y||Y^{\prime})\leq R_{\alpha}^{(\delta)}(X||X^{\prime})+\frac{\alpha a^2}{2\sigma^2}.
    \end{equation*}
\end{lemma}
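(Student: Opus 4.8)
The plan is to chain together the two key tools already established: the coupling lemma under a modulus of continuity (Lemma~\ref{lemma:nombre_pendiente}) and the shift-reduction property of Gaussian noise addition (Lemma~\ref{lemma:shift_reduction}). First I would handle the deterministic map $\Phi$ applied to $X,X'$. By Lemma~\ref{lemma:nombre_pendiente} applied to the distributions $\mu$ of $X$ and $\nu$ of $X'$, we get
\[
    R_{\alpha}^{(\varphi(\delta))}(\Phi_{\#}\mu\,||\,\Phi_{\#}\nu)\leq R_{\alpha}^{(\delta)}(X\,||\,X').
\]

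\textbf{Adding the Gaussian noise.} Next I would account for the convolution with $\xi\sim\NN(0,\sigma^2 I_{d\times d})$. Writing $Z=\Phi(X)+\xi$ and $Z'=\Phi(X')+\xi$, the distributions of $Z,Z'$ are $\Phi_{\#}\mu\ast\NN(0,\sigma^2 I)$ and $\Phi_{\#}\nu\ast\NN(0,\sigma^2 I)$. Apply Lemma~\ref{lemma:shift_reduction} with target shift $\varphi(\delta)-a\geq 0$ and reduction parameter $a$, so that $(\varphi(\delta)-a)+a=\varphi(\delta)$:
\[
    R_{\alpha}^{(\varphi(\delta)-a)}(Z\,||\,Z')\leq R_{\alpha}^{(\varphi(\delta))}(\Phi_{\#}\mu\,||\,\Phi_{\#}\nu)+\frac{\alpha a^2}{2\sigma^2}.
\]
Combining the two displayed inequalities already gives the bound at the level of $Z,Z'$ (i.e.\ before projection).

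\textbf{The projection step.} The remaining point is to pass from $Z,Z'$ to $Y=\Pi_{\XX}[Z]$ and $Y'=\Pi_{\XX}[Z']$, i.e.\ to show that applying the same projection $\Pi_{\XX}$ to both arguments does not increase the shifted R\'enyi divergence: $R_{\alpha}^{(\varphi(\delta)-a)}(Y\,||\,Y')\leq R_{\alpha}^{(\varphi(\delta)-a)}(Z\,||\,Z')$. This follows because $\Pi_{\XX}$ is $1$-Lipschitz (nonexpansive), so it has modulus of continuity $\varphi_{\Pi}(\delta)=\delta$; applying Lemma~\ref{lemma:nombre_pendiente} with this identity modulus (and shift $\varphi(\delta)-a$) to the push-forward through $\Pi_{\XX}$ gives exactly the desired monotonicity. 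Alternatively one can invoke data-processing together with the fact that $W_\infty$-balls map into $W_\infty$-balls of the same radius under a nonexpansive map. Stringing all three inequalities together yields
\[
    R_{\alpha}^{(\varphi(\delta)-a)}(Y\,||\,Y')\leq R_{\alpha}^{(\delta)}(X\,||\,X')+\frac{\alpha a^2}{2\sigma^2},
\]
which is the claim.

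\textbf{Main obstacle.} There is no serious obstacle here; this is essentially a bookkeeping argument assembling three black boxes. The only things to be careful about are: (i) the hypothesis $0<a\leq\varphi(\delta)$ must be used precisely so that the shift $\varphi(\delta)-a$ appearing in Lemma~\ref{lemma:shift_reduction} is nonnegative and the lemma applies; and (ii) one must make sure the shared noise $\xi$ (the same random variable in $Y$ and $Y'$) is compatible with the coupling constructed inside Lemma~\ref{lemma:nombre_pendiente} — since $\xi$ is added identically to both branches and independently of $(X,X')$, the coupling of $(\Phi(X),\Phi(X'))$ extends to a coupling of $(Z,Z')$ by appending the common $\xi$, preserving the almost-sure distance bound needed for the shift accounting. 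I would state these two points explicitly but otherwise keep the proof short.
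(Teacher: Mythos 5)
Your proof is correct and follows essentially the same route as the paper: Lemma~\ref{lemma:nombre_pendiente} for $\Phi$, then the shift-reduction Lemma~\ref{lemma:shift_reduction} for the Gaussian convolution, then Lemma~\ref{lemma:nombre_pendiente} again for the nonexpansive projection $\Pi_{\XX}$ (viewed as having identity modulus of continuity). Your extra remarks on $0<a\leq\varphi(\delta)$ and the shared noise are harmless; note that since shifted R\'enyi divergence depends only on the marginal laws and $\xi$ is independent of $(X,X')$, the second point needs no coupling argument at all.
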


\begin{proof}
    By a succesive application of Lemmas \ref{lemma:nombre_pendiente}, \ref{lemma:shift_reduction} and \ref{lemma:nombre_pendiente} again, we have that:
    \begin{align*}
        R_{\alpha}^{(\delta)}(X||X^{\prime})
        &\geq
        R_{\alpha}^{(\varphi(\delta))}(\Phi(X)||\Phi(X^{\prime}))\\
        &\geq
        R_{\alpha}^{(\varphi(\delta)-a)}(\Phi(X)+\xi||\Phi(X^{\prime})+\xi)-\frac{\alpha a^2}{2\sigma^2}\\
        &\geq
        R_{\alpha}^{(\varphi(\delta)-a)}(\Pi_{\XX}\left[\Phi(X)+\xi\right]||\Pi_{\XX}\left[\Phi(X^{\prime})+\xi\right])-\frac{\alpha a^2}{2\sigma^2},
    \end{align*}
where in the last step we used the nonexpansiveness of $\Pi_{\XX}$.
\end{proof}

We introduce now a simplifying notation for the PABI induction. Given 
$T\in\mathbb{N}$, $D>0$ and $\aaa=(a_t)_{t=1}^{T}$ be a sequence of nonnegative reals. We define $ \varphi_{[0:0]}(D,\aaa):=\varphi_0(D)-a_1$, and
\[\varphi_{[0:t]}(D,\aaa):=\varphi_t\left(\varphi_{[0:t-1]}(D,\aaa)\right)-a_{t+1}, \quad  \text{for} \quad t=1,\ldots,T-1.\]

\begin{lemma}[Privacy amplification by iteration under a modulus of continuity] \label{lemma:privacy_amp}
    \text{ }
    Let $\XX\subseteq\RR{d}$ be a convex and compact set with diameter $D>0$. Let $\mu_0, \mu_0^{\prime}\in{\PP}(\XX)$ be two probability measures. For every $t\in\NNo$, let $\Phi_t:\XX\subseteq\RR{d}\to\RR{d}$ be a mapping with modulus of continuity $\varphi_t$ and let $(\xi_t)_{t\in\NNo}\overset{i.i.d.}{\sim}\NN(0,\sigma_t^2 I_{d\times d})$. Define $(X_t)_{t\in\mathbb{N}_0}$ and $(X_t^{\prime})_{t\in\mathbb{N}_0}$ respectively as
    
\noindent\begin{minipage}{.5\linewidth}
    \begin{align*}
        X_0 &\sim\mu_0\\
        X_{t+1} &= \Pi_{\XX}\left[\Phi_{t}(X_t)+\xi_{t}\right],
    \end{align*}
    \end{minipage}
   \hspace{-7ex} and \hspace{-5ex}
    \begin{minipage}{.5\linewidth}
    \begin{align*}
        X_0^{\prime} &\sim\mu_0^{\prime}\\
        X_{t+1}^{\prime} &= \Pi_{\XX}\left[\Phi_{t}(X_t^{\prime})+\xi_{t}\right].
    \end{align*}
\end{minipage}
\vspace{2ex}
   
    Let $\aaa=(a_t)_{t=1}^{T}$ be such that $a_t\geq 0$ and $\varphi_{[0:t-1]}(D,\aaa)\geq 0$ for all $t\leq T$. Then
    \begin{equation}\label{eqn:bound_on_shifted_renyi_divergence_of_the_algorithm}
        R_{\alpha}^{\left(\varphi_{[0:T-1]}(D,\aaa)\right)}(X_T||X_T^{\prime})\leq \frac{\alpha}{2}\sum_{t=1}^{T}\frac{a_t^2}{\sigma_{t-1}^2}.
    \end{equation}
\end{lemma}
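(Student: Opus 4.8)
The plan is to prove \eqref{eqn:bound_on_shifted_renyi_divergence_of_the_algorithm} by induction on $T$, using Lemma~\ref{lemma:mixture_of_the_lemmas} as the single-step engine that advances the shifted R\'enyi divergence from iterate $t$ to iterate $t+1$. The key bookkeeping observation is that the quantity $\varphi_{[0:t-1]}(D,\aaa)$ is exactly the shift that survives after $t$ applications of the noisy map starting from the $W_\infty$-ball of radius $D$, so that the recursion $\varphi_{[0:t]}(D,\aaa)=\varphi_t(\varphi_{[0:t-1]}(D,\aaa))-a_{t+1}$ tracks precisely the shift produced by one more step of Lemma~\ref{lemma:mixture_of_the_lemmas} applied with $\delta=\varphi_{[0:t-1]}(D,\aaa)$ and parameter $a=a_{t+1}$.

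First I would establish the base case. Since $\XX$ has diameter $D$, any coupling of $X_0\sim\mu_0$ and $X_0^\prime\sim\mu_0^\prime$ satisfies $\|X_0-X_0^\prime\|\le D$ a.s., so $W_\infty(X_0,X_0^\prime)\le D$ and hence $R_\alpha^{(D)}(X_0\|X_0^\prime)=0$ by the basic property of the shifted divergence recorded after the definition. Then apply Lemma~\ref{lemma:mixture_of_the_lemmas} with $X=X_0$, $X^\prime=X_0^\prime$, $\Phi=\Phi_0$, $\sigma=\sigma_0$, $\delta=D$, and $a=a_1$ (valid since $0<a_1\le\varphi_0(D)$, which is forced by the hypothesis $\varphi_{[0:0]}(D,\aaa)=\varphi_0(D)-a_1\ge0$; the case $a_1=0$ is trivial or handled by a limiting/monotonicity remark). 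This yields $R_\alpha^{(\varphi_0(D)-a_1)}(X_1\|X_1^\prime)\le \frac{\alpha a_1^2}{2\sigma_0^2}$, i.e. $R_\alpha^{(\varphi_{[0:0]}(D,\aaa))}(X_1\|X_1^\prime)\le\frac{\alpha}{2}\cdot\frac{a_1^2}{\sigma_0^2}$, which is the claim for $T=1$.

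For the inductive step, suppose the bound holds for some $T$, i.e. $R_\alpha^{(\varphi_{[0:T-1]}(D,\aaa))}(X_T\|X_T^\prime)\le\frac{\alpha}{2}\sum_{t=1}^{T}\frac{a_t^2}{\sigma_{t-1}^2}$. Apply Lemma~\ref{lemma:mixture_of_the_lemmas} with $X=X_T$, $X^\prime=X_T^\prime$, $\Phi=\Phi_T$, $\sigma=\sigma_T$, $\delta=\varphi_{[0:T-1]}(D,\aaa)$, and $a=a_{T+1}$. The hypothesis $\varphi_{[0:T]}(D,\aaa)\ge0$ guarantees $a_{T+1}\le\varphi_T(\varphi_{[0:T-1]}(D,\aaa))=\varphi_T(\delta)$, so the lemma applies and gives
\[
R_\alpha^{(\varphi_T(\delta)-a_{T+1})}(X_{T+1}\|X_{T+1}^\prime)\le R_\alpha^{(\delta)}(X_T\|X_T^\prime)+\frac{\alpha a_{T+1}^2}{2\sigma_T^2}.
\]
Since $\varphi_T(\delta)-a_{T+1}=\varphi_{[0:T]}(D,\aaa)$ by definition, combining with the induction hypothesis closes the induction. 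The only genuinely delicate points — and the main place care is needed — are the edge cases where some $a_t=0$ (so Lemma~\ref{lemma:mixture_of_the_lemmas}, which is stated for $0<a\le\varphi(\delta)$, does not directly apply): there one invokes that the shifted divergence is nonincreasing in the shift together with Lemma~\ref{lemma:nombre_pendiente} and Lemma~\ref{lemma:shift_reduction} with $a=0$ (which is permitted), or simply takes a limit $a\downarrow0$; this is routine but should be mentioned. A second small subtlety is verifying that the hypotheses $\varphi_{[0:t-1]}(D,\aaa)\ge0$ for all $t\le T$ are exactly what is needed to legitimately feed a nonnegative shift $\delta$ into Lemma~\ref{lemma:mixture_of_the_lemmas} at every step, which the chain of inequalities uses implicitly.
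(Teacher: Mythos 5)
Your proof is correct and follows essentially the same route as the paper: the paper's proof simply states that the bound follows by iteratively applying Lemma~\ref{lemma:mixture_of_the_lemmas} starting from $R_{\alpha}^{(D)}(X_0||X_0^{\prime})=0$, which is exactly the induction you carry out (with the added, welcome, attention to the $a_t=0$ edge case).
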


\begin{proof}
    The proof follows by iteratively applying Lemma \ref{lemma:mixture_of_the_lemmas} starting from $R_{\alpha}^{(D)}(X_0||X_0^{\prime})=0$.
\end{proof}

We will refer to those sequences $(X_t)_{t\in\NNo}$ having form as in Lemma \ref{lemma:privacy_amp} as \textit{Projected Noisy Iterations with moduli of continuity $(\varphi_t)_t$}.

\subsection{The Shifts Optimization Problem}\label{section:shifts_optimization_problem}

    We focus now on the optimization of parameters to obtain the tightest possible PABI bound. In order to bound $R_{\alpha}(X_T||X_T^{\prime})$, we need to find a suitable sequence $\aaa=(a_t)_{t=1}^{T}$ such that 
            $\varphi_{[0:T-1]}(D,\aaa)=0.$
    This is because, from Lemma~\ref{lemma:privacy_amp} and the fact that the $0$-shifted R\'enyi Divergence is the R\'enyi Divergence, we can obtain a bound for $R_{\alpha}(X_T||X_T^{\prime})$. 
        Since there are various \textit{feasible shifts} $\aaa$ to choose from, our goal is to  minimize the right hand side of equation \eqref{eqn:bound_on_shifted_renyi_divergence_of_the_algorithm}. More precisely, for $D>0$,  a sequence $\mathbf{a}=(a_t)_{t=1}^{T}$ of nonnegative reals is a sequence of feasible shifts if for all $t=1,\ldots,T$
    \begin{equation*}
        a_t \geq 0, \quad \varphi_{[0:t-1]}(D,\aaa)\geq 0, \quad \text{and} \quad \varphi_{[0:T-1]}(D,\aaa)=0.
    \end{equation*}

Note that $\aaa=(a_t)_{t=1}^{T}$ is a feasible shift if and only if
        $a_t=\varphi_{t-1}(u_{t-1})-u_t$ for all $t=1,\ldots,T$,
    where $(u_{t})_{t=0}^{T}$ is a sequence of nonnegative numbers that satisfies
\begin{equation}\label{feasible_shifts}
u_0=D, \quad u_T=0, \text{ and } \quad \varphi_{t-1}(u_{t-1})\geq u_t\quad\forall t=1,\ldots,T.
\end{equation}

Hence we can restate the problem of finding a sequence of feasible shifts $(a_t)_{t=1}^{T}$ that minimizes the right-hand-side of \eqref{eqn:bound_on_shifted_renyi_divergence_of_the_algorithm}
by the equivalent problem of finding a sequence of nonnegative real numbers $(u_t)_{t=0}^{T}$ that satisfies \eqref{feasible_shifts} and minimizes

\begin{equation*}
    \frac{\alpha}{2}\sum_{t=1}^{T}\frac{\left(\varphi_{t-1}(u_{t-1})-u_t)\right)^2}{\sigma_{t-1}^2}.
\end{equation*}

Let us call $\RRR$ the set of parameters $\uu=(u_1,\ldots,u_{T-1})\in\RR{T-1}$ that satisfy \eqref{feasible_shifts}.

Then, in order to obtain the tightest possible PABI upper bound, we consider the problem
\begin{equation}\tag{P}\label{problem:minimization_phi}
    \min_{\uu\in\RRR}\left[ E (\uu):=\sum_{t=1}^{T}\frac{(\varphi_{t-1}(u_{t-1})-u_{t})^2}{\sigma_{t-1}^2}\right].
\end{equation}

\subsection{Solving the Shifts Optimization Problem}

We now study the shifts optimization problem under a modulus of continuity assumption that encompasses families of both nonsmooth and smooth potentials.

While under the studied modulus of continuity the objective $E$ has a positive-definite Hessian at every point of $\RRR$, in general ${\RRR}$ is a nonconvex domain, which prevents us from a simple first-order condition characterization of optimality (see Appendix \ref{app:convexity_E} for more details). We will nevertheless characterize the first-order conditions, and then show by alternative arguments that this is indeed an optimal solution for problem \eqref{problem:minimization_phi}. 

We present next what we deem as our main result.

\begin{theorem}\label{cor:minimized_E}
Let $(c_t)_{t\in\NNo}$ be a sequence of strictly positive real numbers, $(h_t)_{t\in\NNo}$ a sequence of nonnegative real numbers, and $\varphi_t(\delta)=\sqrt{c_t\delta^2+h_t}$.

If $(X_t)_{t\in\NNo}$ and $(X_{t}^{\prime})_{t\in\NNo}$ are projected noisy iterations with moduli of continuity $(\varphi_t)_t$, which only differ in their initialization and whose domain, $\XX$, has diameter $D>0$, then
\begin{equation*}
    R_{\alpha}(X_T||X_T^{\prime})\leq
    \frac{\alpha}{2}\left(\frac{\Pi_{k=0}^{T-1}c_k D^2}{\sum_{j=0}^{T-1}\sigma_j^2\Pi_{l=j+1}^{T-1}c_l}+\sum_{t=0}^{T-1}\frac{h_t\Pi_{k=t+1}^{T-1}c_k}{\sum_{j=t}^{T-1}\sigma_j^2\Pi_{l=j+1}^{T-1}c_l}\right).
\end{equation*}
\end{theorem}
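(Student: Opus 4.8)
The plan is to reduce the constrained problem \eqref{problem:minimization_phi} to an explicit minimization over the sequence $(u_t)_{t=0}^{T}$ with $u_0=D$, $u_T=0$, and then exploit the special algebraic form $\varphi_t(\delta)=\sqrt{c_t\delta^2+h_t}$. The key observation is that with this modulus of continuity the terms $\varphi_{t-1}(u_{t-1})^2 = c_{t-1}u_{t-1}^2+h_{t-1}$ are \emph{quadratic} in $u_{t-1}$, so the objective $E(\uu)$ becomes, up to an affine-in-$u$ rearrangement, a sum of squared differences of the monotone transformations $u_t \mapsto$ (something affine in $u_t^2$). Concretely, I would first change variables by writing each summand as $\tfrac{1}{\sigma_{t-1}^2}(\sqrt{c_{t-1}u_{t-1}^2+h_{t-1}} - u_t)^2$ and note that, because the map $\delta\mapsto\sqrt{c_{t-1}\delta^2+h_{t-1}}$ is increasing, the constraint $\varphi_{t-1}(u_{t-1})\ge u_t$ is equivalent to $u_t \le \varphi_{t-1}(u_{t-1})$; at an optimum the natural guess (to be verified) is that this is tight, i.e. the optimal trajectory picks the \emph{largest} feasible $u_t$ at each step is \emph{not} what minimizes $E$ — rather, one wants to spread the ``cost'' $\varphi_{t-1}(u_{t-1})-u_t$ evenly, weighted by $1/\sigma_{t-1}^2$.

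The core computation is the first-order (KKT) analysis. Writing the Lagrangian for \eqref{problem:minimization_phi} and differentiating $E$ with respect to an interior coordinate $u_t$ ($1\le t\le T-1$), the stationarity condition reads
\[
-\frac{2(\varphi_{t-1}(u_{t-1})-u_t)}{\sigma_{t-1}^2} + \frac{2(\varphi_t(u_t)-u_{t+1})\,\varphi_t'(u_t)}{\sigma_t^2}=0,
\]
and since $\varphi_t'(u_t) = c_t u_t/\varphi_t(u_t)$, this couples consecutive shifts $a_t=\varphi_{t-1}(u_{t-1})-u_t$ through a simple recursion. I expect that solving this recursion backward from $t=T$ (where $u_T=0$, so $\varphi_{T-1}(u_{T-1})-u_T=\varphi_{T-1}(u_{T-1})$) yields $a_t$ proportional to $\sigma_{t-1}^2 \prod_{l\ge t}c_l$ times a common constant determined by the telescoping of $u_0=D$ through the recursion. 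Substituting back into $E(\uu)=\sum_t a_t^2/\sigma_{t-1}^2$ and using $R_\alpha(X_T\|X_T')\le\tfrac{\alpha}{2}E(\uu^\ast)$ from Lemma~\ref{lemma:privacy_amp} should produce exactly the claimed closed form, with the $\prod_{k=0}^{T-1}c_k D^2$ term coming from propagating the diameter and the $\sum_t h_t\prod_{k>t}c_k$ terms coming from the additive constants $h_t$ inside each $\varphi_t$.

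The main obstacle is \emph{not} the first-order computation but the global optimality argument: as the authors flag (Appendix~\ref{app:convexity_E}), $\RRR$ is a nonconvex domain, so a stationary point need not be a minimizer, and one must also check that the constraints $u_t\le\varphi_{t-1}(u_{t-1})$ are not active at the candidate (i.e. the solution is interior, so no KKT multipliers for the inequality constraints are needed) and that $u_t\ge 0$ holds. I would handle this by an independent argument: either (i) show $E$ is \emph{convex along the candidate's coordinate directions} or, more robustly, (ii) directly bound $E(\uu)\ge E(\uu^\ast)$ for all feasible $\uu$ by a substitution/telescoping inequality — e.g. introduce auxiliary variables $v_t=u_t^2$ and observe that, after the quadratic substitution, $E$ becomes genuinely convex in $(v_t)$ while the feasible set $\{v_0=D^2,\ v_T=0,\ v_t\le c_{t-1}v_{t-1}+h_{t-1},\ v_t\ge 0\}$ is a \emph{polyhedron} (hence convex), reducing the problem to a convex program for which the first-order conditions are sufficient. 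This reparametrization trick is, I believe, the crux that makes the explicit formula rigorous; the rest is careful bookkeeping of the products $\prod_{l=j+1}^{T-1}c_l$ and the sums $\sum_{j=t}^{T-1}\sigma_j^2\prod_{l=j+1}^{T-1}c_l$ that appear in the denominators.
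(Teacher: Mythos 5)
Your outline is essentially correct, and its computational core coincides with the paper's: you derive the same interior stationarity conditions (your displayed equation is an equivalent rearrangement of \eqref{eqn:stationary_conditions_general}), solve them by a backward recursion starting from $u_T=0$, and plug the resulting shifts into Lemma~\ref{lemma:privacy_amp} to evaluate $E(\mathbf{u}^{\ast})$. Where you genuinely diverge is on the point you correctly identify as the crux, namely why the stationary point is a global minimizer despite the nonconvexity of $\RRR$. The paper argues directly in $u$-coordinates: it shows the \emph{unconstrained} problem $\min_{\uu\in\RR{T-1}}E(\uu)$ attains its minimum (a minimizing sequence has bounded coordinates by reverse induction, then Bolzano--Weierstrass), that the stationary point is unique, and finally that this unconstrained minimizer happens to lie in $\RRR$. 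Your alternative (ii) — substituting $v_t=u_t^2$ — is a valid and arguably cleaner route: each summand becomes $c_{t-1}v_{t-1}+h_{t-1}+v_t-2\sqrt{v_t\,(c_{t-1}v_{t-1}+h_{t-1})}$, which is convex because the geometric mean is jointly concave and composition with affine maps preserves concavity, while the feasibility constraints become the polyhedron $\{v_t\ge 0,\ v_t\le c_{t-1}v_{t-1}+h_{t-1}\}$; thus the shifts problem is a hidden convex program and first-order conditions are sufficient, provided you also verify (as you note) that the candidate has $u_t^{\ast}>0$ and that the inequality constraints are strictly inactive there — both of which follow from $D>0$, $c_t>0$ and the fact that the ratios in the recursion are strictly between $0$ and $1$. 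What each approach buys: the paper's avoids any convexity verification but needs the coercivity-plus-uniqueness argument; yours makes global optimality immediate and exposes extra structure, at the price of checking the concavity of the geometric-mean composition and differentiability of $\sqrt{v_t}$ at the candidate.

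Two minor points to tighten. First, your parenthetical guess that the optimal shifts are ``$a_t$ proportional to $\sigma_{t-1}^2\prod_{l\ge t}c_l$ times a common constant'' is exact only when $h\equiv 0$; in general $a_t^{\ast}=\frac{\sigma_{t-1}^2\prod_{l=t}^{T-1}c_l}{\sum_{j=t-1}^{T-1}\sigma_j^2\prod_{l=j+1}^{T-1}c_l}\,\varphi_{t-1}(u_{t-1}^{\ast})$, so the proportionality factor varies with $t$; this does not affect your plan, since substituting the recursion into $E$ still yields the claimed closed form. Second, note that the theorem's inequality itself only requires feasibility of $\mathbf{u}^{\ast}$ (so Lemma~\ref{lemma:privacy_amp} applies with zero final shift) and the evaluation of $E(\mathbf{u}^{\ast})$; global optimality is what certifies that this is the tightest PABI bound, not what the stated inequality needs.
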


To prove the Theorem, we need to first characterize the optimal solution of the shifts optimization problem \eqref{problem:minimization_phi}. This is done in following Lemma, which provides the unique solution for this problem.

\begin{lemma}
    Let $(c_t)_{t=0}^{T-1}$ be a sequence of strictly positive numbers, $(h_t)_{t=0}^{T-1}$ be a sequence of nonnegative numbers and let $\varphi_t(\delta)=\sqrt{c_{t}\delta^2+h_{t}}$ be the $t$-th modulus of continuity. Let also $E:\RR{T-1}\to\RR{}$ be defined as in \eqref{problem:minimization_phi} and $\mathbf{u^{\ast}}\in\RR{T-1}$ be recursively defined as:
    \begin{equation*}
        u_t^{\ast}=\left(\frac{\sum_{k=t}^{T-1}\Pi_{l=k+1}^{T-1}c_l\sigma_k^2}{\sum_{j=t-1}^{T-1}\Pi_{l=j+1}^{T-1}c_l\sigma_j^2}\right)\varphi_{t-1}(u_{t-1}),\quad \text{for all}\quad t=1,\ldots,T-1.
    \end{equation*}
    Then $\mathbf{u^{\ast}}\in\RRR$ and is the unique minimizer of $E$ over $\RR{T-1}$.
\end{lemma}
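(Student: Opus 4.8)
The plan is to solve the shifts problem \eqref{problem:minimization_phi} by \emph{dynamic programming}, exploiting the one structural feature the parametrization buys us: since $\varphi_t(\delta)^2=c_t\delta^2+h_t$ is quadratic in $\delta$, the associated value function stays quadratic along the entire backward recursion. I would regard $E$ as the total cost of a finite-horizon problem with fixed endpoints $u_0=D$, $u_T=0$ and free variables $u_1,\dots,u_{T-1}$, the $t$-th running cost being $\sigma_{t-1}^{-2}(\varphi_{t-1}(u_{t-1})-u_t)^2$. Set $V_{T-1}(u):=\sigma_{T-1}^{-2}\varphi_{T-1}(u)^2$ and, for $t=T-2,\dots,0$,
\[
V_t(u):=\min_{v\in\RR{}}\Big[\sigma_t^{-2}(\varphi_t(u)-v)^2+V_{t+1}(v)\Big],
\]
so that $\min_{\mathbf{u}\in\RR{T-1}}E(\mathbf{u})=V_0(D)$, the minimizer being recovered forward ($u_1^{\ast}$ is the minimizer at stage $0$ with state $D$, then $u_2^{\ast}$ at stage $1$ with state $u_1^{\ast}$, and so on).

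First I would prove, by backward induction on $t$, that $V_t(u)=A_tu^2+B_t$ with $A_t>0$ and $B_t\ge 0$, where $A_{T-1}=c_{T-1}/\sigma_{T-1}^2$, $B_{T-1}=h_{T-1}/\sigma_{T-1}^2$, and for $t<T-1$,
\[
A_t=\frac{A_{t+1}c_t}{1+A_{t+1}\sigma_t^2},\qquad
B_t=\frac{A_{t+1}h_t}{1+A_{t+1}\sigma_t^2}+B_{t+1}.
\]
The induction step is immediate: given $V_{t+1}(v)=A_{t+1}v^2+B_{t+1}$, the inner problem minimizes $v\mapsto\sigma_t^{-2}(\varphi_t(u)-v)^2+A_{t+1}v^2+B_{t+1}$, a strictly convex quadratic in $v$ with leading coefficient $\sigma_t^{-2}+A_{t+1}>0$, whose unique minimizer is $v^{\ast}=\varphi_t(u)/(1+A_{t+1}\sigma_t^2)$; plugging back and using $\varphi_t(u)^2=c_tu^2+h_t$ yields the stated form and recursions. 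Because the minimizer at every stage is unique, Bellman's principle of optimality then forces $E$ to have a \emph{unique} minimizer over $\RR{T-1}$, namely the sequence $\mathbf{u}^{\ast}$ with $u_0^{\ast}=D$ and $u_t^{\ast}=\varphi_{t-1}(u_{t-1}^{\ast})/(1+A_t\sigma_{t-1}^2)$ for $t=1,\dots,T-1$.

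It then remains to put $A_t$ in closed form and match the statement. A guess-and-verify argument gives
\[
A_t=\frac{\prod_{l=t}^{T-1}c_l}{\sum_{k=t}^{T-1}\sigma_k^2\prod_{l=k+1}^{T-1}c_l},
\]
the terminal case being the empty product and the induction step a one-line identity after clearing denominators. Since $A_t\sigma_{t-1}^2$ then contributes precisely the missing $k=t-1$ summand $\sigma_{t-1}^2\prod_{l=t}^{T-1}c_l$ to the denominator, one obtains
\[
\frac{1}{1+A_t\sigma_{t-1}^2}=\frac{\sum_{k=t}^{T-1}\sigma_k^2\prod_{l=k+1}^{T-1}c_l}{\sum_{j=t-1}^{T-1}\sigma_j^2\prod_{l=j+1}^{T-1}c_l},
\]
which is exactly the ratio in the statement, so $u_t^{\ast}=(1+A_t\sigma_{t-1}^2)^{-1}\varphi_{t-1}(u_{t-1}^{\ast})$ coincides with the claimed recursion. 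Finally, $\mathbf{u}^{\ast}\in\RRR$: the ratio above lies strictly in $(0,1)$ (its numerator is the denominator minus the positive term $\sigma_{t-1}^2\prod_{l=t}^{T-1}c_l$), and $D>0$ gives $\varphi_0(D)>0$, whence inductively $u_t^{\ast}>0$ and $\varphi_{t-1}(u_{t-1}^{\ast})>0$; hence $0<u_t^{\ast}<\varphi_{t-1}(u_{t-1}^{\ast})$ for $t=1,\dots,T-1$, while $u_0^{\ast}=D$, $u_T^{\ast}=0$ and $\varphi_{T-1}(u_{T-1}^{\ast})\ge 0=u_T^{\ast}$, which is precisely \eqref{feasible_shifts}.

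The only genuinely laborious step is the bookkeeping with the nested sums and products for $A_t$ (and, relatedly, reading off $B_0=\sum_{t=0}^{T-1}A_th_t/c_t$, so that the optimal value $V_0(D)=A_0D^2+B_0$, fed into Lemma~\ref{lemma:privacy_amp}, reproduces the bound of Theorem~\ref{cor:minimized_E}); everything else is forced by strict convexity of the one-variable subproblems. I would emphasize that this route never differentiates $\varphi_t$, so it sidesteps both the nonsmoothness of $\varphi_t$ when $h_t=0$ and the nonconvexity of $\RRR$ --- obstacles that a first-order-conditions approach would instead have to meet head-on, e.g.\ by separately establishing coercivity of $E$ and uniqueness of its stationary point.
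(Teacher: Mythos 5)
Your proof is correct, and it takes a genuinely different route from the paper. The paper argues in two stages: it computes the first-order conditions of $E$, shows by reverse induction that they admit the recursively defined $\mathbf{u}^{\ast}$ as the \emph{unique} stationary point, and then separately establishes that a global minimizer exists by proving (again by reverse induction) that any minimizing sequence has bounded coordinates and invoking Bolzano--Weierstrass, so the minimizer must coincide with $\mathbf{u}^{\ast}$. You instead treat \eqref{problem:minimization_phi} as a finite-horizon dynamic program with fixed endpoints $u_0=D$, $u_T=0$: since $\varphi_t(\delta)^2=c_t\delta^2+h_t$, the cost-to-go stays of the form $V_t(u)=A_tu^2+B_t$ with $A_t>0$, each Bellman subproblem is a strictly convex one-dimensional quadratic with a unique minimizer, and existence plus uniqueness of the global minimizer of $E$ over $\RR{T-1}$ follow at once from the principle of optimality --- no coercivity argument and no differentiation of $\varphi_t$ is needed (a real advantage when $h_t=0$, where $E$ fails to be differentiable along $u_t=0$, and it also sidesteps the nonconvexity of $\RRR$ discussed in Appendix~\ref{app:convexity_E}). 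Your closed form $A_t=\Pi_{l=t}^{T-1}c_l\big/\sum_{k=t}^{T-1}\sigma_k^2\Pi_{l=k+1}^{T-1}c_l$ is easily verified by the stated one-line induction, the identity $(1+A_t\sigma_{t-1}^2)^{-1}$ reproduces exactly the ratio in the statement, and as a bonus $V_0(D)=A_0D^2+B_0$ with $B_0=\sum_{t}A_th_t/c_t$ gives the optimal value $E(\mathbf{u}^{\ast})$ directly, which is precisely the expression needed in the proof of Theorem~\ref{cor:minimized_E}; in the paper this evaluation is a separate computation. The only place where your write-up compresses a standard argument is the sentence deducing global uniqueness from stagewise uniqueness; spelling it out (any minimizer must have an optimal tail from every stage, and backward induction shows the tail map is single-valued, so the forward recursion from $u_0=D$ pins down all coordinates) would make it fully self-contained.
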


\begin{proof}
First, note that $\mathbf{u^{\ast}}\in\RRR$. This follows from the fact that 
    \begin{equation*}
        \left(\frac{\sum_{k=t}^{T-1}\Pi_{l=k+1}^{T-1}c_l\sigma_k^2}{\sum_{j=t-1}^{T-1}\Pi_{l=j+1}^{T-1}c_l\sigma_j^2}\right)\leq 1\quad (\forall t=1,\ldots,T).
    \end{equation*}
    
    We break the proof of $\mathbf{u^{\ast}}$ being a minimizer into two separate statements: $\mathbf{u^{\ast}}$ is the unique stationary point and $\mathbf{u^{\ast}}$ is the global minimizer.

\noindent\underline{$\mathbf{u^{\ast}}$ is the unique stationary point of $E$:} Computing the partial derivatives of $E$ and arranging the terms, we get that the stationary conditions for $\mathbf{u}$ are
    \begin{equation}\label{eqn:stationary_conditions_general}
        (c_t\sigma_{t-1}^2+\sigma_t^2)u_t-\sigma_{t-1}^2\varphi_{t}^{\prime}(u_t)u_{t+1}=\sigma_t^2\varphi_{t-1}(u_{t-1}),\quad\forall t=1,\ldots,T-1.
    \end{equation}
    We will show by reverse induction that if $\mathbf{u}$ satisfies the \eqref{eqn:stationary_conditions_general}, then $\mathbf{u}=\mathbf{u^{\ast}}$.

    Suppose $\mathbf{u}$ satisfies the stationary conditions. Then, for $t=T-1$, we have that
    \begin{align*}
        u_{T-1}&=\left(\frac{\sigma_{T-1}^2}{c_{T-1}\sigma_{T-2}^2+\sigma_{T-1}^2}\right)\varphi_{T-2}(u_{T-2})\\
        &=u^{\ast}_{T-1}
    \end{align*}

    As induction hypothesis (IH, from now on), suppose that
    \begin{equation*}
        u_{T-s}=u_{T-s}^{\ast}=\left(\frac{\sum_{k=T-s}^{T-1}\Pi_{l=k+1}^{T-1}c_l\sigma_k^2}{\sum_{j=T-(s+1)}^{T-1}\Pi_{l=j+1}^{T-1}c_l\sigma_j^2}\right)\varphi_{T-(s+1)}(u_{T-(s+1)}).
    \end{equation*}
    Plugging IH into \eqref{eqn:stationary_conditions_general} for $t=T-(s+1)$ and reordering terms, we get
    \begin{equation*}
        \sigma_{T-(s+1)}^2\left(\frac{\sum_{k=T-(s+2)}^{T-1}\Pi_{l=k+1}^{T-1}c_l\sigma_k^2}{\sum_{j=T-(s+1)}^{T-1}\Pi_{l=j+1}^{T-1}c_l\sigma_j^2}\right)u_{T-(s+1)}=\sigma_{T-(s+1)}^2\varphi_{T-(s+2)}(u_{T-(s+2)}).
    \end{equation*}
    Therefore,
    \begin{equation*}
        u_{T-(s+1)}=\left(\frac{\sum_{k=T-(s+1)}^{T-1}\Pi_{l=k+1}^{T-1}c_l\sigma_k^2}{\sum_{j=T-(s+2)}^{T-1}\Pi_{l=j+1}^{T-1}c_l\sigma_j^2}\right)\varphi_{T-(s+2)}(u_{T-(s+2)}),
    \end{equation*}
    completing the induction.

\noindent\underline{$\mathbf{u}^{\ast}$ is the global minimizer:} Since $E$ is continuous and nonnegative, there exists a sequence $(\mathbf{x}^{n})_{n\in\mathbb{N}}\subseteq\RR{T-1}$ such that
\begin{equation*}
            \lim_{n\to\infty} E (\mathbf{x}^n)=\inf_{\uu\in\RR{T-1}} E (\uu)>-\infty.
        \end{equation*}
We will prove by reverse induction that the sequences of coordinates of $(\mathbf{x}^{n})_{n\in\mathbb{N}}$ are bounded. To simplify notation, let $x^{n}_T=0$ and $x^{n}_0=D$ for all $n\in\mathbb{N}$. Since
\begin{equation*}
                 E (\mathbf{x}^n)=\sum_{t=1}^{T}\frac{(\varphi_{t-1}(x^{n}_{t-1})-x^{n}_{t})^2}{\sigma_{t-1}^2}\geq \frac{\varphi_{T-1}(x^{n}_{T-1})^2}{\sigma_{T-1}^2},
\end{equation*}
the sequence $(x^{n}_{T-1})_{n\in\mathbb{N}}$ must be bounded. Assume now that the sequence $(x_{T-s}^{n})_{n\in\mathbb{N}}$ is bounded for $s\geq 1$. Then, by the induction hypothesis and the fact that
\begin{equation*}
    E (\mathbf{x}^n)=\sum_{t=1}^{T}\frac{(\varphi_{t-1}(x^{n}_{t-1})-x^{n}_{t})^2}{\sigma_{t-1}^2}\geq \frac{\left(\varphi_{T-(s+1)}(x^{n}_{T-(s+1)})-x^n_{T-s}\right)^2}{\sigma_{T-(s+1)}^2},
\end{equation*}
$(x^{n}_{T-(s+1)})_{n\in\mathbb{N}}$ must also be bounded, which concludes the induction. Since the sequences of coordinates of $(\mathbf{x}^n)_{n\in\mathbb{N}}$ are all bounded, $(\mathbf{x}^n)_{n\in\mathbb{N}}$ is also bounded. Then, by the Bolzano-Weierstrass Theorem, there exists $\mathbf{x^{\ast}}\in\RR{T-1}$ that minimizes $E$, and by first-order conditions $\mathbf{x^{\ast}}=\mathbf{u^{\ast}}$.
\end{proof}

\begin{proof}[Proof of Theorem \ref{cor:minimized_E}]
    Note that the optimal solution $(u_1^{\ast},\ldots,u_{T-1}^{\ast})=\mathbf{u}^{\ast}\in\RRR$ determined in the above Lemma, allow to define a sequence of feasible shifts $a_t^{\ast}=\varphi_{t-1}(u_{t-1}^{\ast})-u_t^{\ast}$ (see Section \ref{section:shifts_optimization_problem}). Hence, by Lemma \ref{lemma:privacy_amp},
    \begin{align*}
        R_{\alpha}(X_T||X_T^{\prime})\leq \frac{\alpha}{2}\sum_{t=1}^{T}\frac{a_t^2}{\sigma_{t-1}^2}
        =\frac{\alpha}{2}E(\mathbf{u^{\ast}})
        =\frac{\alpha}{2}\left(\frac{\Pi_{k=0}^{T-1}c_k D^2}{\sum_{j=0}^{T-1}\sigma_j^2\Pi_{l=j+1}^{T-1}c_l}+\sum_{t=0}^{T-1}\frac{h_t\Pi_{k=t+1}^{T-1}c_k}{\sum_{j=t}^{T-1}\sigma_j^2\Pi_{l=j+1}^{T-1}c_l}\right),
    \end{align*}
    where the last equality follows by a simple evaluation of $E(\mathbf{u^{\ast}})$.
\end{proof}

\begin{remark}
    There are moduli of continuity of interesting problems which are not of the form $\varphi(\delta)=\sqrt{c\delta^2+h}$. One of them comes from the noisy Gradient Descent-Ascent applied to convex/concave saddle-point problems. It can be shown that if the potential and its gradient are $L$-Lipschitz and $\beta$-smooth, respectively, then the modulus of continuity of the iteration is $\varphi(\delta)=\sqrt{\delta^2 +\min\{2\eta L, \eta\beta\delta\}^2}$. We were not able to analyze this modulus. However, it should be noted that it is possible to compare it with that obtained from the convex and Lipschitz potential, which always dominates it. Therefore, one can prove a PABI bound for convex/concave saddle-point problems which is upper bounded by that of the convex Lipschitz case.
\end{remark}

\subsection{Consequences of Theorem \ref{cor:minimized_E}}

In this subsection we give explicit bounds for the different modulus of continuity presented in Section \ref{section:Preliminaries}. In all of the results below, we used constant stepsize, $\eta>0$, and constant variance of the noise added in each iteration, $\sigma^2>0$. The cases $\sigma^2=2\eta$ and $\sigma^2=\eta^2$ are of particular interest for sampling and differential privacy, respectively. Each result is obtained by replacing the respective $c_t$ and $h_t$ in the bound of Theorem \ref{cor:minimized_E}.

\subsubsection{Convex and Lipschitz Potentials, Convex and \texorpdfstring{$(p,M)$}{(p,M)}-Weakly Smooth Potentials}

Recall that when the potential, $f$, is convex and $L$-Lipschitz or convex and $(p,M)$-weakly smooth, their moduli of continuity associated to the gradient maps are of the form $\varphi(\delta)=\sqrt{\delta^2+h}$, where $h =(2\eta L)^2$ when the potential is Lipschitz, and $h\!=\!\!\left(2\eta^{\frac{1}{1-p}}\sqrt{\frac{1-p}{1+p}}(M/2)^{\frac{1}{1-p}}\right)^2$ when the potential is $(p,M)$-weakly smooth. The following Corollary establishes a bound for both cases.

\begin{corollary}\label{cor:renyi_weaklysmooth}
    Let $(X_t)_{t\in\NNo}$ and $(X_t^{\prime})_{t\in\NNo}$ be two projected noisy iterations with modulus of continuity $\varphi(\delta)=\sqrt{\delta^2+h}$, where $h>0$, which only differ in their initialization and whose domain, $\XX$, has diameter $D>0$, then
    \begin{equation*}
        R_{\alpha}(X_T||X_T^{\prime})\leq \frac{\alpha}{2\sigma^2}\left(\frac{D^2}{T}+h\sum_{t=1}^{T}\frac{1}{t}\right)\leq \frac{\alpha}{2\sigma^2}\left(\frac{D^2}{T}+h\ln(T\cdot e)\right).
    \end{equation*}
\end{corollary}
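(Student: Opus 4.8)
The plan is to obtain Corollary~\ref{cor:renyi_weaklysmooth} directly from Theorem~\ref{cor:minimized_E} by specializing the parameters. Concretely, I would take $c_t=1$ and $h_t=h$ for every $t\in\{0,\ldots,T-1\}$, which is admissible since Theorem~\ref{cor:minimized_E} only requires strictly positive $c_t$ and nonnegative $h_t$ (and here $h>0$); I would also set the per-iteration noise variances equal to the common value $\sigma_t^2=\sigma^2$. Under $c_t\equiv 1$, all products appearing in the bound of Theorem~\ref{cor:minimized_E}---namely $\Pi_{k=0}^{T-1}c_k$, $\Pi_{l=j+1}^{T-1}c_l$ and $\Pi_{k=t+1}^{T-1}c_k$---equal $1$, so the two weighted sums in the denominators collapse to $\sum_{j=0}^{T-1}\sigma^2=T\sigma^2$ and $\sum_{j=t}^{T-1}\sigma^2=(T-t)\sigma^2$ respectively.

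Substituting these simplifications into Theorem~\ref{cor:minimized_E} yields
\[
R_{\alpha}(X_T||X_T^{\prime})\le\frac{\alpha}{2\sigma^2}\left(\frac{D^2}{T}+h\sum_{t=0}^{T-1}\frac{1}{T-t}\right),
\]
and reindexing the last sum via $s=T-t$, which ranges over $\{1,\ldots,T\}$, identifies $\sum_{t=0}^{T-1}\frac{1}{T-t}$ with the harmonic number $\sum_{s=1}^{T}\tfrac1s$. This is exactly the first inequality of the Corollary. For the second inequality I would use the elementary estimate $\sum_{s=1}^{T}\tfrac1s\le 1+\int_1^{T}\tfrac{dx}{x}=1+\ln T=\ln(T\cdot e)$ and plug it into the previous display.

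I do not anticipate a genuine obstacle: the argument is pure bookkeeping combined with the standard harmonic-sum bound. The only points deserving a moment of care are verifying that the hypotheses of Theorem~\ref{cor:minimized_E} are satisfied (they are, with the $c_t$ strictly positive and $h>0$) and the reindexing identity $\sum_{t=0}^{T-1}\frac{1}{T-t}=\sum_{s=1}^{T}\frac1s$, which is what reconciles the $1/(T-t)$ form produced by the theorem with the $1/t$ form stated in the Corollary.
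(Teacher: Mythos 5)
Your proposal is correct and is exactly the paper's route: the paper obtains this corollary by substituting $c_t\equiv 1$, $h_t\equiv h$, and constant variance $\sigma_t^2=\sigma^2$ into Theorem~\ref{cor:minimized_E}, which after the same reindexing gives the harmonic sum, and the bound $\sum_{t=1}^{T}\frac{1}{t}\leq 1+\ln T=\ln(T\cdot e)$ yields the second inequality.
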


\subsubsection{Strongly Dissipative Potential}

Finally, we state the R\'enyi divergence bound when the potential, $f$, is $(\lambda,\kappa)$-strongly dissipative and $\beta$-smooth.

\begin{corollary}\label{cor:renyi_strdissip}
    Let $\mathcal{X}\subseteq \mathbb{R}^d$ with diameter $D>0$, and $(X_t)_{t\in\NNo}$, $(X_t^{\prime})_{t\in\NNo}$ be two projected noisy iterations with modulus of continuity $\varphi(\delta)=\sqrt{(1-2\eta\kappa+\eta^2\beta^2)\delta^2+2\eta\lambda}$, which only differ in their initialization. Then
    \begin{equation*}
    \begin{aligned}
        R_{\alpha}(X_T||X_T^{\prime})&\leq
        \frac{\alpha}{2\sigma^2}\left(\frac{D^2 c^T(1-c)}{(1-c^T)}+h\sum_{t=0}^{T-1}\frac{c^{t}}{\sum_{j=0}^{t}c^{j}}\right)\\
        &\leq \frac{\alpha}{2\sigma^2}\left(\frac{D^2 c^T(1-c)}{(1-c^T)}+h\ln\left(\left(\frac{1-c^T}{1-c}\right) e\right)\right),
        \end{aligned}
    \end{equation*}
    where $c=1-2\eta \kappa + \eta^2\beta^2$ and $h=2\eta \lambda$.
\end{corollary}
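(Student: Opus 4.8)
The plan is to specialize Theorem~\ref{cor:minimized_E} to the constant-parameter regime and then estimate a geometric analogue of a harmonic sum. Here every iteration uses the same modulus of continuity $\varphi(\delta)=\sqrt{c\delta^2+h}$, with $c=1-2\eta\kappa+\eta^2\beta^2$ and $h=2\eta\lambda$ (the modulus derived in Section~\ref{section:Preliminaries}), and the same per-step variance $\sigma^2$; that is, $c_t\equiv c$, $h_t\equiv h$ and $\sigma_t^2\equiv\sigma^2$. Substituting this into the bound of Theorem~\ref{cor:minimized_E} and evaluating the products and the finite geometric sums is bookkeeping: $\prod_{k=0}^{T-1}c_k=c^T$, $\prod_{l=j+1}^{T-1}c_l=c^{T-1-j}$, so $\sum_{j=0}^{T-1}\sigma_j^2\prod_{l=j+1}^{T-1}c_l=\sigma^2\sum_{i=0}^{T-1}c^i=\sigma^2\frac{1-c^T}{1-c}$ (for $c\neq 1$; the case $c=1$ is the limiting case and recovers Corollary~\ref{cor:renyi_weaklysmooth}). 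Hence the first term of Theorem~\ref{cor:minimized_E} becomes $\frac{D^2c^T(1-c)}{\sigma^2(1-c^T)}$. For the second term, the same computation gives $\sum_{j=t}^{T-1}\sigma_j^2\prod_{l=j+1}^{T-1}c_l=\sigma^2\sum_{i=0}^{T-1-t}c^i$ and $\prod_{k=t+1}^{T-1}c_k=c^{T-1-t}$, so after the index change $s=T-1-t$ the sum $\sum_{t=0}^{T-1}\frac{h_t\prod_{k=t+1}^{T-1}c_k}{\sum_{j=t}^{T-1}\sigma_j^2\prod_{l=j+1}^{T-1}c_l}$ collapses to $\frac{h}{\sigma^2}\sum_{s=0}^{T-1}\frac{c^s}{\sum_{j=0}^s c^j}$. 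Multiplying by $\frac{\alpha}{2}$ yields the first inequality in the statement.

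It remains to prove the logarithmic upper bound $\sum_{t=0}^{T-1}\frac{c^t}{\sum_{j=0}^t c^j}\leq\ln\!\left(\frac{1-c^T}{1-c}e\right)$. Set $S_t:=\sum_{j=0}^{t}c^j=\frac{1-c^{t+1}}{1-c}$ with $S_{-1}:=0$, so $c^t=S_t-S_{t-1}$ and the $t$-th summand is $\frac{S_t-S_{t-1}}{S_t}$. For $t\geq 1$, since $x\mapsto 1/x$ is decreasing on $(0,\infty)$ and $0<S_{t-1}<S_t$ (this uses $c>0$), $\frac{S_t-S_{t-1}}{S_t}\leq\int_{S_{t-1}}^{S_t}\frac{dx}{x}=\ln S_t-\ln S_{t-1}$; for $t=0$ the summand equals $\frac{S_0-0}{S_0}=1$. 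Telescoping, and using $S_0=1$ and $S_{T-1}=\frac{1-c^T}{1-c}$, gives $\sum_{t=0}^{T-1}\frac{S_t-S_{t-1}}{S_t}\leq 1+\ln S_{T-1}-\ln S_0=\ln\!\left(\frac{1-c^T}{1-c}e\right)$, which is the claim. Adding back the (unchanged) diameter term and multiplying by $\frac{\alpha}{2\sigma^2}$ finishes the proof.

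There is essentially no obstacle here: the first inequality is a pure specialization of Theorem~\ref{cor:minimized_E}, and the only mildly nontrivial point is the integral-comparison estimate for the sum $\sum_t c^t/S_t$, which is the geometric counterpart of the bound $\sum_{t=1}^{T}1/t\leq\ln(Te)$ appearing in Corollary~\ref{cor:renyi_weaklysmooth} and degenerates to it as $c\to 1$. The only thing to watch is the sign of $c$: the telescoping argument requires $S_t>0$ for all $t$, which holds as soon as $c>0$ (and $c\geq 0$ is forced anyway, since $\varphi(\delta)=\sqrt{c\delta^2+h}$ must be well defined for every $\delta\geq 0$); the degenerate case $c=0$ can be checked by direct substitution, where both sides reduce to equality.
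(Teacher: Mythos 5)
Your proposal is correct and follows essentially the same route as the paper: the first inequality is the direct specialization of Theorem~\ref{cor:minimized_E} to constant $c_t\equiv c$, $h_t\equiv h$, $\sigma_t^2\equiv\sigma^2$ (with the same geometric-sum bookkeeping), and your telescoping bound $\frac{S_t-S_{t-1}}{S_t}\leq\int_{S_{t-1}}^{S_t}\frac{dx}{x}$ is exactly the paper's integral estimation of $\sum_{t}c^t/\sum_{j\leq t}c^j$ by $1+\ln\bigl(\frac{1-c^T}{1-c}\bigr)$. (Note only that positivity of the partial sums holds automatically since $S_t\geq 1$, so your caveat about $c>0$ is already covered by the standing assumption of Theorem~\ref{cor:minimized_E}.)
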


The first inequality is a direct application of Theorem \ref{cor:minimized_E}, while the second is obtained through an integral estimation of the sum. Indeed, by observing that if a function, $F$, is nonnegative and decreasing, then $\sum_{t=1}^{T-1}c^tF\left(\sum_{t=0}^{t} c^t\right)\leq \int_{1}^{\sum_{t=0}^{T-1} c^t}F(x)dx$, we get 
\begin{equation*}
 \begin{aligned}
    1+\ln\left(\frac{1-c^{T+1}}{1-c^2}\right)&\leq\sum_{t=0}^{T-1}\frac{c^{t}}{\sum_{j=0}^{t}c^{j}}= 1+\sum_{t=1}^{T-1}\frac{c^{t}}{1+\sum_{j=1}^{t}c^{j}}\\
    &\leq 1+\int_{1}^{1+\sum_{t=1}^{T-1}c^{t}}\frac{1}{x}\,dx = 1+\ln\left(\frac{1-c^T}{1-c}\right),
     \end{aligned}
\end{equation*}
where we also include a lower bound to show that our integral estimate is nearly tight.

\begin{remark}
    We would like to point out that our general result allows to recover some settings where the PABI approach has been applied. More precisely, by setting $h\equiv 0$, $c$ and $\sigma$ constant in Theorem \ref{cor:minimized_E}, we can recover the contractive, nonexpansive and expansive results obtained in \citet{feldman2018privacy} and Remark A.4 of \citet{altschuler2022resolving}. This corresponds, respectively, to the cases where the potential $f$ is  $\kappa$-strongly convex and $\beta$-smooth, convex and $\beta$-smooth and nonconvex and $\beta$-smooth.
\end{remark}

\section{Mixing Time Bounds for the Projected Langevin Algorithm}\label{section:mixing_time}

In this section we study the mixing time in total variation distance for the Projected Langevin Algorithm (PLA). Since the result only relies on the bound obtained in Section \ref{section:PABI}, it also holds for (possibly nondifferentiable) potentials $f$ that are convex and $L$-Lipschitz, by replacing $p=0$ and $M=2L$. The problem at hand is not only of academic interest, finding applications in Bayesian inference. An interesting example involves sampling from potentials of the form $\exp(-\|Ax-b\|_2^2-\|B x\|_{p+1}^{p+1})$, where $0\leq p\leq 1$. This model arises from considering a prior distribution on hypotheses given by a linear transformation $B$ of an $\ell^{p+1}$-ball, and its posterior resulting from linear observations, determined by input data matrix $A$ and corresponding output vector $b$ (see \citealt{Chatterji:2020} for further discussions).

Recall that PLA is \eqref{eqn:langevin_dynamic}, with $\sigma=\sqrt{2\eta}$. That is

\begin{equation}\tag{PLA}\label{eqn:projected_langevin_algorithm_mixing}
    X_{t+1} = \Pi_{\XX}\left[X_t -\eta \nabla f(X_t)+\sqrt{2\eta}\xi_{t}\right],
\end{equation}
where $(\xi_t)_{t\in\mathbb{N}_0}\overset{i.i.d.}{\sim}\NN(0,I_{d\times d})$.

Notice that \eqref{eqn:projected_langevin_algorithm_mixing} is a homogeneous Markov chain (henceforth HMC), whose only involved map (apart from the noise addition) is the gradient mapping $\Phi=I-\eta\nabla f$.

\subsection{Convex and Weakly Smooth Case}
Based on our settings of interest, we assume $\Phi$ has a modulus of continuity  $\varphi(\delta)=\sqrt{\delta^2+h}$ and that $\sigma^2=2\eta$. From Corollary \ref{cor:renyi_weaklysmooth} and taking $\alpha=1$, we obtain the KL bound

\begin{equation}\label{eqn:kl_bound_mixing}
    \text{KL}(X_T||X_T^{\prime})\leq \underbrace{\frac{D^2}{4\eta T}}_{\Circled{I}}+\underbrace{\frac{h\ln(T\cdot e)}{4\eta}}_{\Circled{II}}.
\end{equation}

Observe that $\Circled{I}$ in \eqref{eqn:kl_bound_mixing} is exactly the bound obtained for the nonexpansive case in \citet[Proposition 2.10]{altschuler2022resolving}.
Consequently, the price required for utilizing moduli of continuity of the form $\varphi(\delta)=\sqrt{\delta^2+h}$ is encapsulated by the term added in $\Circled{II}$.

Replacing $h=\left(2\eta^{\frac{1}{1-p}}\sqrt{\frac{1-p}{1+p}}(M/2)^{\frac{1}{1-p}}\right)^2$ in equation \eqref{eqn:kl_bound_mixing}, we get
\begin{equation*}
    \text{KL}(X_T||X_T^{\prime})\leq \frac{D^2}{4\eta T}+\ln(T\cdot e)\left(\eta^{\frac{1+p}{1-p}}\left(\frac{1-p}{1+p}\right)(M/2)^{\frac{2}{1-p}}\right).
\end{equation*}

Since a given potential can be $(p,M)$-weakly smooth with multiple parameters, the bound also is satisfied with the infimum; that is
\begin{equation*}
    \text{KL}(X_T||X_T^{\prime})\leq \frac{D^2}{4\eta T}+\ln(T\cdot e)\cdot \inf\left\{\left(\eta^{\frac{1+p}{1-p}}\mbox{$\left(\frac{1-p}{1+p}\right)$}(M/2)^{\frac{2}{1-p}}\right)\!:\! f\text{ is }(p,M)\text{-weakly smooth}\right\}.
\end{equation*}
Notice that $M(p)=\inf\{M>0: f \mbox{ is $(p,M)$-weakly smooth}\}$ is a log-convex function with respect to $p$; therefore, the infimum above may have a nontrivial optimal choice of $p$. In this regard, it is interesting that we can automatically obtain this adaptivity in terms of $p$.

We use the obtained KL bound \eqref{eqn:kl_bound_mixing} to establish a new mixing time for the projected Langevin algorithm in the weakly smooth and Lipschitz convex cases.  We remark that, aside from a slightly more restrictive range of stepsize parameters, the mixing time bound is entirely analogous to that of the smooth convex case \citep{altschuler2022resolving}. Since it is of practical interest to set the stepsize sufficiently small, the stepsize restriction is not problematic. Note that the result only uses the modulus of continuity of the potential, so it also holds for the $L$-Lipschitz setting replacing $p=0$ and $M=2L$.

We will start providing a total variation mixing time (see Definition \ref{def:mixing_time}) to constant error $1/2$.

\begin{lemma}\label{thm:contraction_of_iterates_holder}
    Let $\XX\subseteq\RR{d}$ be a convex, compact set with diameter $D>0$ and suppose that $f:\XX\to\RR{}$ is a convex and $(p,M)$-weakly smooth function, with $p\in[0,1]$ and $M>0$. Let $(X_t)_{t\in\NNo}$ and $(X_t^{\prime})_{t\in\NNo}$ be two HMC generated by \eqref{eqn:projected_langevin_algorithm_mixing} that only differ in their initialization. Let 
    \begin{equation}\label{eqn:bd_eta}
    \textstyle\Theta=\left(\frac{M}{2}\right)^{\left(\frac{2}{1+p}\right)}\left[\left(\frac{1-p}{1+p}\right)\max\left\{16\lnn{D\left(\frac{M}{2}\right)^{\frac{1}{1+p}}e},27\right\}\right]^{\left(\frac{1-p}{1+p}\right)}.
    \end{equation}
    If $\frac{1}{\eta}\geq \Theta$ 
    and $T=\left\lceil\frac{D^2}{\eta}\right\rceil$, then $\normtv{\proba_{X_T}-\proba_{X_T^{\prime}}}\leq \frac{1}{2}.$
\end{lemma}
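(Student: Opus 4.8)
The plan is to turn the KL bound \eqref{eqn:kl_bound_mixing} into a total-variation bound via Pinsker's inequality and then choose the step $\eta$ small enough that both terms are controlled. First I would apply $\normtv{\proba_{X_T}-\proba_{X_T^{\prime}}}\leq\sqrt{\mathrm{KL}(X_T||X_T^{\prime})/2}$, so that it suffices to show $\mathrm{KL}(X_T||X_T^{\prime})\leq 1/2$; by \eqref{eqn:kl_bound_mixing} with $T=\lceil D^2/\eta\rceil$ it is enough to prove $\Circled{I}\leq 1/4$ and $\Circled{II}\leq 1/4$. The first term is immediate: with $T=\lceil D^2/\eta\rceil\geq D^2/\eta$ we get $\Circled{I}=\frac{D^2}{4\eta T}\leq \frac14$, with no constraint on $\eta$ beyond positivity.

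The substance is the bound on $\Circled{II}=\frac{h\ln(Te)}{4\eta}$ with $h=\big(2\eta^{1/(1-p)}\sqrt{\tfrac{1-p}{1+p}}(M/2)^{1/(1-p)}\big)^2$. Plugging in $h$ and $T\leq D^2/\eta+1$, I would reduce the required inequality to
\[
\eta^{\frac{1+p}{1-p}}\Big(\tfrac{1-p}{1+p}\Big)(M/2)^{\frac{2}{1-p}}\,\ln\!\big((D^2/\eta+1)e\big)\ \leq\ \tfrac14 .
\]
Writing $s=1/\eta$ and $q=\frac{1+p}{1-p}\geq 1$, this is of the form $s^{-q}\cdot(\text{polylog in }s)\leq \text{const}$, which holds once $s$ exceeds a threshold depending on $M,p,D$; the explicit threshold is exactly $\Theta$ in \eqref{eqn:bd_eta}. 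The routine but slightly delicate part is bounding the logarithm: I would use $\ln((D^2/\eta+1)e)\leq \ln(D^2/\eta)+2 \leq \ln(D^2 s)+2$ (for $\eta\leq 1$, which is implied by $s\geq\Theta$ after checking $\Theta\geq 1$), and then absorb the $\ln s$ against one power of $s^{q}$ using an elementary estimate such as $\ln s\leq s^{q}\cdot(\text{something small})$ once $s$ is large. The two entries in the $\max\{\cdot\}$ inside $\Theta$ — the $16\ln(D(M/2)^{1/(1+p)}e)$ term and the universal constant $27$ — are precisely what make this absorption go through uniformly over $p\in[0,1]$, handling respectively the case where $D(M/2)^{1/(1+p)}$ is large and the case where it is small (so that the bare constants in $\ln(\cdot e)$ dominate). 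I would verify the inequality at $s=\Theta$ by substitution and note monotonicity in $s$ to conclude it for all $s\geq\Theta$.

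The main obstacle is purely the calculus bookkeeping of exponents: the factor $(1-p)/(1+p)$ vanishes as $p\to1$ while the exponent $1/(1-p)$ blows up, so one must check that the product $h/\eta$ and the threshold $\Theta$ remain well-behaved across the whole interval $p\in[0,1]$, including the boundary cases. At $p=1$ one should read $h$ via the limiting convention ($h$ becomes essentially the smooth-case contribution, and indeed $\Circled{II}\to 0$), and at $p=0$ one recovers $h=(2\eta L)^2$ with $M=2L$, matching the Lipschitz statement; I would remark that the formula for $\Theta$ specializes correctly in both limits. No single step is deep — the only care needed is to keep the dependence on $p$ explicit so that the claimed polylog-in-$D$, polynomial-in-$M$ behaviour of $\Theta$ is transparent, and to make sure the crude bounds on $\ln(Te)$ are loose enough to be absorbed yet tight enough to give the stated $\Theta$.
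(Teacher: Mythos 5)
Your route is the same as the paper's: Pinsker's inequality reduces the claim to $\text{KL}(X_T||X_T^{\prime})\le 1/2$, which via \eqref{eqn:kl_bound_mixing} is split into $\Circled{I}\le 1/4$ and $\Circled{II}\le 1/4$, with $T=\lceil D^2/\eta\rceil$ disposing of the first term. The gap is in the second term: the entire content of the lemma is that the \emph{explicit} threshold \eqref{eqn:bd_eta}, with its constants $16$ and $27$, suffices uniformly over $p\in[0,1]$, and your proposal only asserts this (``the explicit threshold is exactly $\Theta$'', ``verify the inequality at $s=\Theta$ by substitution''), without supplying the mechanism that produces those constants. The paper's mechanism is a change of variables: with $\Tilde{\eta}=\eta^{\frac{1+p}{1-p}}$, $\Tilde{M}=\sqrt{\tfrac{1-p}{1+p}}(M/2)^{\frac{1}{1-p}}$ and $x=1/(\Tilde{\eta}\Tilde{M}^2)$, the requirement becomes $2\ln\big(\sqrt{2e}\,D\Tilde{M}^{\frac{1-p}{1+p}}\big)\le \frac{x}{4}-\frac{1-p}{1+p}\ln x$ (this is \eqref{eqn:determining_eta_holder}); since $\frac{1-p}{1+p}\le 1$ the right side dominates $F(x)=\frac{x}{4}-\ln x$ once $x\ge 1$, and $F(x)\ge \frac{x}{8}$ for $x\ge 27$, so $x\ge\max\big\{16\ln\big(\sqrt{2e}D\Tilde{M}^{\frac{1-p}{1+p}}\big),27\big\}$ suffices; translating back to $\eta$ and using $\ln\big(\sqrt{2e}\,(\tfrac{1-p}{1+p})^{\frac{1-p}{2(1+p)}}\big)\le 1$ yields exactly \eqref{eqn:bd_eta}. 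Working directly in $s=1/\eta$ as you suggest can be made to work, but the uniform-in-$p$ absorption of the logarithm is precisely the bookkeeping you defer, and it is where the lemma lives.

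Moreover, the one concrete device you do offer for handling the logarithm is incorrect: $\Theta\ge 1$ fails in general (at $p=1$ the bracket in \eqref{eqn:bd_eta} carries exponent $0$, so $\Theta=M/2$, which can be arbitrarily small; likewise for small $M$ and $D$ at any $p$), and $\eta\le 1$ is not the relevant condition anyway. What is needed to replace $T$ inside $\ln(T\cdot e)$ by a multiple of $D^2/\eta$ is a bound of the form $\eta\lesssim D^2$ --- the paper makes this explicit by substituting $2D^2/\eta$ for $T$ ``when $\eta\le D^2$''. As written, your estimate $\ln((D^2/\eta+1)e)\le\ln(D^2/\eta)+2$ can fail when $\eta$ is large relative to $D^2$, so you would need either to assume $\eta\le D^2$ as the paper does, or to treat the regime $T=1$ separately.
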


\begin{proof}
    To simplify notation, let us call $\Tilde{\eta}:=\eta^{\frac{1+p}{1-p}}$ and $\Tilde{M}:=\sqrt{\frac{1-p}{1+p}}(M/2)^{\frac{1}{1-p}}$.

    By equation \eqref{eqn:kl_bound_mixing} and Proposition \ref{prop:pinskers_inequality} it is enough to find $T$ and $\eta$ such that $\frac{D^2}{4\eta T}+\Tilde{\eta}\Tilde{M}^2\lnn{T\cdot e}\leq 1/2$. One way to get this bound is to derive individually:
    \begin{align*}
        \frac{D^2}{4\eta T}&\leq \frac{1}{4} \qquad
        \text{ and}\qquad
        \Tilde{\eta} \Tilde{M}^2\ln(T\cdot e)\leq \frac{1}{4}.
    \end{align*}
    For the first inequality it is enough to take $T=\left\lceil\frac{D^2}{\eta}\right\rceil$. For the second inequality, we plug $\frac{2D^2}{\eta}$ in the place of $T$, which is a sufficient condition for the inequality when $\eta\leq D^2$. Then, after some simple algebraic manipulation,  we get that
\begin{equation}\label{eqn:determining_eta_holder}
        2\lnn{\sqrt{2e}D\Tilde{M}^{\frac{1-p}{1+p}}}\leq \frac{1}{4\Tilde{\eta}\Tilde{M}^2}-\left(\frac{1-p}{1+p}\right)\lnn{\frac{1}{\Tilde{\eta}\Tilde{M}^2}}.
    \end{equation}
    The right side of \eqref{eqn:determining_eta_holder} can be written as a function of $1/(\Tilde{\eta}\Tilde{M}^2)$, namely $F_{p}(x)=\frac{x}{4}-\left(\frac{1-p}{1+p}\right)\ln(x)$. Since $p\in [0,1]$, $F_p$ can be lower bounded by $F(x)=\frac{x}{4}-\lnn{x}$ when $\eta^{-1}\geq \left(\frac{1-p}{1+p}\right)^{\frac{1-p}{1+p}}\left(\frac{M}{2}\right)^{\frac{2}{1+p}}$. It can be shown that $F(x)\geq\frac{x}{8}$ when $x\geq 27$. Thus, a sufficient condition for \eqref{eqn:determining_eta_holder} to hold is to ask for $2\lnn{\sqrt{2e}D\Tilde{M}^{\frac{1-p}{1+p}}}\leq\frac{1}{8\Tilde{\eta}\Tilde{M}^2}$,
    subject to $\frac{1}{\Tilde{\eta}\Tilde{M}^2}\geq 27$. Therefore, a sufficient condition for \eqref{eqn:determining_eta_holder} to hold is:
    \begin{equation*}
        \frac{1}{\Tilde{\eta}}\geq \Tilde{M}^2\max\left\{16\lnn{\sqrt{2e}D\Tilde{M}^{\frac{1-p}{1+p}}},27\right\},
    \end{equation*}

    which is equivalent to
    \begin{equation*}
        \frac{1}{\eta}\geq \left(\frac{M}{2}\right)^{\left(\frac{2}{1+p}\right)}\left[\left(\frac{1-p}{1+p}\right)\max\left\{16\ln\left(\sqrt{2e}D\left(\frac{1-p}{1+p}\right)^{\frac{1-p}{2(1+p)}}\left(\frac{M}{2}\right)^{\frac{1}{1+p}}\right),27\right\}\right]^{\left(\frac{1-p}{1+p}\right)}.
    \end{equation*}
    Finally, noting that
    \begin{equation*}
        \lnn{\sqrt{2e}\left(\frac{1-p}{1+p}\right)^{\frac{1-p}{2(1+p)}}}\leq 1\qquad (\forall p\in [0,1]),
    \end{equation*}
    we obtain the result.
\end{proof}

We highlight the two extreme cases: $p=0$ and $p=1$. When $p=0$ we are in the Lipschitz case and the restriction over $\eta$ boils down to $1/\eta \geq (M/2)^{2}\max\left\{16\lnn{D(M/2)e},27\right\}$. On the other hand, when $p=1$ (which can be obtained through a limit) we are in the smooth case and we recover the restriction $1/\eta\geq M/2$ which makes the gradient mapping nonexpansive.

    A direct consequence of Lemma \ref{thm:contraction_of_iterates_holder} is that for weakly smooth potentials:
    \begin{equation}\label{eqn:mixing_up_to_constant_error}
        T_{mix,TV}(1/2)\leq\left\lceil\frac{D^2}{\eta}\right\rceil,
    \end{equation}
    when its restriction over $\eta$ is satisfied. This can be proved by letting $X_0^{\prime}$ follow the stationary distribution of \eqref{eqn:projected_langevin_algorithm_mixing} (for a proof of existence of stationary distributions of the HMC defined by  \ref{eqn:projected_langevin_algorithm_mixing} in the nondifferentiable case see Appendix \ref{appendix:existence_of_stationary_distributions}). It has been established in \citet[Theorem 3.2]{altschuler2022resolving} that \eqref{eqn:mixing_up_to_constant_error} is tight up to constants by a constant potential, 
    and thus tightness  applies to the above case as well.

Using Lemma \ref{thm:contraction_of_iterates_holder} and a well-known boosting argument (Proposition \ref{prop:appendix_boosting}), we can convert a constant error in total variation into an arbitrary one at a polylogarithmic cost in the accuracy.

\begin{theorem}[Mixing for weakly smooth functions]\label{thm:mixing_for_weakly_smooth}
    Let $\XX\subseteq\RR{d}$ be a convex, compact set with diameter $D>0$ and suppose that $f:\XX\to\RR{}$ is a convex and $(p,M)$-weakly smooth function. If  $1/\eta\geq \Theta$, where $\Theta $ is as in \eqref{eqn:bd_eta}, then, for all $\eps>0$, 
    $$T_{mix,TV}(\eps)\leq \left\lceil\frac{D^2}{\eta}\right\rceil\cdot\lceil\log_2(1/\eps)\rceil.$$
\end{theorem}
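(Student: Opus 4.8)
The plan is to combine the constant-error mixing bound from Lemma~\ref{thm:contraction_of_iterates_holder} with a standard boosting argument for total variation distance. Concretely, suppose $\pi_\infty$ denotes the (unique) stationary distribution of the homogeneous Markov chain \eqref{eqn:projected_langevin_algorithm_mixing}, whose existence is guaranteed (see Appendix~\ref{appendix:existence_of_stationary_distributions} for the nondifferentiable case). By Lemma~\ref{thm:contraction_of_iterates_holder}, taking $X_0'\sim\pi_\infty$ (so that $X_t'\sim\pi_\infty$ for all $t$) and any initialization $X_0$, after $T_0:=\lceil D^2/\eta\rceil$ steps we have $\normtv{\proba_{X_{T_0}}-\pi_\infty}\le 1/2$. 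The point is that this holds \emph{uniformly over all initial distributions} $\mu_0\in\PP(\XX)$, which is precisely what makes the boosting work.

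First I would invoke the boosting proposition (Proposition~\ref{prop:appendix_boosting}): for a homogeneous Markov chain, if the total variation distance to stationarity drops below $1/2$ after $T_0$ steps from \emph{every} starting point, then after $k T_0$ steps it is at most $2^{-k}$. The argument behind this is the submultiplicativity of the ``Dobrushin-type'' contraction coefficient: let $c(m):=\sup_{\mu_0,\nu_0}\normtv{\mu_0 P^m-\nu_0 P^m}$, where $P$ is the transition kernel. The semigroup property $P^{m+n}=P^m P^n$ together with convexity of total variation gives $c(m+n)\le c(m)c(n)$, hence $c(kT_0)\le c(T_0)^k\le (1/2)^k$. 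Applying this with $\nu_0=\pi_\infty$ yields $\normtv{\proba_{X_{kT_0}}-\pi_\infty}\le 2^{-k}$ for any initialization.

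Then I would simply choose $k=\lceil\log_2(1/\eps)\rceil$, so that $2^{-k}\le\eps$, giving $T_{mix,TV}(\eps)\le kT_0=\lceil D^2/\eta\rceil\cdot\lceil\log_2(1/\eps)\rceil$, which is exactly the claimed bound. The hypothesis $1/\eta\ge\Theta$ is inherited verbatim from Lemma~\ref{thm:contraction_of_iterates_holder} and is the only constraint needed; note also that since the KL (hence, via Pinsker, the TV) bound in \eqref{eqn:kl_bound_mixing} only depends on $f$ through the modulus of continuity $\varphi(\delta)=\sqrt{\delta^2+h}$, the same statement covers the convex $L$-Lipschitz case by setting $p=0$, $M=2L$.

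I do not anticipate a genuine obstacle here, since the heavy lifting — the contraction estimate under the restricted stepsize — is already done in Lemma~\ref{thm:contraction_of_iterates_holder}. The only point requiring mild care is the \emph{uniformity} of the constant-error bound over all initializations: one must observe that the proof of Lemma~\ref{thm:contraction_of_iterates_holder} (via \eqref{eqn:kl_bound_mixing}, which starts the PABI induction from $R_\alpha^{(D)}(X_0\|X_0')=0$, valid because $\XX$ has diameter $D$) never used any structure of $\mu_0,\mu_0'$ beyond being supported on $\XX$. Granting that, plus the existence of a stationary distribution so that ``distance between two trajectories'' specializes to ``distance to $\pi_\infty$,'' the submultiplicativity/boosting step is purely routine.
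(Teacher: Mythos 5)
Your proposal is correct and follows essentially the same route as the paper: Lemma~\ref{thm:contraction_of_iterates_holder} gives $\bar d\bigl(\lceil D^2/\eta\rceil\bigr)\le 1/2$ uniformly over initializations, and Proposition~\ref{prop:appendix_boosting} (the standard TV boosting via submultiplicativity of $\bar d$) then yields the stated mixing time. One cosmetic remark: submultiplicativity of the contraction coefficient $\bar d$ is not a consequence of convexity of total variation alone (that only gives monotonicity) but of the coupling characterization of TV, which is exactly what the cited boosting proposition of \citet{levin2017markov} encapsulates, so your appeal to it closes that step.
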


    Similarly to \citep{altschuler2022privacy}, when $f=\sum_{i=1}^{n}f_i$ and each $f_i$ is $(p,M)$-weakly smooth, we can replace the use of gradients in \eqref{eqn:projected_langevin_algorithm_mixing} by stochastic (formed through minibatches) gradients. This change yields the same result as in Theorem \ref{thm:mixing_for_weakly_smooth}.

\subsection{Smooth and Strongly Dissipative Case}

We can also establish a mixing time bound for the $\beta$-smooth and $(\lambda,\kappa)$-strongly dissipative case. The analysis is analogous to the one presented before, only that instead of using the Pinsker inequality to derive a $1/2$ bound to which the boosting argument is applied, we use the Bretagnolle-Huber inequality (Proposition \ref{lemma:Bretagnole-Huber_inequality}), which is valid in a broader regime. The boosting argument is still applicable, but at the cost of a worst dependence on the parameters.

\begin{theorem}\label{thm:mixing2}
    Let $\XX\subseteq\RR{d}$ be a convex, compact set with diameter $D>0$ and suppose that $f:\XX\to\RR{}$ is a $(\lambda,\kappa)$-strongly dissipative and $\beta$-smooth function. Let $c=1-2\eta\kappa +\eta^2\beta^2$. Then, for all $\eps>0$, $$T_{mix,TV}(\eps)\leq \left\lceil \log_{1/c}\left(1+\frac{D^2(1-c)}{4\eta}\right)\right\rceil\cdot \left\lceil 2e\ln(2)\left(\frac{e}{1-c}\right)^{\lambda/2}\log_2(1/\eps)\right\rceil.$$
\end{theorem}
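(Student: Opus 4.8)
The plan is to mirror the structure used for Theorem~\ref{thm:mixing_for_weakly_smooth}, but replacing Pinsker's inequality with the Bretagnolle--Huber inequality (Proposition~\ref{lemma:Bretagnole-Huber_inequality}) at the single-error step, and then boosting. First I would instantiate Corollary~\ref{cor:renyi_strdissip} at $\alpha=1$ with $\sigma^2=2\eta$ to obtain the KL bound between two HMC trajectories that differ only in initialization, namely
\begin{equation*}
\text{KL}(X_T\|X_T^{\prime})\leq \frac{1}{4\eta}\left(\frac{D^2 c^T(1-c)}{1-c^T}+2\eta\lambda\ln\!\left(\left(\frac{1-c^T}{1-c}\right)e\right)\right).
\end{equation*}
Bounding $\frac{1-c^T}{1-c}\leq\frac{1}{1-c}$ in the logarithmic term and $\frac{c^T(1-c)}{1-c^T}\leq\frac{c^T(1-c)}{1-c}=c^T$... more carefully, I'd keep $\frac{c^T(1-c)}{1-c^T}$ as is; the point is that the first summand decays geometrically in $T$ while the second is bounded by a constant depending only on $\lambda$ and $c$.

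Next, following the proof of Lemma~\ref{thm:contraction_of_iterates_holder}, I would choose $T_0=\left\lceil\log_{1/c}\left(1+\frac{D^2(1-c)}{4\eta}\right)\right\rceil$, which is exactly the threshold making the geometric term $\frac{D^2 c^{T_0}(1-c)}{4\eta(1-c^{T_0})}$ at most a constant of order one (this choice forces $c^{T_0}\leq \frac{1}{1+D^2(1-c)/(4\eta)}$, hence $\frac{D^2(1-c)c^{T_0}}{4\eta(1-c^{T_0})}\leq 1$, up to the precise constants I'd track). The residual KL is then at most $1+\frac{\lambda}{2}\ln\!\left(\frac{e}{1-c}\right)$, so $\text{KL}(X_{T_0}\|X_{T_0}^{\prime})=O\!\left(1+\lambda\ln\frac{e}{1-c}\right)$. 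Applying Bretagnolle--Huber, $\normtv{\proba_{X_{T_0}}-\proba_{X_{T_0}^{\prime}}}\leq \sqrt{1-e^{-\text{KL}}}\leq 1-\tfrac12 e^{-\text{KL}}$, which gives a total-variation contraction by a factor bounded away from $1$: one step of $T_0$ iterations reduces the TV distance to at most $1-\tfrac12 e^{-\text{KL}(X_{T_0}\|X_{T_0}^{\prime})}$. Since $e^{-\text{KL}}\gtrsim e^{-1}\left(\frac{1-c}{e}\right)^{\lambda/2}$, the contraction gap is of order $\left(\frac{1-c}{e}\right)^{\lambda/2}$, so roughly $\left(\frac{e}{1-c}\right)^{\lambda/2}$ blocks suffice to halve the TV distance. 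This is where the extra factor $\left(\frac{e}{1-c}\right)^{\lambda/2}$ (and the explicit $2e\ln 2$ constant) in the statement enters, compared to the cleaner weakly smooth bound.

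Finally I would invoke the boosting argument (Proposition~\ref{prop:appendix_boosting}): after reducing TV to a constant, say $\le 1/2$, via $\big\lceil 2e\ln(2)\left(\frac{e}{1-c}\right)^{\lambda/2}\big\rceil$ blocks of $T_0$ steps each, the semigroup/data-processing structure lets us iterate to reach error $\eps$ at a multiplicative $\lceil\log_2(1/\eps)\rceil$ cost, and as usual one takes $X_0^{\prime}$ distributed according to the stationary distribution of \eqref{eqn:projected_langevin_algorithm_mixing} (whose existence in this smooth dissipative regime is standard, the map being a contraction on a compact convex set in the relevant sense). Multiplying the per-block length $T_0$ by the number of blocks and by $\lceil\log_2(1/\eps)\rceil$ yields the claimed bound $T_{mix,TV}(\eps)\leq \left\lceil\log_{1/c}\left(1+\frac{D^2(1-c)}{4\eta}\right)\right\rceil\cdot\left\lceil 2e\ln(2)\left(\frac{e}{1-c}\right)^{\lambda/2}\log_2(1/\eps)\right\rceil$.

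\textbf{Main obstacle.} The delicate point is the bookkeeping in the second paragraph: pinning down the constants so that the chosen $T_0$ genuinely makes the geometric term at most the right constant, and then verifying that Bretagnolle--Huber indeed converts an $O(\lambda\ln\frac{e}{1-c})$ KL bound into a contraction gap of exactly order $\left(\frac{1-c}{e}\right)^{\lambda/2}$ with the clean constant $2e\ln 2$ — this requires the elementary inequality $\sqrt{1-e^{-x}}\le 1-\tfrac12 e^{-x}$ and a careful estimate of how many contractions-by-$(1-\tfrac12 e^{-\text{KL}})$ are needed to reach $1/2$, using $(1-u)^{1/u}\le e^{-1}$. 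Everything else — the KL instantiation, the integral estimate already recorded after Corollary~\ref{cor:renyi_strdissip}, and the boosting step — is routine given the earlier results.
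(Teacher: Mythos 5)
Your proposal follows essentially the same route as the paper: instantiate Corollary~\ref{cor:renyi_strdissip} at $\alpha=1$, $\sigma^2=2\eta$; choose $T^{\ast}=\left\lceil\log_{1/c}\left(1+\tfrac{D^2(1-c)}{4\eta}\right)\right\rceil$ so that $\mathrm{KL}(X_{T^{\ast}}\|X_{T^{\ast}}^{\prime})\le 1+\tfrac{\lambda}{2}\ln\left(\tfrac{e}{1-c}\right)$; apply Bretagnolle--Huber to get $\normtv{\proba_{X_{T^{\ast}}}-\proba_{X_{T^{\ast}}^{\prime}}}\le\gamma=\sqrt{1-e^{-\mathrm{KL}}}$; and use $\ln(1/\gamma)\ge\tfrac12 e^{-\mathrm{KL}}$ (your inequality $\sqrt{1-e^{-x}}\le 1-\tfrac12 e^{-x}$ is the same estimate in disguise), together with submultiplicativity of $\Bar{d}$. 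The only deviation is the final bookkeeping, and it matters for the stated constants: the paper does not first reduce to error $1/2$ and then invoke Proposition~\ref{prop:appendix_boosting}; it directly takes $R=\lceil\log_{1/\gamma}(1/\eps)\rceil$ blocks of length $T^{\ast}$ so that $\Bar{d}(T^{\ast}R)\le\gamma^{R}\le\eps$, which is what places $\log_2(1/\eps)$ inside a single ceiling. Your two-stage version yields $T^{\ast}\cdot\left\lceil 2e\ln(2)\left(\tfrac{e}{1-c}\right)^{\lambda/2}\right\rceil\cdot\lceil\log_2(1/\eps)\rceil$, and a product of ceilings can strictly exceed the ceiling of the product, so this does not literally imply the bound as stated (it is the same up to a small constant). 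To recover the theorem verbatim, drop the intermediate constant-error step and bound directly the number of $T^{\ast}$-blocks needed for $\gamma^{R}\le\eps$; everything else in your argument coincides with the paper's proof.
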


\begin{proof} By Corollary \ref{cor:renyi_strdissip}, with $\alpha=1$ and $\sigma^2=2\eta$,
\begin{equation*}
    \text{KL}(X_T||X_T^{\prime})\leq \frac{D^2}{4\eta}\cdot \frac{c^T(1-c)}{1-c^T}+\frac{\lambda}{2}\ln\left(\left(\frac{1-c^T}{1-c}\right)e\right).
\end{equation*}

Taking $T^{\ast}=\left\lceil\log_{1/c}\left(1+\frac{D^2(1-c)}{4\eta}\right)\right\rceil$, we get
\begin{equation}\label{ineq:KL_dissip_Tast}
    \text{KL}(X_{T^{\ast}}||X_{T^{\ast}}^{\prime})\leq 1+\frac{\lambda}{2}\ln\left(\frac{e}{1-c}\right).
\end{equation}

Using Proposition \ref{lemma:Bretagnole-Huber_inequality}, we convert this KL inequality into a total variation one,
\begin{equation*}
    \normtv{\mathbb{P}_{X_{T^{\ast}}}-\mathbb{P}_{X_{T^{\ast}}^{\prime}}}\leq \sqrt{1-\expo{-\text{KL}(X_{T^{\ast}}||X_{T^{\ast}}^{\prime})}}=:\gamma
\end{equation*}

Let $\eps>0$. In order to get $\gamma^R\leq \eps$, we need $R\geq \left\lceil \log_{1/\gamma}(1/\eps)\right\rceil$.

Notice that
\begin{equation}\label{eqn:gamma_usingbase2}
    \log_{1/\gamma}(1/\eps)=\frac{\ln(2)\log_2(1/\eps)}{\ln(1/\gamma)}
\end{equation}
and that by the convexity of $-\ln(1-x)$
\begin{align}
    \ln(1/\gamma)&=-\frac{1}{2}\ln\left(1-e^{-\text{KL}(X_{T^{\ast}}||X_{T^{\ast}}^{\prime})}\right)\notag\\
    &\geq \frac{1}{2}e^{-\text{KL}(X_{T^{\ast}}||X_{T^{\ast}}^{\prime})}\label{ineq:log_kl}.
\end{align}

Hence, by a boosting argument of the total variation \citep[Lemma 4.11]{levin2017markov}, we get $T_{mix,TV}(\eps)\leq T^{\ast}\cdot R$. Finally, using the definition of $T^{\ast}$, \eqref{ineq:KL_dissip_Tast}, \eqref{eqn:gamma_usingbase2} and \eqref{ineq:log_kl}, we conclude
\begin{equation*}
    T_{mix,TV}(\eps)\leq \left\lceil \log_{1/c}\left(1+\frac{D^2(1-c)}{4\eta}\right)\right\rceil\cdot \left\lceil 2e\ln(2)\left(\frac{e}{1-c}\right)^{\lambda/2}\log_2(1/\eps)\right\rceil.
\end{equation*}
\end{proof}


\section{Privacy Analysis of Noisy SGD}\label{section:privacy_analysis}

In this section, we combine the analysis from \citet{altschuler2022privacy} with the PABI bounds from Section \ref{section:PABI} to derive new privacy bounds for noisy SGD's last iteration.

We remind the reader of the definition of differential privacy. We denote a data set by an $n$-tuple $S=(z_1,\ldots,z_n)\in{\ZZ}^n$, where $\ZZ$ is the data space. First, we say that two data sets $S,S'$ are neighbours, denoted by $S\simeq S'$, if they only differ in one of their entries.

\begin{definition}[Differential Privacy] A randomized algorithm ${\AAA}:{\ZZ}^n\mapsto{\XX}$ is $(\varepsilon,\delta)$-differentially private (DP) if for every pair of data sets $S\simeq S'$, and any event $O\subseteq \XX$,
\[ \mathbb{P}[{\AAA}(S)\in O] \leq \exp(\varepsilon) \mathbb{P}[{\AAA}(S')\in O]+\delta. \]
\end{definition}

As mentioned above, we will analyze the privacy curve (i.e.~a bound of the R\'enyi divergence of two outputs of the same algorithm when executed on two neighboring data sets) of noisy SGD, a data dependent algorithm that, given a data set $S=(z_1,\ldots,z_n)$ and an initializaton $X_0\in\XX$:
\begin{enumerate}
    \item To update its state at time $t\in\{1,\ldots,T\}$:
    \begin{enumerate}
        \item[$(i)$] Using Poisson sampling, randomly chooses a minibatch $B_t\subseteq\{1,\ldots,n\}$ of expected size $b$ (i.e. each data point $z_i$ has probability $b/n$ of being in $B_t$).
        \item[$(ii)$] Given $\xi_t \sim\NN(0,\eta^2\sigma^2 I_{d\times d})$, computes 
            $X_{t+1}=\Pi_{\XX}\left[X_t-\frac{\eta}{b}\sum_{i\in B_t}\nabla f(X_t,z_i)+\xi_t\right].$
    \end{enumerate}
    \item Return $X_T$.
\end{enumerate}

 We study the case where the loss functions $f(\cdot,z)$ are convex, $L$-Lipschitz and $(p,M)$-weakly smooth for every $z\in{\ZZ}$, where $p\in [0,1]$, $M>0$ and ${\ZZ}$ is a data space.

\begin{theorem}\label{thm:privacy_analysis_nsgd}
    Let $\XX\subseteq\RR{d}$ be a convex and compact set with diameter $D>0$. For any number of iterations $T>\overline{T}=\left\lceil \frac{Dn}{4\eta L}\right\rceil$, datasize $n\in\mathbb{N}$, expected batch size $b\leq n$, stepsize $\eta>0$, initialization $x_0\in\XX$ and noise parameter $\sigma>8\sqrt{2}L/b$, noisy SGD applied to convex, $L$-Lipschitz and $(p,M)$-weakly smooth losses satisfies $(\alpha,\eps)$-RDP for $1<\alpha\leq\alpha^{\ast}\left(\frac{b}{n},\frac{b\sigma}{2\sqrt{2}L}\right)$ and:
    \begin{equation*}
        \eps\leq
        \frac{\alpha}{\sigma^2}\left(\frac{16 L^2 \overline{T}}{n^2}+\frac{D^2}{\eta^2\overline{T}}+4\eta^{\frac{2p}{1-p}}\left(\frac{1-p}{1+p}\right)\left(\frac{M}{2}\right)^{\frac{2}{1-p}}\ln\left(\overline{T}\cdot e\right)\right)
    \end{equation*}
\end{theorem}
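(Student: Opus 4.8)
The statement is a privacy curve bound for noisy SGD that caps after $\overline{T} = \lceil Dn/(4\eta L)\rceil$ iterations, combining two regimes: a ``group-privacy before mixing'' argument for the first $\overline{T}$ steps (adapted from \citet{altschuler2022privacy}) and a ``mixing after $\overline{T}$ steps'' argument that resets the shift accounting. I would begin by reducing to the case $T \le 2\overline{T}$: for $T > 2\overline{T}$ one runs the algorithm in blocks of length $\overline{T}$, using the fact that after $\overline{T}$ steps the two trajectories on neighboring datasets are already close in a Rényi sense, so the contribution of later blocks can be absorbed. This is exactly the structure of the $\min\{T, 2\overline{T} + V\}$ in the abridged Theorem, so in the full statement we are effectively bounding $\eps$ at $T = 2\overline{T}$.

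Next I would split the error budget along the two sources. The sensitivity term: on neighboring datasets $S \simeq S'$, the gradient maps $\Phi_t, \Phi_t'$ differ only when the changed index $i^\star$ is sampled into the minibatch $B_t$, and then by at most $\tfrac{2\eta}{b}\cdot\tfrac{b}{n}L$ in expectation; the Poisson-subsampling amplification then gives the $\tfrac{16\alpha L^2 \overline{T}}{n^2\sigma^2}$ contribution, valid for $1 < \alpha \le \alpha^\star(b/n, b\sigma/(2\sqrt{2}L))$ — this is precisely the subsampled-Rényi bound from \citet{altschuler2022privacy} and I would cite it rather than reprove it. The initialization/modulus term: once we account for the divergence accumulated in the first $\overline{T}$ steps (which behaves like running PABI from a $W_\infty$-ball of radius $D$), Corollary~\ref{cor:renyi_weaklysmooth} with $\sigma^2$ replaced by the noisy-SGD variance $\eta^2\sigma^2$, $T$ replaced by $\overline{T}$, and $h = \left(2\eta^{1/(1-p)}\sqrt{\tfrac{1-p}{1+p}}(M/2)^{1/(1-p)}\right)^2$ yields
\[
R_\alpha \le \frac{\alpha}{2\eta^2\sigma^2}\left(\frac{D^2}{\overline{T}} + h\ln(\overline{T} e)\right)
= \frac{\alpha}{\sigma^2}\left(\frac{D^2}{2\eta^2\overline{T}} + 2\eta^{\frac{2p}{1-p}}\left(\frac{1-p}{1+p}\right)\left(\frac{M}{2}\right)^{\frac{2}{1-p}}\ln(\overline{T}e)\right).
\]
Adding the two contributions and using $R_\alpha$ subadditivity/the block decomposition, a factor-of-$2$ slack (from the two blocks of length $\overline{T}$) gives exactly the claimed three-term bound; one checks $4\eta^{2p/(1-p)} = 2\cdot 2\eta^{2p/(1-p)}$ matches the doubling, and the $16L^2\overline{T}/n^2$ likewise absorbs the doubled $8L^2\overline{T}/n^2$.

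A technical point I would handle with care: why the ``restart'' at time $\overline{T}$ is legitimate. The shifted Rényi divergence at the end of the first block is $R_\alpha^{(0)}$ only if the shift sequence is driven to zero, but to restart we instead want to bound $R_\alpha^{(s)}$ for a shift $s$ comparable to $D$ (so the second block can again use diameter $D$); this is where the choice $\overline{T} \asymp Dn/(\eta L)$ enters — it is the number of steps over which the per-step sensitivity $2\eta L/n$, accumulated without contraction (the modulus here is $\varphi(\delta) = \sqrt{\delta^2 + h}$, which is expansive), can be ``paid for'' by the noise while keeping the residual shift $O(D)$. The stepsize and noise conditions $\sigma > 8\sqrt{2}L/b$ and $\eta > 0$ arbitrary, together with $\alpha \le \alpha^\star$, are exactly what make the subsampled amplification lemma of \citet{altschuler2022privacy} applicable at each step.

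\textbf{Main obstacle.} The hard part is \emph{not} the weakly-smooth modulus computation — that is a direct substitution into Corollary~\ref{cor:renyi_weaklysmooth}. It is the bookkeeping that glues the PABI shift-optimization from Section~\ref{section:PABI} to the diameter-aware, subsampled group-privacy argument of \citet{altschuler2022privacy}: one must run PABI with \emph{two} simultaneous shift budgets — one consuming the contribution of the sensitivity gap between $\Phi_t$ and $\Phi_t'$ (active only on sampled minibatches, hence amplified), and one consuming the initial $W_\infty$ radius $D$ — and verify that the optimal split of the per-step noise between these two budgets reproduces the stated constants. I expect the cleanest route is to treat the first $\overline{T}$ steps with the sensitivity budget only (getting the $16L^2\overline{T}/(n^2\sigma^2)$ term exactly as in \citet{altschuler2022privacy}, with the remaining shift still $O(D)$ because $\overline{T}$ was chosen so), then treat the next $\overline{T}$ steps with the diameter budget via Corollary~\ref{cor:renyi_weaklysmooth}, and finally add the two Rényi bounds using the (weak) triangle inequality for Rényi divergence — accepting the resulting factor-$2$ loss, which is already baked into the constant $16$ and the coefficient $4$ in the statement.
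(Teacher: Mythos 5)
There is a genuine structural gap in your proposal. The paper's proof uses neither a block decomposition in time nor any triangle-type inequality for R\'enyi divergence. Instead, it couples the two runs so that \emph{both} trajectories are driven by gradients from the same dataset $S$, splits each Gaussian $\xi_t\sim\NN(0,\eta^2\sigma^2 I_{d\times d})$ into two independent halves $Y_t+Z_t$ of variance $\eta^2\sigma^2/2$ each, and pushes the entire dataset difference into a (subsampled) bias of the second half, $Z_t'$. Data processing (Proposition \ref{prop:data_processing_ineq}) and strong composition (Proposition \ref{prop:strong_composition}) applied to the joint law of $(X_T,Z_{\tau:T-1})$ then give $R_\alpha(X_T\|X_T')\le R_\alpha(Z_{\tau:T-1}\|Z'_{\tau:T-1})+\sup_z R_\alpha\big(\proba_{X_T|Z_{\tau:T-1}=z}\|\proba_{X_T'|Z'_{\tau:T-1}=z}\big)$. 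The first term is bounded by $(T-\tau)\,S_\alpha\!\left(\tfrac{b}{n},\tfrac{b\sigma}{2\sqrt2 L}\right)$ via Lemmas \ref{lemma:bound_mixture} and \ref{lemma:bound_of_S_alpha}; the second by Corollary \ref{cor:renyi_weaklysmooth} with per-step variance $\eta^2\sigma^2/2$, since conditionally on $Z=Z'$ the last $T-\tau$ steps of the two runs are projected noisy iterations differing only in their state at time $\tau$, whose $W_\infty$ distance is at most $D$ trivially by compactness; finally $T-\tau=\overline{T}$ is chosen to balance the first two terms. Your route --- blocks of length $\overline{T}$ glued by a ``weak triangle inequality'' and ``absorbing later blocks'' --- does not reproduce this: R\'enyi divergence has no additive triangle inequality at a fixed order $\alpha$ (the weak version changes the order and the constants), and later blocks cannot simply be absorbed, since the subsequent updates are applied with \emph{different} maps on the two datasets, so data processing does not discard them.

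The factor-of-two slack you rely on also does not exist in the stated constants. The $16$ in $16\alpha L^2/(n^2\sigma^2)$ is exactly $2\cdot(2L)^2\cdot 2$: the bound $S_\alpha(q,\sigma')\le 2\alpha q^2/\sigma'^2$, the per-step sensitivity $2\eta L/b$, and the halved variance $\eta^2\sigma^2/2$ --- which is also why the theorem requires $\sigma>8\sqrt2 L/b$ and $\alpha\le\alpha^{\ast}\!\left(\tfrac{b}{n},\tfrac{b\sigma}{2\sqrt2 L}\right)$; these conditions are unexplained if, as in your computation, the full variance $\eta^2\sigma^2$ is used. Likewise the coefficients $1$ and $4$ in $D^2/(\eta^2\overline{T})$ and $4\eta^{2p/(1-p)}(\cdot)$ come from evaluating Corollary \ref{cor:renyi_weaklysmooth} at variance $\eta^2\sigma^2/2$; nothing is left over to pay for a block doubling. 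Two further misreadings: the ``$2\overline{T}$'' in the remark's $\min\{T,2\overline{T}+V\}$ is not evidence of a two-block argument --- it appears because, after balancing, the sensitivity term and the diameter term each contribute roughly $16L^2\overline{T}/n^2$; and $\overline{T}=\lceil Dn/(4\eta L)\rceil$ arises from minimizing $(T-\tau)\tfrac{16L^2}{n^2}+\tfrac{D^2}{\eta^2(T-\tau)}$ over $\tau$, not from keeping a ``residual shift $O(D)$'' across blocks --- the $W_\infty\le D$ bound at time $\tau$ is free and requires no accounting of the first $\tau$ steps.
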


\begin{proof}
    The proof is an analogue of \citet[Theorem 3.1]{altschuler2022privacy}. Let $S,S^{\prime}\in{\ZZ}^{n}$ be two neighbor data sets that differs in the data point corresponding to $i^{\ast}$, that is, $  z_i=z_i^{\prime}$ for all $ i\neq i^{\ast}$. Run noisy SGD on both data sets, $S$ and $S^{\prime}$, for $T$ iterations and call the respective trajectories:
    \begin{align*}
        X_{t+1}&=\Pi_{\XX}\left[X_t-\frac{\eta}{b}\sum_{i\in B_t}\nabla f(X_t,z_i)+\xi_t\right]\\
        X_{t+1}^{\prime}&=\Pi_{\XX}\left[X_t^{\prime}-\frac{\eta}{b}\sum_{i\in B_t}\nabla f(X_t^{\prime},z_i^{\prime})+\xi_t\right].
    \end{align*}
    This trajectories start from the same point where the noise injection $(\xi_t)_{t=0}^{T-1}$ and minibatch $(B_t)_{t=0}^{T-1}$ are coupled. One can rewrite this expressions as
    \begin{align*}
        X_{t+1}&=\Pi_{\XX}\left[X_t-\frac{\eta}{b}\sum_{i\in B_t}\nabla f(X_t,z_i)+Y_t+Z_t\right]\\
        X_{t+1}^{\prime}&=\Pi_{\XX}\left[X_t^{\prime}-\frac{\eta}{b}\sum_{i\in B_t}\nabla f(X_t^{\prime},z_i)+Y_t+Z_t^{\prime}\right],
    \end{align*}
    where $Y_t\sim\NN(0,(\eta^2\sigma^2/2) I_{d\times d})$, $Z_t\sim\NN(0,(\eta^2\sigma^2/2) I_{d\times d})$ and
     $$Z_t^{\prime}\sim\NN\Big(\frac{\eta}{b}\left[\nabla f(X_t^{\prime},z_{i^{\ast}})-\nabla f(X_t^{\prime},z_i)\right]\cdot\mathbbm{1}_{\{i^{\ast}\in B_t\}},\,(\eta^2\sigma^2/2) I_{d\times d}\Big).$$
    It is important to remark that the gradients of the convex losses that we are using in both trajectories come from the data set $S$, not $S^{\prime}$. Observe also that the bias term is realized with probability 
\begin{equation}\label{eqn:probability_of_index_in_batch}
            \prob{i^{\ast}\in B_t}=\frac{b}{n}.
        \end{equation}

    Conditional on the event that $Z_t=Z_t^{\prime}$ (call $z_t$ its realization):
    \begin{align*}
        X_{t+1}&=\Pi_{\XX}\left[\Phi_t(X_t)+Y_t \right]\\
        X^{\prime}_{t+1}&=\Pi_{\XX}\left[\Phi_{t}(X_t^{\prime})+Y_t\right],
    \end{align*}
    where
    \begin{equation}\label{eqn:Psi_privacy_analysis}
        \Phi_{t}(x):=x-\frac{\eta}{b}\sum_{i\in B_t}\nabla f_i(x)+z_t.
    \end{equation}
    
    Note that the modulus of continuity of \eqref{eqn:Psi_privacy_analysis} is upper bounded by the modulus of continuity of the noiseless gradient mapping. This leads to a modulus of continuity $\varphi(\delta)=\sqrt{\delta^2+h}$ for $h=\left(2\eta^{\frac{1}{1-p}}\sqrt{\frac{1-p}{1+p}}\left(\frac{M}{2}\right)^{\frac{1}{1-p}}\right)^2$.

    Conditional on the event $Z_t=Z_t^{\prime}$ for all $t\geq\tau$, the processes $\{X_t\}_{t\geq \tau}$ and $\{X_t^{\prime}\}_{t\geq \tau}$ are projected noisy iterations with modulus of continuity $\varphi$, where $\tau\in\{0,\ldots,T-1\}$ is a parameter chosen {\em a posteriori}.

    The bound of $R_{\alpha}(\proba_{X_T}||\proba_{X_T^{\prime}})$ is obtained through Privacy Amplification by Sampling and Privacy Amplification by Iteration (with modulus of continuity): 
    \begin{align}
        R_{\alpha}\left(\proba_{X_T}||\proba_{X_T^{\prime}}\right)&\leq
        R_{\alpha}\left(\proba_{X_T,Z_{\tau:T}}||\proba_{X_T^{\prime},Z_{\tau:T}^{\prime}}\right)\nonumber\\
        &\leq
    \underbrace{R_{\alpha}\left(\proba_{Z_{\tau:T-1}}||\proba_{Z_{\tau:T-1}^{\prime}}\right)}_{\Circled{1}}+\underbrace{\sup_{z}R_{\alpha}\left(\proba_{X_T|Z_{\tau:T-1}=z}||\proba_{X_T^{\prime}|Z_{\tau:T-1}^{\prime}=z}\right)}_{\Circled{2}}\label{eqn:splitting_renyi_divergence_privacy_analysis},
    \end{align}
    where the first line follows from the data-processing inequality (Proposition \ref{prop:data_processing_ineq}) and the second from strong composition (Proposition \ref{prop:strong_composition}).

    \underline{Bounding \Circled{1} through Privacy Amplification by Sampling:} 

    \begin{align}
        \Circled{1}
        &\leq
        \sum_{t=\tau}^{T-1}\sup_{z_{\tau:t-1}}R_{\alpha}\left(\proba_{Z_t|Z_{\tau:t-1}=z_{\tau:t-1}}||\proba_{Z_t^{\prime}|Z_{\tau:t-1}^{\prime}=z_{\tau:t-1}}\right)\nonumber\\
        &=\sum_{t=\tau}^{T-1}R_{\alpha}\left(\NN\left(0,\frac{\eta^2\sigma^2}{2}I_{d\times d}\right){\Big | \Big |}\left(1-\frac{b}{n}\right)\NN\left(0,\frac{\eta^2\sigma^2}{2}I_{d\times d}\right)+\frac{b}{n}\NN\left(m_t,\frac{\eta^2\sigma^2}{2}I_{d\times d}\right)\right)\nonumber\\
        &\leq (T-\tau)S_{\alpha}\left(\frac{b}{n},\frac{b\sigma}{2\sqrt{2}L}\right)\label{eqn:bound_of_1_privacy_analysis},
    \end{align}
    where the first line follows from strong composition (Proposition \ref{prop:strong_composition}). The second line follows by the independence of the $(Z_t)_t$: $Z_t\sim\NN\left(0,\frac{\eta^2\sigma^2}{2}I_{d\times d}\right)$ conditioned on $Z_{\tau:t-1}=z_{\tau:t-1}$ for any $z_{\tau:t-1}$; also, by \eqref{eqn:probability_of_index_in_batch} and the independence of the $(Z_t^{\prime})$, the law of $Z_t^{\prime}$ is the mixture of $\NN\left(0,\frac{\eta^2\sigma^2}{2}I_{d\times d}\right)$ and $\NN\left(m_t,\frac{\eta^2\sigma^2}{2}I_{d\times d}\right)$, where $m_t:=\frac{\eta}{b}\left[\nabla f(X_t^{\prime},z_{i^{\ast}})-\nabla f(X_t,z_{i^{\ast}})\right]$. The last line follows from the fact that $\norm{m_t}\leq 2\eta L/b$ and the bound in Lemma \ref{lemma:bound_mixture}.

\vspace{1ex}
    \underline{ Bounding \Circled{2} through Corollary \ref{cor:renyi_weaklysmooth}:} As we already mentioned, conditional on the event that $Z_t=Z_t^{\prime}$ for all $t\geq\tau$, the sequences $\{X_t\}_{t\geq\tau}$ and $\{X_t^{\prime}\}_{t\geq\tau}$ are projected noisy iterations with moduli of continuity $\varphi(\delta)=\sqrt{\delta^2+h}$. Then, by Corollary \ref{cor:renyi_weaklysmooth}, using $h=\left(2\eta^{\frac{1}{1-p}}\sqrt{\frac{1-p}{1+p}}\left(\frac{M}{2}\right)^{\frac{1}{1-p}}\right)^2$ for all $t\geq\tau$ and $\sigma_j^2=\frac{\eta^2\sigma^2}{2}$ for all $j\geq\tau$:  
    \begin{align}
        \Circled{2}
        &\leq \frac{\alpha D^2}{\eta^2\sigma^2(T-\tau)}+\frac{\alpha \left(2\eta^{\frac{1}{1-p}}\sqrt{\frac{1-p}{1+p}}\left(\frac{M}{2}\right)^{\frac{1}{1-p}}\right)^2}{\eta^2\sigma^2}\ln{\left((T-\tau)\cdot e\right)}\nonumber\\
        &\leq \frac{\alpha D^2}{\eta^2\sigma^2(T-\tau)}+\frac{4\alpha \eta^{\frac{2p}{1-p}}}{\sigma^2}\left(\frac{1-p}{1+p}\right)\left(\frac{M}{2}\right)^{\frac{2}{1-p}}\ln{\left((T-\tau)\cdot e\right)}\label{eqn:bound_of_2_privacy_analysis}.
    \end{align}
    Plugging \eqref{eqn:bound_of_1_privacy_analysis} and \eqref{eqn:bound_of_2_privacy_analysis} into \eqref{eqn:splitting_renyi_divergence_privacy_analysis}, we obtain that noisy SGD is $(\alpha,\eps)$-RDP with:
    \begin{align*}
        \eps\leq\min_{\tau\in\{0,\ldots,T-1\}}\left\{(T-\tau)S_{\alpha}\left(\frac{b}{n},\frac{b\sigma}{2\sqrt{2}L}\right)\right.&+\frac{\alpha D^2}{\eta^2\sigma^2(T-\tau)}+\\
        &\left.+\frac{4\alpha \eta^{\frac{2p}{1-p}}}{\sigma^2}\left(\frac{1-p}{1+p}\right)\left(\frac{M}{2}\right)^{\frac{2}{1-p}}\ln{\left((T-\tau)\cdot e\right)}\right\}
    \end{align*}

    By Lemma \ref{lemma:bound_of_S_alpha}, for all $1<\alpha\leq \alpha^{\ast}\left(\frac{b}{n},\frac{b\sigma}{2\sqrt{2}L}\right)$ and $\sigma\geq 8\sqrt{2}L/b$:
    \begin{equation*}
        S_{\alpha}\left(\frac{b}{n},\frac{b\sigma}{2\sqrt{2}L}\right)\leq \frac{16\alpha L^2}{n^2\sigma^2}.
    \end{equation*}
    Then
    \begin{align*}
        \eps
        &\leq \frac{\alpha}{\sigma^2}\min_{\tau\in\{0,\ldots,T-1\}}\left\{(T-\tau)\frac{16 L^2}{n^2}+\frac{D^2}{\eta^2(T-\tau)}+4 \eta^{\frac{2p}{1-p}}\textstyle{\left(\frac{1-p}{1+p}\right)}\left(M/2\right)^{\frac{2}{1-p}}\ln{\left((T-\tau)\cdot e\right)}\right\}.
    \end{align*}
    One can easily optimize the first two terms of the above expression by naming $R=T-\tau$ and differentiating with respect to $R$. Taking the ceiling of the optimal value for $R$, one obtains:
    \begin{equation*}
        T-\tau=\overline{T}=\left\lceil \frac{Dn}{4\eta L}\right\rceil,
    \end{equation*}
    whenever $T\geq \overline{T}$.
    Therefore,
    \begin{align*}
        \eps&\leq
        \frac{\alpha}{\sigma^2}\left(\frac{16 L^2 \overline{T}}{n^2}+\frac{D^2}{\eta^2\overline{T}}+4\eta^{\frac{2p}{1-p}}\left(\frac{1-p}{1+p}\right)\left(\frac{M}{2}\right)^{\frac{2}{1-p}}\ln\left(\overline{T}\cdot e\right)\right).
    \end{align*}
\end{proof}

\begin{remark}
    Note that by only using Lemmas \ref{lemma:bound_mixture} and \ref{lemma:bound_of_S_alpha} in conjunction with sequential composition for RDP \citep[Proposition 1]{mironov2017renyi} in the privacy analysis of noisy SGD, one obtains that
        $\eps\leq\frac{16\alpha L^2}{n^2\sigma^2}T.$ 
    Therefore, from Theorem \ref{thm:privacy_analysis_nsgd}, we conclude that the last iterate of noisy SGD is $(\alpha,\varepsilon)$-RDP with
    \begin{equation*}
        \eps\leq\frac{16\alpha L^2}{n^2\sigma^2}\min\Big\{T,2\overline{T}+\underbrace{\left(\frac{2\overline{T}}{D}\cdot\left(\frac{\eta M}{2}\right)^{\frac{1}{1-p}}\right)^2\left(\frac{1-p}{1+p}\right)\ln\left(\overline{T}\cdot e\right)}_{V(D,M,\overline{T},\eta,p)}\Big\}
    \end{equation*}

    In comparison to \citet[Theorem 3.1]{altschuler2022privacy}, which holds for smooth functions, working with H\"older continuous gradients adds an extra term to the privacy bound, which we denote above by $V(D,M,\overline{T},\eta,p)$.  See Figure \ref{figure:privacy_curve_caps} for some exemplary plots of the privacy curve. Note that
    in this figure we omit the graph of the case $p=0.8$, as it is indistinguishable from that of $p=1$. Moreover, it appears that for any $p\geq 0.7$ there are no significant differences in the privacy curve bounds with that of the smooth case.
    On the other hand, in the Lipschitz case ($p=0$), it is never possible to obtain a bound that vanishes with $n\to\infty$ since $V(D,M,\overline{T},\eta,0)$ grows as $\Tilde{O}(n^2)$.
\end{remark}

\begin{figure}
\begin{center}
\begin{tikzpicture}[scale=1.2]
    \begin{semilogyaxis}[legend style={at={(axis cs:4,15)},anchor=south west,every axis/.append style={font=\tiny}, legend columns=-1} ,
        table/header=false,
        table/row sep=\\,
        xtick=\empty,
        domain=0:100,
       xmin= -1.0,
      xmax=100,
       ymin= 7,
      ymax=15,
     axis line style = thick,
      axis lines = middle,
      axis x line=bottom,
      enlargelimits = true,
      clip=false,
      ytick={},
    ]
   \node[below right] at (axis cs:100,6.5) {{\small $ \eta$}};

\addplot [mark=none,dashdotted,thick] table[x expr=\coordindex,y index=0]{12.479428812939016 \\ 12.361211088280172 \\ 12.355834260402853 \\ 12.381338963730204 \\ 12.416075136122164 \\ 12.452758258425073 \\ 12.488867824584672 \\ 12.52349215406376 \\ 12.556054209662184 \\ 12.586812283344528 \\ 12.615671721872529 \\ 12.642838613833137 \\ 12.66857596717274 \\ 12.69274973374037 \\ 12.715539164437226 \\ 12.73748436677196 \\ 12.75784688964374 \\ 12.777545456637291 \\ 12.79627408795614 \\ 12.814405216533096 \\ 12.831527015187056 \\ 12.84773481615522 \\ 12.863843278516397 \\ 12.879054402356276 \\ 12.89381902289708 \\ 12.907950217665196 \\ 12.921089521324728 \\ 12.934488935453636 \\ 12.947023243518284 \\ 12.95985696495631 \\ 12.971607270854244 \\ 12.983351339699658 \\ 12.994149343430188 \\ 13.00505537398262 \\ 13.015721849280697 \\ 13.026479562008232 \\ 13.036087320943397 \\ 13.046360698150384 \\ 13.055972023920244 \\ 13.065121814945185 \\ 13.074010009220297 \\ 13.082836126527026 \\ 13.091799255233578 \\ 13.100163186823764 \\ 13.10806221199625 \\ 13.116607977887991 \\ 13.124999101367797 \\ 13.1323489310445 \\ 13.13976793785054 \\ 13.147389026819097 \\ 13.15534468778032 \\ 13.16155636368207 \\ 13.169408734193505 \\ 13.1756530811013 \\ 13.182632309713371 \\ 13.189287062496636 \\ 13.19568475181253 \\ 13.20312624038406 \\ 13.209232895573344 \\ 13.215283819169017 \\ 13.221345870854536 \\ 13.227485787140166 \\ 13.233770144880427 \\ 13.23890388169036 \\ 13.244272185643704 \\ 13.249941572882586 \\ 13.2559782791725 \\ 13.261001016158117 \\ 13.266480578010304 \\ 13.272482913112883 \\ 13.277562512825002 \\ 13.283253832215063 \\ 13.288068723243931 \\ 13.293584043804357 \\ 13.298268903024164 \\ 13.3021244647376 \\ 13.308431689935553 \\ 13.312337234525438 \\ 13.317142844203884 \\ 13.321210173910345 \\ 13.326265958469683 \\ 13.330628659745573 \\ 13.336067530344808 \\ 13.339069021254872 \\ 13.345002652230814 \\ 13.348501107299446 \\ 13.353208250990196 \\ 13.357314388078455 \\ 13.360820356420485 \\ 13.36564509153371 \\ 13.369914250726836 \\ 13.373628814443531 \\ 13.376789366515464 \\ 13.38140069898107 \\ 13.385502461234474 \\ 13.389095528461356 \\ 13.392180442107566 \\ 13.396847742648731 \\ 13.40105107350646 \\ 13.404791372883262\\};

\addplot [mark=none,dash pattern=on 10pt off 2pt on 5pt off 6pt,thick] table[x expr=\coordindex,y index=0]{11.873921414691148 \\ 11.346192137179663 \\ 11.016759736780685 \\ 10.78555918623154 \\ 10.615067879894768 \\ 10.486692611320066 \\ 10.389716317992276 \\ 10.31715252836364 \\ 10.263866827172096 \\ 10.226313011769548 \\ 10.201469658795634 \\ 10.187059521140228 \\ 10.181278571778767 \\ 10.182378876134514 \\ 10.189108242541382 \\ 10.200729868481924 \\ 10.215712434202484 \\ 10.23397124038015 \\ 10.254597537421194 \\ 10.277338496080096 \\ 10.301384969459347 \\ 10.326418339596664 \\ 10.352811117067873 \\ 10.379605725128702 \\ 10.40696331218466 \\ 10.434527518764495 \\ 10.461812476345589 \\ 10.489806540741997 \\ 10.517384612691137 \\ 10.545511234948512 \\ 10.572843065161168 \\ 10.600300174358338 \\ 10.626956643667262 \\ 10.653742681968138 \\ 10.680297830473862 \\ 10.706901026467053 \\ 10.73235605125818 \\ 10.75835186606846 \\ 10.783615967859086 \\ 10.808320252378982 \\ 10.83264115666927 \\ 10.856759021187068 \\ 10.88085741709028 \\ 10.904224137461012 \\ 10.926981011018269 \\ 10.950194957205529 \\ 10.973092938205442 \\ 10.994815012925626 \\ 11.016432255658431 \\ 11.038074100612814 \\ 11.0598706589971 \\ 11.079802173437264 \\ 11.101160232576202 \\ 11.120784646122504 \\ 11.140958762509406 \\ 11.160652015682276 \\ 11.179931636867892 \\ 11.200072264353306 \\ 11.218749363405204 \\ 11.237215934973356 \\ 11.255540027318451 \\ 11.273789786828493 \\ 11.29203338698342 \\ 11.309000979220032 \\ 11.326055269826224 \\ 11.343264786133648 \\ 11.360697862278572 \\ 11.376997139602675 \\ 11.39361305770237 \\ 11.410613842770408 \\ 11.426576893037735 \\ 11.443017637676526 \\ 11.458470156575848 \\ 11.474493071220431 \\ 11.489577048449716 \\ 11.50372461860924 \\ 11.520180977199074 \\ 11.534150833072625 \\ 11.548900485190218 \\ 11.562811553516577 \\ 11.577594777063574 \\ 11.591587974867082 \\ 11.606544939831547 \\ 11.618987459966624 \\ 11.634236670389049 \\ 11.64697635646021 \\ 11.66081932419911 \\ 11.673973521230549 \\ 11.686441098122623 \\ 11.700127155028415 \\ 11.713174213542423 \\ 11.725584508187984 \\ 11.737359868242033 \\ 11.750492030641526 \\ 11.763036540669756 \\ 11.774995451866811 \\ 11.786370472095824 \\ 11.799239788626847 \\ 11.811572045859691 \\ 11.823369278791548\\};
  
\addplot [mark=none,dotted,thick] table[x expr=\coordindex,y index=0]{11.869074499411866 \\ 11.329989608306311 \\ 10.981666395952804 \\ 10.723816121696794 \\ 10.51904253676326 \\ 10.349157944258396 \\ 10.204042723828742 \\ 10.077441875521526 \\ 9.965069367167954 \\ 9.864185129371876 \\ 9.772616353193332 \\ 9.688848468036202 \\ 9.61175450156113 \\ 9.54027534249631 \\ 9.473700612245352 \\ 9.41164342221909 \\ 9.353209332449424 \\ 9.298400255612508 \\ 9.246694253874455 \\ 9.19795712664407 \\ 9.151720835335302 \\ 9.107802459117256 \\ 9.066401893421885 \\ 9.02696104136108 \\ 8.989552869440047 \\ 8.95395858231973 \\ 8.919890443368443 \\ 8.887866976924164 \\ 8.857249725712634 \\ 8.828541007336776 \\ 8.800989380626811 \\ 8.775073242488645 \\ 8.750270225913706 \\ 8.727065208154272 \\ 8.705244398169503 \\ 8.684942782304907 \\ 8.665489595339809 \\ 8.647793122463 \\ 8.631139644007678 \\ 8.615605809768057 \\ 8.601272568863251 \\ 8.588225354739972 \\ 8.576554219536892 \\ 8.565839265765405 \\ 8.556128911006837 \\ 8.548023518526865 \\ 8.541063034574671 \\ 8.5347189454783 \\ 8.529599454305673 \\ 8.525766329189793 \\ 8.523284284035755 \\ 8.520891614538403 \\ 8.520591346737218 \\ 8.520398069545488 \\ 8.521715278222224 \\ 8.523869271022004 \\ 8.526873032133159 \\ 8.531537198657219 \\ 8.536307340550616 \\ 8.54197349773179 \\ 8.548553613085396 \\ 8.556067103561304 \\ 8.56453490441732 \\ 8.573037470625927 \\ 8.582490341524055 \\ 8.59291871179052 \\ 8.604349430699953 \\ 8.615765878409702 \\ 8.628191105175983 \\ 8.641656629570464 \\ 8.65507126992407 \\ 8.669541003415844 \\ 8.683924193861765 \\ 8.699382853296193 \\ 8.71472441033771 \\ 8.729908224450318 \\ 8.747468234569316 \\ 8.763588685899284 \\ 8.780833166164827 \\ 8.797879900514076 \\ 8.816090799467846 \\ 8.83409189207251 \\ 8.85330215617018 \\ 8.870818031170696 \\ 8.891043587114531 \\ 8.909518163246727 \\ 8.929252061875507 \\ 8.948715111671536 \\ 8.967883024191295 \\ 8.988373323288496 \\ 9.008577360409712 \\ 9.02847420214035 \\ 9.048043702287057 \\ 9.069015032644074 \\ 9.089675438791163 \\ 9.110007802206471 \\ 9.129995691366448 \\ 9.151477957961315 \\ 9.172638355035412 \\ 9.193463123012236\\};  

\addplot [mark=none,solid,thick] table[x expr=\coordindex,y index=0]{11.86907427052243 \\ 11.329987718370322 \\ 10.981659010299452 \\ 10.723795878712751 \\ 10.51899746336892 \\ 10.349070407240182 \\ 10.203888397395684 \\ 10.077188727036472 \\ 9.964676720848551 \\ 9.863602618837213 \\ 9.7717830615387 \\ 9.687691938884456 \\ 9.610189741298711 \\ 9.538204234060796 \\ 9.471010895181111 \\ 9.4082071401056 \\ 9.348884181424244 \\ 9.29302573859479 \\ 9.24009333012553 \\ 9.18993356219852 \\ 9.142061531222888 \\ 9.09627541568821 \\ 9.052750451401412 \\ 9.010913347279288 \\ 8.970813341411448 \\ 8.932212512329214 \\ 8.894807371368625 \\ 8.85907931788153 \\ 8.824383730256061 \\ 8.791181936736017 \\ 8.758726607842037 \\ 8.727454116899434 \\ 8.696844519654313 \\ 8.667335849845957 \\ 8.638702608813434 \\ 8.61104776688786 \\ 8.583729715216482 \\ 8.55756708555451 \\ 8.531884740159228 \\ 8.506738733512378 \\ 8.482187582217422 \\ 8.458292083496076 \\ 8.43511508038063 \\ 8.412277021466677 \\ 8.389814262086407 \\ 8.36822903827628 \\ 8.34711636103872 \\ 8.326032685955079 \\ 8.305484017727691 \\ 8.285513309079741 \\ 8.266164436612492 \\ 8.246433786160365 \\ 8.227909837597483 \\ 8.20903626577507 \\ 8.190908881182514 \\ 8.173011311724972 \\ 8.155362120328135 \\ 8.138564737261632 \\ 8.121480374750751 \\ 8.104703468371108 \\ 8.08825472712243 \\ 8.07215530818825 \\ 8.056426767522984 \\ 8.04044688130311 \\ 8.024862150286411 \\ 8.009695357742922 \\ 7.994969522697877 \\ 7.980023592310645 \\ 7.965545573129992 \\ 7.951559331155252 \\ 7.937374696163295 \\ 7.923710333969238 \\ 7.909856667269403 \\ 7.89655270164304 \\ 7.883069351305753 \\ 7.86940171257709 \\ 7.857093864902493 \\ 7.843848638152472 \\ 7.831220214604293 \\ 7.818430272070656 \\ 7.806289289267033 \\ 7.793999089503996 \\ 7.7823903355874595 \\ 7.769800996003896 \\ 7.758760544157663 \\ 7.746732907753622 \\ 7.735433352499688 \\ 7.724004656676065 \\ 7.71244383427499 \\ 7.701652362642226 \\ 7.690743163541872 \\ 7.679713639966372 \\ 7.668561108015897 \\ 7.658227526161352 \\ 7.647786045440933 \\ 7.637234388789473 \\ 7.62657020629066 \\ 7.616775808698373 \\ 7.60688453121963 \\ 7.596894438144544\\};

 \legend{{\tiny $p=0.2$},{\tiny$p=0.4$},{\tiny$p=0.6$},{\tiny $p=1$}};

 \end{semilogyaxis}
\end{tikzpicture}
\end{center}
\caption{\label{figure:privacy_curve_caps}  Different values for the bound $2 \overline{T}+V(D,M,\overline{T},\eta,p)$ in logarithmic scale, for $\eta \in [n^{-1}, n^{-1/5}]$, $n=1000$, $L=1$, $M=2$, $D=1$.}
\end{figure}

\paragraph{\textbf{Acknowledgments}}

We thank Jason Altschuler for valuable feedback on a first version of this manuscript.

Research partially supported by INRIA Associate Teams project, ANID FONDECYT 1210362 grant, ANID FONDECYT 1241805 grant, ANID Anillo ACT210005 grant, and National Center for Artificial Intelligence CENIA FB210017, Basal ANID.

\bibliographystyle{ims}


\newpage
\appendix
\section{Basic Definitions}\label{appendix:definitions_and_results}

\subsection{Information Theory and Probabilistic Divergences}

\begin{definition}[Kullback-Leibler divergence]
    Let $\mu,\nu\in\PP(\RR{d})$ be two probability measures. We define the Kullback-Leibler divergence (abbreviated as KL divergence) as:
    \begin{equation*}
        \mbox{\em KL}(\mu||\nu)=\begin{cases}
            \int_{\RR{d}}\frac{d\mu}{d\nu}(x)\ln\left(\frac{d\mu}{d\nu}(x)\right)\nu(dx)\text{ if }\mu\ll\nu\\
            +\infty\text{ otherwise}
        \end{cases}
    \end{equation*}
\end{definition}

It is a well-known fact \citep[Theorem 5]{van2014renyi} that one can extend, by taking limits, the R\'enyi divergence to the case $\alpha=1$ when $R_{\beta}<\infty$ for some $\beta>1$. This extreme case results in the KL divergence,
i.e.~$R_1(\mu||\nu)=\mbox{KL}(\mu||\nu).$

\begin{definition}[Total Variation distance]
    Let $\mu,\nu\in\PP(\RR{d})$ be two probability measures. We define the total variation distance between $\mu$ and $\nu$ as:
    \begin{equation*}
        \normtv{\mu-\nu}=\sup_{A\in\mathcal{B}(\RR{d})}\left|\mu(A)-\nu(A)\right|.
    \end{equation*}
\end{definition}

Note that if a sequence of probability measures $(\nu_n)_{n\in\mathbb{N}}\subset\PP(\XX)$ converges in total variation to a measure $\nu\in\PP(\XX)$, then

\begin{equation*}
    \lim_{n\to\infty}\int_{\XX}g(x)\nu_n(dx)=\int_{\XX}g(x)\nu(dx)
\end{equation*}

for all measurable and bounded function $g$ with support on $\XX$. This is because a measurable and bounded function can be uniformly approximated by a simple functions.

A useful inequality that compares total variation with KL divergence is Pinsker's inequality \citep[Lemma 2.5]{Tsybakov_2009}:

\begin{proposition}[Pinsker's inequality]\label{prop:pinskers_inequality} Let $\mu,\nu\in\PP(\RR{d})$ be two probability measures. Then
    \begin{equation*}
        \normtv{\mu-\nu}\leq \sqrt{\frac{1}{2}\mbox{\em KL}(\mu\|\nu)}.
    \end{equation*}
\end{proposition}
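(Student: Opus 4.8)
The plan is the classical two-step argument: reduce to a two-point space by data processing, then prove a one-dimensional calculus estimate. First, if $\mu\not\ll\nu$ then $\mathrm{KL}(\mu\|\nu)=+\infty$ and the inequality is trivial, so assume $\mu\ll\nu$ and set $g=\tfrac{d\mu}{d\nu}$ and $A=\{x:g(x)\geq 1\}$. A standard computation gives $\normtv{\mu-\nu}=\int_{\RR{d}}(g-1)^{+}\,d\nu=\mu(A)-\nu(A)$ (the supremum over measurable sets in the definition of total variation is attained at $A$), so it is enough to bound $\mu(A)-\nu(A)$ from above. Applying the data-processing inequality for the KL divergence (Proposition~\ref{prop:data_processing_ineq}, in the limit $\alpha\to 1^{+}$, or directly from convexity of the relative entropy) to the binary statistic $x\mapsto\mathbbm{1}_A(x)$ yields $\mathrm{KL}(\mu\|\nu)\geq d(a\|b)$, where $a=\mu(A)$, $b=\nu(A)$, and $d(a\|b):=a\ln\tfrac{a}{b}+(1-a)\ln\tfrac{1-a}{1-b}$ is the binary relative entropy (with the usual conventions, so that $d(a\|b)=+\infty$ when $b\in\{0,1\}$ and $a\neq b$).

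Next I would establish the scalar inequality $d(a\|b)\geq 2(a-b)^2$ for all $a,b\in[0,1]$. Fixing $a\in(0,1)$ and letting $\psi(b)=d(a\|b)-2(a-b)^2$, we have $\psi(a)=0$ and
\[
\psi'(b)=-\frac{a}{b}+\frac{1-a}{1-b}+4(a-b)=\frac{b-a}{b(1-b)}+4(a-b)=(b-a)\!\left(\frac{1}{b(1-b)}-4\right).
\]
Since $b(1-b)\leq\tfrac14$ on $(0,1)$, the second factor is nonnegative, so $\psi'(b)$ has the sign of $b-a$; hence $\psi$ decreases on $[0,a]$, increases on $[a,1]$, and attains its minimum $\psi(a)=0$ at $b=a$. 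The degenerate cases $a\in\{0,1\}$ (and $b\in\{0,1\}$) are verified by hand. Combining, $\mathrm{KL}(\mu\|\nu)\geq d(a\|b)\geq 2(a-b)^2=2\normtv{\mu-\nu}^2$, and taking square roots gives the claim.

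I expect the only step requiring genuine care to be the scalar estimate $d(a\|b)\geq 2(a-b)^2$—specifically the sign analysis of $\psi'$ and the boundary cases—together with the elementary identity $\normtv{\mu-\nu}=\mu(A)-\nu(A)$ for the level set $A=\{g\geq 1\}$; everything else is bookkeeping. Since the statement is entirely standard, an acceptable alternative is simply to cite \citet[Lemma~2.5]{Tsybakov_2009} and omit the proof.
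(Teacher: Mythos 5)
Your proof is correct. Note, however, that the paper does not actually prove this proposition: it is stated in the appendix as a known fact with a citation to \citet[Lemma 2.5]{Tsybakov_2009}, so your second option (cite and omit) is exactly what the paper does, and your self-contained argument goes beyond it. The argument you give---reduction to the two-point space via data processing applied to $x\mapsto\mathbbm{1}_A(x)$ with $A=\{d\mu/d\nu\ge 1\}$, followed by the scalar estimate $d(a\|b)\ge 2(a-b)^2$---is the standard textbook proof, and it is consistent with the paper's normalization of total variation as a supremum over events, so the constant $\tfrac12$ under the square root comes out right; your computation of $\psi'$ and the sign analysis are correct. One small point of care: Proposition~\ref{prop:data_processing_ineq} in the paper is stated for R\'enyi divergences, and passing to the limit $\alpha\to 1^{+}$ to get KL requires finiteness of some $R_\beta$ (per the cited van Erven--Harremos result); your parenthetical fallback---invoking data processing for relative entropy directly, via joint convexity---cleanly avoids this and should be the stated justification.
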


Another useful inequality for comparing KL divergence and total variation is Bretagnolle-Huber's inequality \citep[Lemma 3]{canonne2022short}. Although this inequality is worse than Pinsker's when the KL divergence moves between $0$ and $2$, it has the advantage of being nonvacous when it exceeds this limit.

\begin{proposition}[Bretagnolle-Huber inequality]\label{lemma:Bretagnole-Huber_inequality} Let $\mu,\nu\in\PP(\RR{d})$ be two probability measures. Then
    \begin{equation*}
        \normtv{\mu-\nu}\leq\sqrt{1-\exp\left (-\mbox{\em KL}(\mu||\nu) \right )}.
    \end{equation*}
\end{proposition}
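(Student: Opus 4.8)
The plan is to route the bound through the Hellinger affinity (Bhattacharyya coefficient), which interpolates cleanly between total variation and KL. First I would dispose of the trivial case: if $\mathrm{KL}(\mu\|\nu)=+\infty$ the right-hand side equals $1$ and $\normtv{\mu-\nu}\le 1$ always, so there is nothing to prove. Hence assume $\mathrm{KL}(\mu\|\nu)<\infty$, which forces $\mu\ll\nu$. Fix a common dominating measure $\lambda$ (e.g.\ $\lambda=\mu+\nu$), write $p=d\mu/d\lambda$ and $q=d\nu/d\lambda$, and set $\rho:=\int\sqrt{p\,q}\,d\lambda\in[0,1]$.

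The first key step is the estimate $\normtv{\mu-\nu}\le\sqrt{1-\rho^{2}}$. This follows from Cauchy--Schwarz applied to the factorization $|p-q|=|\sqrt p-\sqrt q|\,(\sqrt p+\sqrt q)$, which gives $\bigl(\int|p-q|\,d\lambda\bigr)^{2}\le\bigl(\int(\sqrt p-\sqrt q)^{2}d\lambda\bigr)\bigl(\int(\sqrt p+\sqrt q)^{2}d\lambda\bigr)=(2-2\rho)(2+2\rho)=4(1-\rho^{2})$; since $\normtv{\mu-\nu}=\tfrac12\int|p-q|\,d\lambda$, the claim follows.

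The second key step is $\rho\ge\exp(-\tfrac12\mathrm{KL}(\mu\|\nu))$. Here I would note that $\mu\ll\nu$ forces $q>0$ $\lambda$-a.e.\ on $\{p>0\}$, so $Z:=\sqrt{q/p}$ is well defined and positive $\mu$-a.e.\ and $\rho=\int_{\{p>0\}}p\sqrt{q/p}\,d\lambda=\mathbb{E}_{\mu}[Z]$. Applying Jensen's inequality to the convex function $\exp$ yields $\rho=\mathbb{E}_{\mu}[e^{\ln Z}]\ge e^{\mathbb{E}_{\mu}[\ln Z]}$, and since $\mathbb{E}_{\mu}[\ln Z]=\tfrac12\mathbb{E}_{\mu}[\ln(q/p)]=-\tfrac12\mathbb{E}_{\mu}[\ln(p/q)]=-\tfrac12\mathrm{KL}(\mu\|\nu)$, this is exactly the asserted bound. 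Combining the two steps and using that $x\mapsto\sqrt{1-x^{2}}$ is nonincreasing on $[0,1]$, we get $\normtv{\mu-\nu}\le\sqrt{1-\rho^{2}}\le\sqrt{1-e^{-\mathrm{KL}(\mu\|\nu)}}$, as desired.

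There is no deep obstacle here --- the statement is classical --- but the step that needs care is the second one: one must keep the measure-theoretic bookkeeping straight (choice of dominating measure, restricting integrals to $\{p>0\}$, $\lambda$-a.e.\ positivity of $q$ there, and $\mu$-integrability of $\ln(p/q)$, which is guaranteed by $\mathrm{KL}(\mu\|\nu)<\infty$) and, crucially, apply Jensen in the correct direction: it is $\rho$ that lower-bounds $e^{-\mathrm{KL}/2}$, not the reverse, and this is precisely what makes $1-\rho^{2}$ an upper bound for $1-e^{-\mathrm{KL}}$. An alternative that avoids Hellinger distances would be to invoke the data-processing inequality for KL against the indicator of the maximizing event $\{p>q\}$, reducing to a one-dimensional inequality between Bernoulli distributions; I would prefer the Hellinger route above as it is fully self-contained.
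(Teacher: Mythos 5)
Your proof is correct. The paper does not prove this proposition itself but cites it as \citet[Lemma 3]{canonne2022short}, and your argument --- bounding $\normtv{\mu-\nu}^2\le 1-\rho^2$ via Cauchy--Schwarz for the Hellinger affinity $\rho$, then lower-bounding $\rho\ge e^{-\mathrm{KL}(\mu\|\nu)/2}$ by Jensen --- is essentially the standard proof underlying that reference, with the measure-theoretic caveats (finiteness of KL forcing $\mu\ll\nu$, $q>0$ $\mu$-a.e., $\mu$-integrability of $\ln(p/q)$) handled correctly.
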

The following is the well-known data-processing inequality \citep[Theorem 9]{van2014renyi} for R\'enyi divergence:
\begin{proposition}\label{prop:data_processing_ineq}
    Let $P:\RR{d}\to\PP(\RR{d})$ be a measurable map and let $J:\PP(\XX)\to\PP(\XX)$ be the transition operator associated to $P$ (see Definition \ref{def:transition_operator}). Then, for all  $\mu,\nu\in\PP(\RR{d})$, $R_{\alpha}\left(J\mu||J\nu\right)\leq R_{\alpha}(\mu||\nu)$.
\end{proposition}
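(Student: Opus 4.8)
The plan is to reduce the statement to the monotonicity, under the channel, of the Hellinger-type integral $H_\alpha(\mu\|\nu):=\int_{\RR{d}}(d\mu/d\nu)^\alpha\,d\nu$. Since $R_\alpha(\mu\|\nu)=\frac{1}{\alpha-1}\ln H_\alpha(\mu\|\nu)$ for $\alpha>1$ and $x\mapsto\frac{1}{\alpha-1}\ln x$ is increasing, it suffices to prove $H_\alpha(J\mu\|J\nu)\le H_\alpha(\mu\|\nu)$. First I would dispose of the degenerate case: if $\mu\not\ll\nu$ then $R_\alpha(\mu\|\nu)=+\infty$ and there is nothing to prove, so assume $\mu\ll\nu$. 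Writing $J\mu(A)=\int P(x,A)\,\mu(dx)$ and $J\nu(A)=\int P(x,A)\,\nu(dx)$, any $J\nu$-null set is then $J\mu$-null, so $J\mu\ll J\nu$ and both divergences are genuinely given by the integral formula.

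The crux is to identify the density $dJ\mu/dJ\nu$ as a conditional expectation of $h:=d\mu/d\nu$. Consider the joint probability measure $\Lambda(dx,dy):=\nu(dx)\,P(x,dy)$ on $\RR{d}\times\RR{d}$; its first marginal is $\nu$ and its second marginal is $J\nu$. Because $\RR{d}$ is Polish, $\Lambda$ disintegrates along its second coordinate, producing a reverse (posterior) kernel $Q$ with $\Lambda(dx,dy)=J\nu(dy)\,Q(y,dx)$. Testing against bounded measurable $\psi$ gives
\[ \int \psi\,dJ\mu=\int\!\!\int \psi(y)\,h(x)\,\Lambda(dx,dy)=\int \psi(y)\Big(\int h(x)\,Q(y,dx)\Big)J\nu(dy), \]
so that for $J\nu$-a.e.\ $y$,
\[ \frac{dJ\mu}{dJ\nu}(y)=\int_{\RR{d}} h(x)\,Q(y,dx)=\mathbb{E}_{\Lambda}\!\left[\,h(X)\mid Y=y\,\right]. \]

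With this identity, the inequality is a single application of the conditional Jensen inequality. Since $t\mapsto t^\alpha$ is convex on $[0,\infty)$ for $\alpha\ge1$,
\[ \Big(\mathbb{E}_{\Lambda}[h(X)\mid Y]\Big)^\alpha\le \mathbb{E}_{\Lambda}\!\left[h(X)^\alpha\mid Y\right]\qquad\Lambda\text{-a.s.}, \]
and taking expectations with $Y\sim J\nu$ yields
\[ H_\alpha(J\mu\|J\nu)=\mathbb{E}_{J\nu}\!\left[\Big(\tfrac{dJ\mu}{dJ\nu}\Big)^{\!\alpha}\right]\le \mathbb{E}_{\Lambda}\!\left[h(X)^\alpha\right]=\int_{\RR{d}} h^\alpha\,d\nu=H_\alpha(\mu\|\nu), \]
which is what was needed. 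The case $\alpha=1$ (the KL divergence used in the Pinsker and mixing-time arguments) follows from the same conditional-Jensen estimate applied to the convex function $t\mapsto t\ln t$, or, when $R_\beta(\mu\|\nu)<\infty$ for some $\beta>1$, by letting $\alpha\to1^+$ and invoking the limiting relation $\mathrm{KL}=\lim_{\alpha\to1^+}R_\alpha$ recalled after the definition of the R\'enyi divergence. I expect the only delicate point to be the measure-theoretic justification of the disintegration and of the almost-sure identity for $dJ\mu/dJ\nu$; the rest is the one-line convexity bound above.
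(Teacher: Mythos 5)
Your argument is correct, but it is worth noting that the paper does not prove this proposition at all: it is stated as a known result with a citation to \citet[Theorem~9]{van2014renyi}, so there is no internal proof to match against. What you have written is essentially the standard self-contained proof of that cited theorem. The reduction to the Hellinger-type integral $H_\alpha$, the transfer of absolute continuity from $\mu\ll\nu$ to $J\mu\ll J\nu$, the identification of $\tfrac{dJ\mu}{dJ\nu}$ as $\mathbb{E}_\Lambda[h(X)\mid Y]$ via disintegration of $\Lambda(dx,dy)=\nu(dx)P(x,dy)$ along the second coordinate (legitimate here since $\RR{d}$ is Polish, so regular conditional probabilities exist), and the conditional Jensen step are all sound; the integrability issues are harmless because $h\in L^1(\nu)$ and $t\mapsto t^\alpha$ is nonnegative and convex, so both sides of the conditional Jensen inequality are well defined even when infinite. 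Your handling of $\alpha=1$ (either via $t\ln t$, which is bounded below, or via the limit $\alpha\to1^+$ recalled after the definition of the R\'enyi divergence) covers the KL use of the proposition in the mixing-time arguments. A small stylistic alternative that avoids explicitly constructing the reverse kernel $Q$ is to observe that $R_\alpha(\mu\Vert\nu)=R_\alpha(\Lambda_\mu\Vert\Lambda_\nu)$ for the joint laws $\Lambda_\mu(dx,dy)=\mu(dx)P(x,dy)$, $\Lambda_\nu(dx,dy)=\nu(dx)P(x,dy)$ (the density $h(x)$ does not depend on $y$), and then invoke monotonicity under marginalization, which is the same Jensen computation in disguise; what your route buys is an explicit formula for the output density, at the cost of the disintegration machinery you already flagged as the only delicate point.
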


Let $X_1,\ldots,X_n$ be a sequence of (possibly random) vectors. We abbreviate $X_1,\ldots,X_k$ by $X_{1:k}$. The following proposition corresponds to \citet[Lemma 2.9] {altschuler2022privacy}.    
\begin{proposition}[Strong composition]\label{prop:strong_composition}
    Set $\alpha\geq 1$ and  let $X_{1:k}$ and  $Y_{1:k}$ be two sequences of random variables. Then
    \begin{equation*}
        R_{\alpha}(\proba_{X_{1:k}}\|\proba_{Y_{1:k}})\leq\sum \nolimits_{i=1}^{k}\sup_{x_{1:i-1}}R_{\alpha}\left(\proba_{X_i|X_{1:i-1}=x_{1:i-1}}\|\proba_{Y_i|Y_{1:i-1}=x_{1:i-1}}\right).
    \end{equation*}
\end{proposition}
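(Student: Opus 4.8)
The plan is to reduce the statement to a single \emph{composition step} and then iterate. Concretely, I would first establish the two-block inequality: for any random vectors $(U,X)$ and $(V,Y)$ on $\RR{d}\times\RR{d}$,
\[ R_\alpha(\proba_{U,X}\|\proba_{V,Y}) \le R_\alpha(\proba_{U}\|\proba_{V}) + \sup_{u}R_\alpha\big(\proba_{X|U=u}\big\|\proba_{Y|V=u}\big), \]
and then apply it with $U=X_{1:i-1}$, $X=X_i$, $V=Y_{1:i-1}$, $Y=Y_i$ for $i=k,k-1,\dots,2$ (equivalently, a straightforward induction on $k$), telescoping to the claimed bound; the base case $k=1$ is an identity. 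Throughout one may assume $\proba_{X_{1:k}}\ll\proba_{Y_{1:k}}$ and that every divergence on the right is finite, since otherwise the right-hand side is $+\infty$ and there is nothing to prove (if the joint law is not absolutely continuous, then either a marginal fails absolute continuity or a conditional fails it on a set of positive measure, so one of the right-hand terms is $+\infty$).

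For the two-block inequality, treating first $\alpha>1$, I would work with the exponentiated divergence, using $\exp\big((\alpha-1)R_\alpha(\mu\|\nu)\big)=\int (d\mu/d\nu)^\alpha\,d\nu$. On $\RR{d}$ (a Polish space) regular conditional distributions exist, so the joint laws disintegrate as $\proba_{V,Y}(du\,dy)=\proba_V(du)\,\proba_{Y|V=u}(dy)$ and likewise for $\proba_{U,X}$, and the Radon--Nikodym derivative factorizes, for $\proba_{V,Y}$-a.e.\ $(u,x)$, as
\[ \frac{d\proba_{U,X}}{d\proba_{V,Y}}(u,x)=\frac{d\proba_{U}}{d\proba_{V}}(u)\cdot \frac{d\proba_{X|U=u}}{d\proba_{Y|V=u}}(x). \]
Substituting this and integrating in $x$ first (justified by Tonelli, everything being nonnegative), the inner integral is exactly $\exp\big((\alpha-1)R_\alpha(\proba_{X|U=u}\|\proba_{Y|V=u})\big)$, which I bound above, for $\proba_V$-a.e.\ $u$, by $\exp\big((\alpha-1)\sup_{u}R_\alpha(\cdots)\big)$. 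Pulling this constant out leaves $\int (d\proba_U/d\proba_V)^\alpha\,d\proba_V=\exp\big((\alpha-1)R_\alpha(\proba_U\|\proba_V)\big)$; taking $\tfrac{1}{\alpha-1}\ln(\cdot)$ of the resulting inequality yields the two-block bound.

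The case $\alpha=1$ (KL divergence) follows either from the sharper chain rule $\mathrm{KL}(\proba_{U,X}\|\proba_{V,Y})=\mathrm{KL}(\proba_U\|\proba_V)+\mathbb{E}_{u\sim\proba_U}\,\mathrm{KL}(\proba_{X|U=u}\|\proba_{Y|V=u})$ combined with $\mathbb{E}\le\sup$, or by letting $\alpha\downarrow 1$ in the $\alpha>1$ case using the cited extension of R\'enyi divergence to $\alpha=1$ (Theorem 5 of \citet{van2014renyi}). I expect the only genuine subtlety to be the measure-theoretic bookkeeping: existence of the regular conditional probabilities, the a.e.\ factorization of the Radon--Nikodym derivative, and the interchange of integration order --- each routine on $\RR{d}$, but worth stating carefully so that the quantity controlling each step really is the $\sup$ over conditioning values appearing on the right-hand side, rather than an expectation.
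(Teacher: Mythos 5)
Your proposal is correct. The paper itself gives no proof of this proposition --- it is quoted directly from \citet[Lemma~2.9]{altschuler2022privacy} --- and your argument (disintegrate the joint laws, factor the Radon--Nikodym derivative, bound the inner conditional divergence by its supremum, and induct over $i$, handling $\alpha=1$ via the KL chain rule or a limit) is essentially the standard proof of that cited result, so there is nothing to reconcile with the paper beyond the citation.
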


\subsection{Mixing Times}

We start by introducing the terminology and basic results regarding homogeneous Markov chains (HMC). For more information, we refer the reader to \citet{hairer2006ergodic}.

\begin{definition}[Homogeneous Markov Chain, Transition Probabilities] We say\\
that a Markov Chain taking values on a set $\XX\subseteq\RR{d}$, $(X_t)_{t\in\NNo}$, is (time) homogeneous if there exists a measurable map $P:\XX\to\PP(\XX)$ such that:
    \begin{equation*}
        \prob{X_t\in A|X_{t-1}=x}=P(x,A)
    \end{equation*}
    for every $A\in{\BB}(\XX)$, almost every $x\in\XX$, and every $t\geq 1$. The map $P$ from above is called the transition probabilities of the chain.
\end{definition}

    We will usually call transition probabilities to all measurable maps $P:\XX\to\PP(\XX)$, even if no HMC is specified. This is justified since for every such map there exists an HMC that has it as transition probabilities (see, for example, \citealt[Proposition 2.38]{hairer2006ergodic}).

\begin{proposition}[\citealt{hairer2006ergodic}, Theorem 2.29]
    Let $(X_t)_{t\in\mathbb{N}_{0}}$ be an HMC taking values on $\XX$ and with transition probabilities $P$. Then, for all $t\geq 1$,
    \begin{equation*}
        \prob{X_t\in A|X_0=x}=P^t(x,A),
    \end{equation*}
    where $P^t$ is defined recursively by:
    \begin{equation*}
        P^t(x,A)=\int_{\XX}P(z,A)P^{t-1}(x,dz).
    \end{equation*}
\end{proposition}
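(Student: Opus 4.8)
The plan is to prove the identity by induction on $t$, using the Markov property of $(X_t)_{t\geq 0}$ together with the tower rule for conditional expectations. Throughout, every statement involving a conditional probability is understood to hold for almost every $x\in\XX$; since $\XX\subseteq\RR{d}$ is Polish, regular conditional distributions exist, so $A\mapsto\prob{X_t\in A\mid X_0=x}$ is a genuine element of $\PP(\XX)$ for a.e.\ $x$. We also use that, for each fixed $A\in\BB(\XX)$, the map $z\mapsto P(z,A)$ is measurable (this is part of the hypothesis that $P:\XX\to\PP(\XX)$ is measurable), which is exactly what makes the iterated integrals defining $P^t$ well posed; we adopt the convention $P^0(x,\cdot)=\delta_x$, so that the stated recursion gives $P^1=P$.

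For the base case $t=1$, the identity $\prob{X_1\in A\mid X_0=x}=P(x,A)=P^1(x,A)$ is precisely the defining property of the transition probabilities of the homogeneous chain. For the inductive step, assume $\prob{X_t\in A\mid X_0=x}=P^t(x,A)$ for every $A\in\BB(\XX)$ and a.e.\ $x$, and fix $A\in\BB(\XX)$. Conditioning first on the whole history $X_{0:t}$ and then on $X_0$, the tower property gives
\[
\prob{X_{t+1}\in A\mid X_0=x}=\mathbb{E}\!\left[\,\prob{X_{t+1}\in A\mid X_{0:t}}\,\middle|\,X_0=x\right].
\]
By the Markov property, $\prob{X_{t+1}\in A\mid X_{0:t}}=\prob{X_{t+1}\in A\mid X_t}$, and by time-homogeneity this equals $P(X_t,A)$ almost surely; substituting and integrating against the conditional law of $X_t$ given $X_0=x$ yields
\[
\prob{X_{t+1}\in A\mid X_0=x}=\mathbb{E}\!\left[\,P(X_t,A)\,\middle|\,X_0=x\right]=\int_{\XX}P(z,A)\,\prob{X_t\in dz\mid X_0=x}.
\]
Finally, the induction hypothesis identifies the measure $\prob{X_t\in\cdot\mid X_0=x}$ with $P^t(x,\cdot)$, so the right-hand side equals $\int_{\XX}P(z,A)\,P^t(x,dz)=P^{t+1}(x,A)$ by the recursive definition of $P^{t+1}$, which closes the induction.

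The computation is short; the part that needs care is entirely measure-theoretic bookkeeping, and this is the main obstacle. The defining relation $\prob{X_1\in A\mid X_0=x}=P(x,A)$ and the Markov property each hold only up to an $x$-null set that a priori depends on $A$, so one should fix a countable algebra generating $\BB(\XX)$, take the (still null) union of the exceptional sets over that algebra, and then extend the resulting identities from the algebra to all Borel sets by a monotone-class argument together with continuity of measures. One must also invoke the measurability of $z\mapsto P(z,A)$ when passing from $\prob{X_{t+1}\in A\mid X_t}$ to $P(X_t,A)$ and when forming $\int_{\XX}P(z,A)\,P^t(x,dz)$, and check that the resulting set function $A\mapsto P^t(x,A)$ is indeed a probability measure for a.e.\ $x$ so that the next step of the induction applies. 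Once these standard facts about conditioning on Polish spaces are in place nothing further is required; I would also remark that splitting the conditioning at an arbitrary intermediate time $s\leq t$ instead of at $t$ gives, by the identical argument, the Chapman--Kolmogorov relation $P^{t}(x,A)=\int_{\XX}P^{t-s}(z,A)\,P^{s}(x,dz)$.
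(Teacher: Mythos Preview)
Your proof is correct and is precisely the standard induction-via-tower-property argument one would expect. The paper does not actually prove this proposition: it is simply quoted from \citet[Theorem~2.29]{hairer2006ergodic} without proof, so there is no ``paper's own proof'' to compare against. Your write-up, including the care you take over null sets, regular conditional distributions, and the monotone-class extension, is more than sufficient for this cited background fact.
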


An easy consequence of the previous proposition is that
\begin{equation*}
    P^{t+s}(x,A)=\int_{\XX}P^{t}(z,A)P^{s}(x,dz)
\end{equation*}
for all $t,s\geq 1$.

\begin{definition}[Transition Operator]\label{def:transition_operator}
    Given transition probabilities $P:\XX\to\PP(\XX)$, we define the transition operator $J:{\PP}(\XX)\mapsto {\PP}(\XX)$ by:
    \begin{equation*}
        (J\mu)(A)=\int_{\XX}P(z,A)\mu(dz),
    \end{equation*}
    for every $A\in{\BB}(\XX)$.
\end{definition}

\begin{remark}
    If $(X_t)_{t\in\NNo}$ is an HMC with transition probabilities $P$ and transition operator $J$ such that $X_0\sim\mu_0$, then the distribution of $X_t$, for $t\geq 1$, is the one that for all $A\in{\BB}(\XX)$:
    \begin{equation*}
        J^t\mu_0(A)=\int_{\XX}P^t(z,A)\mu_0(dz),
    \end{equation*}
    as one can check.
\end{remark}

\begin{definition}[Invariant measure]
    Given a transition operator $J$, we say that the measure $\pi$ is an invariant (or stationary) measure of $J$ if
    \begin{equation*}
        J\pi=\pi.
    \end{equation*}
\end{definition}

When we we talk about the HMC (instead of its transition operator), we usually call $\pi$ the \textbf{stationary distribution} of $(X_t)_{t\in\mathbb{N}_0}$, instead of the invariant measure (of its transition operator). This is justified by the following Proposition:

\begin{proposition}
    Let $(X_t)_{t\in\mathbb{N}_0}$ be an HMC and let $P$ and $J$ be its transition probabilities and its transition operator, respectively. If $\pi$ is the invariant measure of $J$ and $X_0\sim\pi$, then:
\begin{equation*}
    X_t\sim\pi\quad (\forall t\in\mathbb{N}).
\end{equation*}
\end{proposition}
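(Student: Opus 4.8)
The plan is to prove the claim by induction on $t$, using nothing more than the Markov property encoded in the transition probabilities $P$ and the invariance hypothesis $J\pi=\pi$.

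The base case $t=0$ is the assumption $X_0\sim\pi$ itself. For the inductive step, suppose $X_t\sim\pi$ for some $t\in\NNo$. Then, for every $A\in\BB(\XX)$, conditioning on the value of $X_t$ and using the defining relation $\prob{X_{t+1}\in A\mid X_t=x}=P(x,A)$,
\[
    \prob{X_{t+1}\in A}=\int_{\XX}P(x,A)\,\prob{X_t\in dx}=\int_{\XX}P(x,A)\,\pi(dx)=(J\pi)(A)=\pi(A),
\]
where the second equality uses $X_t\sim\pi$, the third is the definition of the transition operator $J$, and the last is the invariance of $\pi$. Hence $X_{t+1}\sim\pi$, which closes the induction and establishes $X_t\sim\pi$ for all $t\in\NNo$.

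I do not expect any genuine obstacle: the statement is elementary, and the only step meriting a word of justification is the first equality above, namely that the law of $X_{t+1}$ is obtained by integrating $P(\cdot,A)$ against the law of $X_t$ — a standard consequence of time-homogeneity. An equivalent and equally short route is to argue at the level of operators: by the remark following Definition~\ref{def:transition_operator}, the law of $X_t$ equals $J^t\pi$, and an immediate induction using the semigroup identity $J^{t+1}=J\circ J^t$ (the operator form of the Chapman--Kolmogorov relation $P^{t+s}(x,A)=\int_{\XX}P^t(z,A)P^s(x,dz)$ recorded just above) yields $J^t\pi=\pi$ for every $t$, whence $X_t\sim J^t\pi=\pi$.
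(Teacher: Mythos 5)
Your induction is correct, and it is exactly the standard argument the paper leaves implicit: the paper states this proposition without proof, relying on the preceding remark that the law of $X_t$ is $J^t\mu_0$, so that invariance $J\pi=\pi$ immediately gives $J^t\pi=\pi$. Both your step-by-step conditioning argument and your operator-level alternative are valid and coincide with that intended route.
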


\begin{definition}[Mixing time]\label{def:mixing_time}
    Let $(X_t)_{t\in\mathbb{N}_0}$ be an HMC with transition probabilities $P$ and stationary distribution $\pi$. We define the mixing time in total variation up to error $\eps>0$ of the chain as:
    \begin{equation*}
        T_{mix,TV}(\eps):=\min\{t\in\mathbb{N}:d(t)\leq \eps\},
    \end{equation*}
    where
    \begin{equation*}
        d(t):=\sup_{x\in\XX}\normtv{P^t(x,\cdot)-\pi}.
    \end{equation*}
\end{definition}

\begin{proposition}[\citealt{levin2017markov}, Section 4.5]\label{prop:appendix_boosting}
    Let $(X_t)_{t\in\mathbb{N}_0}$ be an HMC supported on a compact set $\XX$ and with stationary distribution $\pi$. Let $$\Bar{d}(t):=\sup_{x,y\in\XX}\normtv{P^t(x,\cdot)-P^t(y,\cdot)}.$$
    If $T^{\ast}$ is such that:
    \begin{equation*}
        \Bar{d}(T^{\ast})\leq\frac{1}{2},
    \end{equation*}
    then:
    \begin{equation*}
        T_{mix,TV}(\eps)\leq T^{\ast}\cdot\left\lceil \log_2(1/\eps)\right\rceil.
    \end{equation*}
\end{proposition}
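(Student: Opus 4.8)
The plan is to prove the two classical facts underlying any such ``boosting'' statement: that $\Bar{d}$ is \emph{submultiplicative} in time, i.e.\ $\Bar{d}(k+\ell)\le\Bar{d}(k)\,\Bar{d}(\ell)$ for $k,\ell\ge 1$, and that it dominates the standard distance to stationarity, i.e.\ $d(t)\le\Bar{d}(t)$. Granting these, the conclusion follows from an elementary counting argument: by induction $\Bar{d}(nT^{\ast})\le\Bar{d}(T^{\ast})^{n}\le 2^{-n}$, hence $d(nT^{\ast})\le 2^{-n}$, and taking $n=\lceil\log_2(1/\eps)\rceil$ gives $d(nT^{\ast})\le\eps$, so $T_{mix,TV}(\eps)\le nT^{\ast}=T^{\ast}\lceil\log_2(1/\eps)\rceil$.

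For submultiplicativity I would fix $x,y\in\XX$, write $\mu=P^{k}(x,\cdot)$, $\nu=P^{k}(y,\cdot)$, and use the Hahn--Jordan decomposition of the signed measure $\mu-\nu$. Since $\mu,\nu$ are probability measures, the positive and negative parts $(\mu-\nu)_{\pm}$ have common total mass $\normtv{\mu-\nu}$; normalizing them to probability measures $\eta_{\pm}$ gives $\mu-\nu=\normtv{\mu-\nu}\,(\eta_{+}-\eta_{-})$. Applying the semigroup identity $P^{k+\ell}(x,A)=\int_{\XX}P^{\ell}(z,A)\,P^{k}(x,dz)$ noted in the preliminaries to both $x$ and $y$, the difference $P^{k+\ell}(x,\cdot)-P^{k+\ell}(y,\cdot)$ equals $\normtv{\mu-\nu}\,(J^{\ell}\eta_{+}-J^{\ell}\eta_{-})$. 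Because $\eta_{+},\eta_{-}$ are probability measures,
\[ J^{\ell}\eta_{+}-J^{\ell}\eta_{-}=\int_{\XX}\!\!\int_{\XX}\bigl(P^{\ell}(z,\cdot)-P^{\ell}(z',\cdot)\bigr)\,\eta_{+}(dz)\,\eta_{-}(dz'), \]
so convexity of the total-variation norm yields $\normtv{J^{\ell}\eta_{+}-J^{\ell}\eta_{-}}\le\Bar{d}(\ell)$, while $\normtv{\mu-\nu}\le\Bar{d}(k)$ by definition. Taking the supremum over $x,y$ gives $\Bar{d}(k+\ell)\le\Bar{d}(k)\Bar{d}(\ell)$. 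The bound $d(t)\le\Bar{d}(t)$ is shorter: using $\pi=J^{t}\pi=\int_{\XX}P^{t}(z,\cdot)\,\pi(dz)$, one has $P^{t}(x,\cdot)-\pi=\int_{\XX}(P^{t}(x,\cdot)-P^{t}(z,\cdot))\,\pi(dz)$, and convexity of $\normtv{\cdot}$ again gives $\normtv{P^{t}(x,\cdot)-\pi}\le\int_{\XX}\normtv{P^{t}(x,\cdot)-P^{t}(z,\cdot)}\,\pi(dz)\le\Bar{d}(t)$; supremize over $x$.

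The main obstacle is the submultiplicativity step: the inequality $d\le\Bar{d}$ and the final geometric-decay count are immediate, but submultiplicativity of $\Bar{d}$ genuinely requires the signed-measure decomposition (equivalently, a maximal-coupling construction) rather than a one-line manipulation, and it is the only place where the Markov/semigroup structure is used essentially. A minor technical point to handle carefully is measurability and the use of Fubini when writing $J^{\ell}\eta_{+}-J^{\ell}\eta_{-}$ as a double integral of $P^{\ell}(z,\cdot)-P^{\ell}(z',\cdot)$ against $\eta_{+}\otimes\eta_{-}$; this is routine given that $P:\XX\to\PP(\XX)$ is measurable, and one should also dispatch the degenerate case $\mu=\nu$ (where $\normtv{\mu-\nu}=0$ and both sides vanish) separately before normalizing.
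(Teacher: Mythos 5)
Your proof is correct. The paper itself gives no proof of this proposition\,---\,it is cited directly from Levin and Peres\,---\,and your argument is exactly the standard one behind that citation: submultiplicativity of $\Bar{d}$, the comparison $d(t)\leq\Bar{d}(t)$ via stationarity of $\pi$, and the geometric-decay count $d(nT^{\ast})\leq \Bar{d}(T^{\ast})^{n}\leq 2^{-n}$ with $n=\lceil\log_2(1/\eps)\rceil$. The only substantive difference from the textbook treatment is that Levin--Peres prove submultiplicativity via an optimal coupling in a discrete state space, whereas you use the Hahn--Jordan decomposition of $P^{k}(x,\cdot)-P^{k}(y,\cdot)$; these are equivalent (a maximal coupling is built from exactly this decomposition), and your version is in fact the right level of generality for the compact subset of $\RR{d}$ used in this paper, with the degenerate case $\mu=\nu$ and the Fubini step handled appropriately.
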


We highlight the distance $\Bar{d}$ also holds when the points $x,y\in\XX$ are replaced by probability measures $\mu,\nu\in\PP(\XX)$, i.e.
\begin{equation*}
    \Bar{d}(t)=\sup_{\mu,\nu\in\PP(\XX)}\normtv{J^t\mu-J^t\nu}.
\end{equation*}

\begin{definition}[Dual operator]
    We define the dual operator of $J$, denoted $J_{\ast}$ as:
    \begin{equation*}
        (J_{\ast} f)(x)=\expected{}{f(X_1)|X_0=x}=\int_{\XX}f(z)P(x,dz).
    \end{equation*}
\end{definition}

It should be noted that, for all bounded and measurable function $g$ and all probability measure $\mu\in\PP(\XX)$, the dual operator satisfies:
\begin{equation*}
    \int_{\XX} (J_{\ast} g)(x)\mu(dx)=\int_{\XX}g(x)(J\mu)(dx),
\end{equation*}
and that it sends bounded and measurable functions into bounded and measurable functions.

\newpage
\subsection{Differential Privacy}

\subsubsection{Privacy Amplification by Sampling}

\begin{definition}[R\'enyi Divergence of the Sampled Gaussian Mechanism] \text{ }
    
    \noindent Let $\alpha\geq 1$ be a R\'enyi parameter, $q\in (0,1)$ be a mixture parameter and $\sigma>0$ be a noise level. Define
    \begin{equation*}
        S_{\alpha}(q,\sigma):=R_{\alpha}\left(\NN(0,\sigma^2)||(1-q)\NN(0,\sigma^2)+q\NN(1,\sigma^2)\right).
    \end{equation*}
\end{definition}

\begin{lemma}[\citealt{altschuler2022privacy}, Lemma 2.11]\label{lemma:bound_mixture}
    Let $\alpha\geq 1$ be a R\'enyi parameter, $q\in (0,1)$ be a mixture parameter, $\sigma>0$ be a noise level, $d\in\mathbb{N}$ be the dimension and $r>0$ be a radius. Then:
    \begin{equation*}
        \sup_{\mu\in\PP(B(0,r))}R_{\alpha}\left(\NN(0,\sigma^2 I_{d\times d})||(1-q)\NN(0,\sigma^2 I_{d\times d})+q\left(\NN(0,\sigma^2 I_{d\times d})\ast\mu\right)\right)=S_{\alpha}(q,\sigma/r),
    \end{equation*}
    where $B(0,r)$ denotes the Euclidean $d$-dimensional closed ball centered at the origin and with radius $r$.
\end{lemma}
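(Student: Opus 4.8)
Write $P := \NN(0,\sigma^2 I_{d\times d})$ and, for a measure $\nu$ on $\RR{d}$, set $Q_\nu := (1-q)P + q\,(P\ast\nu)$, so that the quantity to be controlled is $\sup_{\mu\in\PP(B(0,r))}R_{\alpha}(P\|Q_\mu)$. The plan is to prove the inequality ``$\le S_{\alpha}(q,\sigma/r)$'' via three reductions — (i) from an arbitrary $\mu$ to point masses; (ii) evaluation at a point mass in terms of $S_\alpha$; (iii) monotonicity of $S_\alpha$ in the noise level — and then to get the matching lower bound for free by plugging in one well-chosen $\mu$.

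\emph{Step (i): reduction to point masses.} Observe $Q_\mu = \int_{B(0,r)} Q_{\delta_v}\,\mu(dv)$, that all the measures involved have Lebesgue densities, and that $Q_\mu\ge (1-q)P$ forces $P\ll Q_\mu$, so $R_\alpha(P\|Q_\mu)<\infty$. For $\alpha>1$ the function $t\mapsto t^{1-\alpha}$ is convex on $(0,\infty)$, so Jensen's inequality (applied to $\mu$) gives, pointwise in $x$, $\bigl(\frac{dQ_\mu}{dx}\bigr)^{1-\alpha}\le\int\bigl(\frac{dQ_{\delta_v}}{dx}\bigr)^{1-\alpha}\mu(dv)$; multiplying by $\bigl(\frac{dP}{dx}\bigr)^{\alpha}\ge 0$, integrating in $x$, and exchanging integrals by Tonelli yields $e^{(\alpha-1)R_\alpha(P\|Q_\mu)}\le\int e^{(\alpha-1)R_\alpha(P\|Q_{\delta_v})}\mu(dv)\le\sup_{\|v\|\le r}e^{(\alpha-1)R_\alpha(P\|Q_{\delta_v})}$, hence $R_\alpha(P\|Q_\mu)\le\sup_{\|v\|\le r}R_\alpha(P\|Q_{\delta_v})$. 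The borderline case $\alpha=1$ follows by letting $\alpha\to 1^{+}$ (or directly from joint convexity of $\mathrm{KL}$).

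\emph{Steps (ii)--(iii): the point-mass value and monotonicity.} For $v\ne 0$, rotational invariance of $P$ reduces to $v=\|v\|e_1$, and there the likelihood ratio $\frac{dP}{dQ_{\delta_v}}(x)$ depends only on the first coordinate, so $R_\alpha(P\|Q_{\delta_v})$ equals the one-dimensional $R_\alpha\bigl(\NN(0,\sigma^2)\,\|\,(1-q)\NN(0,\sigma^2)+q\NN(\|v\|,\sigma^2)\bigr)$; rescaling by $1/\|v\|$ (Rényi divergence is invariant under invertible maps) identifies this with $S_\alpha(q,\sigma/\|v\|)$, while $R_\alpha(P\|Q_{\delta_0})=0$. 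It remains to see that $s\mapsto S_\alpha(q,s)$ is nonincreasing, so that $\sup_{0<\|v\|\le r}S_\alpha(q,\sigma/\|v\|)=S_\alpha(q,\sigma/r)$: for $s_1\le s_2$, both $\NN(0,s_2^2)$ and $(1-q)\NN(0,s_2^2)+q\NN(1,s_2^2)$ are the pushforwards of $\NN(0,s_1^2)$ and $(1-q)\NN(0,s_1^2)+q\NN(1,s_1^2)$, respectively, under the same Markov kernel (convolution with $\NN(0,s_2^2-s_1^2)$), so the data-processing inequality (Proposition~\ref{prop:data_processing_ineq}) gives $S_\alpha(q,s_2)\le S_\alpha(q,s_1)$. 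Chaining the steps, $\sup_{\mu\in\PP(B(0,r))}R_\alpha(P\|Q_\mu)\le S_\alpha(q,\sigma/r)$; and the reverse inequality follows by taking $\mu=\delta_{re_1}$, since then $P\ast\mu=\NN(re_1,\sigma^2 I_{d\times d})$ and, by step (ii), $R_\alpha(P\|Q_{\delta_{re_1}})=S_\alpha(q,\sigma/r)$.

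\emph{Main obstacle.} The rotational reduction and the one-dimensional rescaling in step (ii) are routine; the two points that need care are the interchange of integrals together with the convexity of $Q\mapsto e^{(\alpha-1)R_\alpha(P\|Q)}$ in step (i) (so that the worst $\mu$ can be taken to be a Dirac), and recognizing in step (iii) that the dependence of $S_\alpha$ on the noise level is itself an instance of data processing (adding independent Gaussian noise). Once these are isolated, the argument is bookkeeping.
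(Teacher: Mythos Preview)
Your argument is correct: the convexity-plus-Jensen reduction to Dirac masses, the rotational/one-dimensional identification $R_\alpha(P\|Q_{\delta_v})=S_\alpha(q,\sigma/\|v\|)$, the data-processing monotonicity of $S_\alpha$ in the noise level, and the matching lower bound via $\mu=\delta_{re_1}$ all go through as written. Note, however, that the present paper does not supply its own proof of this lemma; it is quoted verbatim from \citet[Lemma~2.11]{altschuler2022privacy} and used as a black box, so there is no in-paper argument to compare against.
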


\begin{lemma}[\citealt{mironov2019r}, Theorem 11]\label{lemma:bound_of_S_alpha}
    Let $\alpha\geq 1$ be a R\'enyi parameter, $q\in (0,1/5)$ be a mixture parameter and $\sigma\geq 4$ be a noise level. If $\alpha\leq \alpha^{\ast}(q,\sigma)$, then:
    \begin{equation*}
        S_{\alpha}(q,\sigma)\leq 2\alpha q^2/\sigma^2,
    \end{equation*}
    where $\alpha^{\ast}(q,\sigma)$ is the largest $\alpha$ satisfying:
    \begin{align*}
        \alpha &\leq \frac{M\sigma^2}{2}-\log(\sigma^2)\\
        \text{and}\quad \alpha &\leq \frac{M^2\sigma^2/2-\log(5\sigma^2)}{M+\log(q\alpha)+1/(2\sigma^2)},
    \end{align*}
    with:
    \begin{equation*}
        M=\log\left(1+\frac{1}{q(\alpha-1)}\right).
    \end{equation*}
\end{lemma}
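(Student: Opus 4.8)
The plan is to turn $S_{\alpha}$ into a one-dimensional moment and bound it in two complementary ranges of $\alpha$. Write $\mu_0=\NN(0,\sigma^2)$, $\mu_1=\NN(1,\sigma^2)$, $\mu=(1-q)\mu_0+q\mu_1$, and $\rho(x):=\tfrac{d\mu_1}{d\mu_0}(x)=\exp\!\big(\tfrac{2x-1}{2\sigma^2}\big)$. Since $\tfrac{d\mu_0}{d\mu}=\big((1-q)+q\rho\big)^{-1}$, setting $\phi:=1+q(\rho-1)$ we get
\[
S_\alpha(q,\sigma)=R_\alpha(\mu_0\|\mu)=\frac{1}{\alpha-1}\ln \mathbb{E}_{X\sim\mu_0}\big[\phi(X)^{1-\alpha}\big].
\]
The only Gaussian fact used is the identity $\mathbb{E}_{\mu_0}[\rho^{s}]=\exp\!\big(\tfrac{s(s-1)}{2\sigma^2}\big)$ for all real $s$ (complete the square); hence $\mathbb{E}_{\mu_0}[\rho-1]=0$ and $\mathbb{E}_{\mu_0}[(\rho-1)^2]=e^{1/\sigma^2}-1$. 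So it suffices to show $\mathbb{E}_{\mu_0}[\phi^{1-\alpha}]\le \exp\!\big(\tfrac{2\alpha(\alpha-1)q^2}{\sigma^2}\big)$.

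\textbf{Two regimes.} \emph{(i) Moderate $\alpha$, via a second-order expansion.} As $\phi>1-q>0$, Taylor's theorem with Lagrange remainder for $t\mapsto t^{1-\alpha}$ at $t=1$ gives, with the intermediate point $\xi$ between $1$ and $\phi$ (so $\xi\ge 1-q$),
\[
\phi^{1-\alpha}\le 1-(\alpha-1)(\phi-1)+\tfrac{\alpha(\alpha-1)}{2}(1-q)^{-(\alpha+1)}(\phi-1)^2 .
\]
Taking $\mathbb{E}_{\mu_0}$ (the linear term vanishes, the quadratic term is $q^2(e^{1/\sigma^2}-1)$) and using $\ln(1+x)\le x$,
\[
S_\alpha(q,\sigma)\le \tfrac{\alpha q^2 (e^{1/\sigma^2}-1)}{2}\,(1-q)^{-(\alpha+1)} .
\]
Since $\sigma\ge4$ forces $\sigma^2(e^{1/\sigma^2}-1)\le e^{1/16}$, this is $\le \tfrac{2\alpha q^2}{\sigma^2}$ whenever $(1-q)^{\alpha+1}\ge \tfrac14 e^{1/16}$, i.e.\ whenever $\alpha\le \tfrac{\ln 4-1/16}{-\ln(1-q)}-1$. \emph{(ii) Large $\alpha$, via weighted AM--GM.} From $\phi=(1-q)\cdot1+q\cdot\rho\ge\rho^{q}$ and $1-\alpha\le0$ we get $\phi^{1-\alpha}\le\rho^{-q(\alpha-1)}$, so the moment identity gives $\mathbb{E}_{\mu_0}[\phi^{1-\alpha}]\le \exp\!\big(\tfrac{q^2(\alpha-1)^2+q(\alpha-1)}{2\sigma^2}\big)$, hence $S_\alpha(q,\sigma)\le \tfrac{q^2(\alpha-1)+q}{2\sigma^2}$, which is $\le \tfrac{2\alpha q^2}{\sigma^2}$ as soon as $\alpha\ge \tfrac{1-q}{3q}$.

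\textbf{Combining.} For $q<1/5$ the threshold of (ii), $\tfrac{1-q}{3q}$, lies strictly below that of (i): the two behave like $\tfrac{1}{3q}$ and $\tfrac{\ln 4-1/16}{-\ln(1-q)}\approx\tfrac{1.3}{q}$ as $q\to0$, and one checks the gap holds throughout $(0,1/5)$ (e.g.\ at $q=1/5$ the thresholds are $\approx1.33$ and $\approx4.9$). Thus every $\alpha>1$ lies in at least one regime, giving $S_\alpha(q,\sigma)\le 2\alpha q^2/\sigma^2$; in particular this holds for all $\alpha\le\alpha^{\ast}(q,\sigma)$. The restriction to $\alpha\le\alpha^{\ast}$, and the precise form of $\alpha^{\ast}$, is inherited from \citet{mironov2019r}, which also controls the reverse divergence $R_\alpha(\mu\|\mu_0)$ through the binomial expansion $\mathbb{E}_{\mu_0}[\phi^{\lambda}]=\sum_k\binom{\lambda}{k}(1-q)^{\lambda-k}q^{k}e^{k(k-1)/(2\sigma^2)}$ for integer $\lambda$, interpolated in $\lambda$ using log-convexity of $\lambda\mapsto(\lambda-1)R_\lambda$; for that direction the tail of the series does force an upper bound on $\lambda$, which is where $\alpha^{\ast}$ originates.

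\textbf{Main obstacle.} The delicate point is the remainder factor $(1-q)^{-(\alpha+1)}$ in regime (i): it blows up for large $\alpha$, so the second-order expansion cannot be pushed uniformly, and one must check that the crude AM--GM estimate in (ii) takes over early enough — i.e.\ that the two admissible $\alpha$-ranges overlap for every $q\in(0,1/5)$ and $\sigma\ge4$, which is precisely where the numerical hypotheses are consumed. If instead one insists on recovering the sharp $\alpha^{\ast}$ and the companion bound for $R_\alpha(\mu\|\mu_0)$, the hard step becomes the term-by-term domination of the binomial tail $\sum_{k\ge2}\binom{\lambda}{k}(1-q)^{\lambda-k}q^{k}(e^{k(k-1)/(2\sigma^2)}-1)$ by its $k=2$ term, which is exactly what $q<1/5$, $\sigma\ge4$ and $\alpha\le\alpha^{\ast}$ are tuned to guarantee.
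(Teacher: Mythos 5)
Your proof is correct, and it takes a genuinely different route from the paper: the paper does not prove this lemma at all, but imports it verbatim from \citet{mironov2019r}. Your two-regime argument --- a second-order Taylor expansion of $t\mapsto t^{1-\alpha}$ with Lagrange remainder for moderate $\alpha$, handed off to the weighted AM--GM bound $\phi\ge\rho^{q}$ once $\alpha\ge(1-q)/(3q)$ --- checks out line by line: the moment identity $\mathbb{E}_{\mu_0}[\rho^{s}]=e^{s(s-1)/(2\sigma^2)}$, the vanishing of the linear term, the estimate $\sigma^{2}(e^{1/\sigma^{2}}-1)\le e^{1/\sigma^{2}}\le e^{1/16}$, and the overlap of the two admissible $\alpha$-ranges, which indeed holds on all of $(0,1/5)$ since $-\ln(1-q)\le q/(1-q)$ gives the regime-(i) threshold a lower bound of $(\ln 4-\tfrac{1}{16})(1-q)/q-1$, comfortably above $(1-q)/(3q)$. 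What your approach buys is worth noting: for the quantity $S_{\alpha}(q,\sigma)=R_{\alpha}\bigl(\NN(0,\sigma^{2})\,\|\,(1-q)\NN(0,\sigma^{2})+q\NN(1,\sigma^{2})\bigr)$ exactly as the paper defines it --- pure Gaussian in the first slot --- your argument yields the bound $2\alpha q^{2}/\sigma^{2}$ for \emph{every} $\alpha>1$, so the hypothesis $\alpha\le\alpha^{\ast}(q,\sigma)$ is not actually needed for this direction; as you correctly diagnose, that hypothesis (and the delicate term-by-term domination of the binomial tail) is consumed by the reverse divergence $R_{\alpha}(\text{mixture}\,\|\,\NN(0,\sigma^2))$ in the cited Theorem 11, which controls both directions. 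Since the paper only ever invokes the direction it defines (in the bound of \Circled{1} in Theorem \ref{thm:privacy_analysis_nsgd}), your elementary proof would suffice in place of the citation. The only cosmetic gap is the endpoint $\alpha=1$ permitted by the statement, where $\frac{1}{\alpha-1}\ln\mathbb{E}[\phi^{1-\alpha}]$ is indeterminate; this is disposed of by monotonicity of $\alpha\mapsto R_{\alpha}$ or by continuity.
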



\section{Nonconvexity of \texorpdfstring{$E$}{E} for the convex weakly smooth case} \label{app:convexity_E}

Recall that:
\begin{equation*}
    E (\uu):=\sum_{t=1}^{T}\frac{(\varphi_{t-1}(u_{t-1})-u_{t})^2}{\sigma_{t-1}^2}.
\end{equation*}

Let's call $g_{t-1}$ to each of addends of $E$, except for the fist and the last; i.e. for each $t=2,\ldots,T-1$, let $g_{t-1}(u_{t-1},u_t):=(\varphi_{t-1}(u_{t-1})-u_{t})^2$.

\begin{proposition}
    If $\varphi_t(\delta)=\sqrt{\delta^2+h_t}$, then the Hessian of $g_{t-1}$ is:
    \begin{equation*}
        \nabla^2 g_{t-1}(u_{t-1},u_t)=\begin{pmatrix}
            2-\frac{2h_t u_t}{(u_{t-1}^2+h_t)^{3/2}} & -\frac{2u_{t-1}}{\sqrt{u_{t-1}^2+h_t}}\\
            -\frac{2u_{t-1}}{\sqrt{u_{t-1}^2+h_t}} & 2
        \end{pmatrix}
    \end{equation*}
    The determinant and trace of this Hessian are:
    \begin{align*}
        \det\nabla^2 g_{t-1} &= \frac{4h_t\left(\sqrt{u_{t-1}^2+h_t}-u_t\right)}{(u_{t-1}^2+h_t)^{3/2}}\\
        \text{Tr}\nabla^2 g_{t-1} &= 4-\frac{2h_t u_t}{(u_{t-1}^2+h_t)^{3/2}}.
    \end{align*}
    Moreover, $\nabla^2 E(\uu)$ is positive semidefinite when $\uu\in\RRR$.
\end{proposition}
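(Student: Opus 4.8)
The plan is to compute the gradient and Hessian of each summand of $E$ directly, using the explicit form $\varphi_t(\delta)=\sqrt{\delta^2+h_t}$, and then assemble the full Hessian $\nabla^2 E(\uu)$ to verify positive semidefiniteness on $\RRR$.

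First I would recall that $E(\uu)=\sum_{t=1}^{T}\sigma_{t-1}^{-2}\bigl(\varphi_{t-1}(u_{t-1})-u_t\bigr)^2$ with the conventions $u_0=D$ and $u_T=0$. Each variable $u_t$ (for $1\le t\le T-1$) appears in exactly two consecutive summands: as the ``output'' $-u_t$ in the $t$-th term and as the argument of $\varphi_t$ in the $(t{+}1)$-th term. Hence for the interior terms $g_{t-1}(u_{t-1},u_t)=(\varphi_{t-1}(u_{t-1})-u_t)^2$ I compute the first partials: $\partial_{u_t} g_{t-1}=-2(\varphi_{t-1}(u_{t-1})-u_t)$ and $\partial_{u_{t-1}} g_{t-1}=2(\varphi_{t-1}(u_{t-1})-u_t)\varphi_{t-1}'(u_{t-1})$, where $\varphi_{t-1}'(\delta)=\delta/\sqrt{\delta^2+h_{t-1}}$. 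Differentiating again and using $\varphi_{t-1}''(\delta)=h_{t-1}/(\delta^2+h_{t-1})^{3/2}$ and $\varphi_{t-1}(u_{t-1})^2 = u_{t-1}^2+h_{t-1}$, one gets the stated Hessian matrix, whose $(1,1)$ entry simplifies because $\varphi_{t-1}(u_{t-1})\varphi_{t-1}''(u_{t-1}) = h_{t-1}/(u_{t-1}^2+h_{t-1})$ — wait, one must be careful that in the proposition the indices are shifted so $g_{t-1}$ uses $\varphi_{t-1}$ but the displayed Hessian is written with $h_t$; I would follow the paper's indexing convention and just record that $\det\nabla^2 g_{t-1} = 4h_t(\sqrt{u_{t-1}^2+h_t}-u_t)/(u_{t-1}^2+h_t)^{3/2}$ and $\mathrm{Tr}\,\nabla^2 g_{t-1} = 4 - 2h_t u_t/(u_{t-1}^2+h_t)^{3/2}$ by direct expansion of the $2\times 2$ determinant and trace.

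For the positive-semidefiniteness claim, the key observation is that $\nabla^2 E(\uu)$ is a \emph{tridiagonal} symmetric matrix: the only coupling between variables is between $u_{t-1}$ and $u_t$ through the interior terms. I would write $\nabla^2 E(\uu) = \sum_{t=1}^{T} \sigma_{t-1}^{-2} H_t$ where $H_t$ is the (sparse) Hessian of the $t$-th summand, supported on the coordinates $\{t-1,t\}\cap\{1,\dots,T-1\}$. For a vector $v=(v_1,\dots,v_{T-1})$, with $v_0:=v_T:=0$, one obtains $v^\top \nabla^2 E(\uu)\, v = \sum_{t=1}^{T}\sigma_{t-1}^{-2}\, (v_{t-1},v_t)\,\nabla^2 g_{t-1}\,(v_{t-1},v_t)^\top$, so it suffices to show each $2\times 2$ quadratic form $(v_{t-1},v_t)\,\nabla^2 g_{t-1}\,(v_{t-1},v_t)^\top$ is nonnegative when $\uu\in\RRR$. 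A symmetric $2\times2$ matrix is positive semidefinite iff its trace and determinant are both nonnegative; from the formulas above, $\det\nabla^2 g_{t-1}\ge 0$ exactly when $\sqrt{u_{t-1}^2+h_t}\ge u_t$, i.e.\ $\varphi_{t-1}(u_{t-1})\ge u_t$, which is precisely the feasibility constraint \eqref{feasible_shifts} defining $\RRR$; and $\mathrm{Tr}\,\nabla^2 g_{t-1}\ge 0$ follows since $h_t u_t/(u_{t-1}^2+h_t)^{3/2}\le u_t/\sqrt{u_{t-1}^2+h_t}\le 1$ again using $u_t\le\varphi_{t-1}(u_{t-1})=\sqrt{u_{t-1}^2+h_t}$ (and $h_t\le u_{t-1}^2+h_t$). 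Summing the nonnegative contributions with positive weights $\sigma_{t-1}^{-2}$ gives $v^\top\nabla^2 E(\uu)v\ge 0$.

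The main obstacle I anticipate is purely bookkeeping: keeping the index shift between $g_{t-1}$, $\varphi_{t-1}$, $h_{t-1}$ (used in the summand) and the $h_t$ appearing in the displayed Hessian formulas consistent, and correctly handling the boundary terms $t=1$ (where $u_0=D$ is a constant, so the first summand contributes only a ``$2$'' on the $u_1$ diagonal from $(u_1-\varphi_0(D))^2$) and $t=T$ (where $u_T=0$, so the last summand contributes the $\varphi_{T-1}''$-type curvature on the $u_{T-1}$ diagonal). Once the per-term positive semidefiniteness via trace/determinant is in hand, the assembly is routine. I would also remark, for the paper's broader point, that positive \emph{semi}definiteness (not strict definiteness) is all that the decomposition yields directly, and that the strict positive-definiteness asserted in the surrounding text comes from the boundary terms contributing a genuine ``$+2$'' on each diagonal that the interior coupling cannot cancel — but that sharper claim is not needed for this proposition.
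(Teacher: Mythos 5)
Your proposal is correct and is essentially the argument the paper intends: the proposition is stated in Appendix~\ref{app:convexity_E} without a written proof, and your direct differentiation of each summand (using $\varphi''(\delta)=h/(\delta^2+h)^{3/2}$ to simplify the $(1,1)$ entry), the trace/determinant criterion for each $2\times 2$ block (with $\det\ge 0$ exactly encoding the feasibility constraint $\varphi_{t-1}(u_{t-1})\ge u_t$ defining $\RRR$), and the assembly $v^{\top}\nabla^2 E(\uu)\,v=\sum_{t=1}^{T}\sigma_{t-1}^{-2}\,(v_{t-1},v_t)\,\nabla^2 g_{t-1}\,(v_{t-1},v_t)^{\top}$ with $v_0=v_T=0$ is precisely the verification required, including the correct handling of the boundary summands. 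You are also right that the $h_t$ in the displayed Hessian should consistently read $h_{t-1}$ (since $g_{t-1}$ involves $\varphi_{t-1}$); that is an indexing slip in the statement, not a gap in your argument.
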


Even though the Hessian of $E$ is positive semidefinite over ${\RRR}$, it is easy to see that in the case $\varphi_t(\delta)=\sqrt{\delta^2+h_t}$, the feasible set ${\RRR}$ is nonconvex. This prevents the shifts optimization problem from being convex.

Moreover, as can be seen in Figure \ref{fig:E_is_not_convex}, $E$ is not convex over $\RR{T-1}$. So even if we prove that $u^{\ast}\in\RRR$, through second order conditions we can only guarantee that it is a local minimum.

\begin{figure}
\begin{center}
    \begin{tikzpicture}
\begin{axis}[
    xlabel={$u_1$},
    axis line style = thick,
    axis lines = middle,
    axis x line=bottom,
    width=10cm,
    height=7cm,
    domain=0:3,
    xmin= 0,
    xmax=3.1,
    ymin= 10,
    ymax=130,
    samples=100,
    legend pos=north east
]
\addplot[
    thick,
    black,
] 
{
    ((sqrt(1^2 + 4) - x)^2 / 1^2) + 
    ((sqrt(x^2 + 4) - 3)^2 / 0.1^2) + 
    ((sqrt(3^2 + 4))^2 / 1^2)
};
\addlegendentry{$E(u_1,3)$}
\end{axis}
\end{tikzpicture}
\end{center}
    \caption{$E(u_1,3)$ with $D=1$, $\eta\equiv 1$, $L=1$, $\sigma_0=1$, $\sigma_1=0.1$, $\sigma_2=1$.}
    \label{fig:E_is_not_convex}
\end{figure}


\section{Existence of Stationary Distributions for Nondifferentiable Potentials}\label{appendix:existence_of_stationary_distributions}

When the potential $f$ is in ${\CC}^1$, the existence of a stationary distribution follows by standard results, which are based on the Feller condition (see, for example, \citealt[Theorem 4.22 and Corollary 4.18]{hairer2006ergodic}). Since the potentials with H\"older continuous gradients fall in this case, we will focus only in the Lipschitz case.

If one only asks to the potential $f$ to be Lipschitz, it is no longer necessary that it is differentiable. Of course, since our potentials are always convex, $f$ will be subdifferentiable. In order to keep notation simple, we will use $\nabla f(x)$ to denote a subgradient of $f$ in $x$. We will assume that we have access to an oracle that selects such subgradient and that is consistent with future choices; with this we mean that if the oracle have access to the same point in to different iterations of the algorithm, it will give the same subgradient.

\begin{lemma}\label{lemma:cauchy_sequence_probability_map}
    Let $\XX\subseteq\RR{d}$ be a convex, compact set with diameter $D>0$ and suppose $f:\XX\subseteq\RR{d}\to\RR{}$ is a subdifferentiable function. Let $P$ be the transition probabilities of the HMC defined by:
    \begin{equation*}
        X_{t+1}=\Pi_{\XX}\left[X_t-\eta\nabla f(X_t)+\sqrt{2\eta}\xi_{t}\right],
    \end{equation*}
    where $\eta>0$ and $(\xi_{t})_{t\in\NNo}\overset{i.i.d.}{\sim}\NN\left(0,I_{d\times d}\right)$, then, for every $x\in \XX$, the sequence $(P^{t}(x,\cdot))_{t\in\mathbb{N}}$ is a Cauchy sequence with respect to the total variation norm.
\end{lemma}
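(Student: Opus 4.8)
The plan is to prove the stronger statement that $\Bar{d}(t):=\sup_{\mu,\nu\in\PP(\XX)}\normtv{J^t\mu-J^t\nu}\to 0$ as $t\to\infty$, and then deduce the Cauchy property. For the deduction I would use the Chapman--Kolmogorov identity from Appendix~\ref{appendix:definitions_and_results}, which gives $P^{t+s}(x,\cdot)=J^t\big(P^s(x,\cdot)\big)$ and $P^t(x,\cdot)=J^t\delta_x$; hence for every $x\in\XX$ and all $s\geq 0$,
\[
\normtv{P^{t+s}(x,\cdot)-P^t(x,\cdot)}=\normtv{J^t\big(P^s(x,\cdot)\big)-J^t\delta_x}\leq \Bar{d}(t).
\]
Since the right-hand side does not depend on $s$ and tends to $0$, the sequence $(P^t(x,\cdot))_{t\in\mathbb{N}}$ is Cauchy in total variation, uniformly in $x$.

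To establish $\Bar{d}(t)\to 0$ I would verify a Doeblin (minorization) condition for the one-step kernel $P$, which is where the Gaussian noise enters. Write $\Phi(x)=x-\eta\nabla f(x)$ for the measurable subgradient selection, so that one step is $\Pi_{\XX}[\Phi(x)+\sqrt{2\eta}\,\xi]$ with $\xi\sim\NN(0,I_{d\times d})$; the pre-projection vector has law $\NN(\Phi(x),2\eta I_{d\times d})$. Assuming without loss of generality $0\in\XX$, compactness gives $\norm{x}\leq D$ and the Lipschitz bound gives $\norm{\eta\nabla f(x)}\leq \eta L$, so $\Phi(\XX)$ lies in a fixed ball; consequently the Gaussian density of the pre-projection vector is bounded below by a constant $c=c(D,L,\eta,d)>0$ on a fixed bounded set containing $\XX$. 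Because $\Pi_{\XX}$ restricts to the identity on $\XX$, for every Borel $A\subseteq\XX$ one has $P(x,A)\geq\prob{\Phi(x)+\sqrt{2\eta}\,\xi\in A}\geq c\,\lambda(A)$, where $\lambda$ denotes Lebesgue measure; setting $\theta:=c\,\lambda(\XX)\in(0,1]$ and $\nu_0:=\lambda|_{\XX}/\lambda(\XX)$ yields the minorization $P(x,\cdot)\geq\theta\,\nu_0(\cdot)$ for all $x\in\XX$.

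With the minorization in hand I would close the argument by a standard coupling: starting two copies of the chain from arbitrary laws $\mu,\nu$ and realizing each step so that, with probability at least $\theta$, both copies are resampled from the common component $\nu_0$ and thereby coalesce (remaining coupled thereafter), one gets $\normtv{J^t\mu-J^t\nu}\leq\prob{X_t\neq Y_t}\leq(1-\theta)^t$, and therefore $\Bar{d}(t)\leq(1-\theta)^t\to 0$; this is the classical consequence of Doeblin's condition (see, e.g., \citealt{hairer2006ergodic}). Combining the three steps gives the lemma. I stress that this route uses neither the Feller property---which may genuinely fail here, since the measurable subgradient selection need not be continuous---nor any smallness assumption on $\eta$.

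I expect the only delicate point to be the minorization step when $\XX$ has empty interior, so that $\lambda(\XX)=0$ and the naive density lower bound is vacuous. The fix is to run the same computation inside the affine hull $\mathrm{aff}(\XX)$: $\Pi_{\XX}$ factors as the orthogonal projection onto $\mathrm{aff}(\XX)$ followed by the metric projection onto $\XX$ within that subspace; the first map turns the ambient Gaussian into a non-degenerate Gaussian on $\mathrm{aff}(\XX)$, after which $\XX$ is full-dimensional and the previous bound applies with $\lambda$ replaced by Lebesgue measure on $\mathrm{aff}(\XX)$. Everything else---the Chapman--Kolmogorov manipulations and the coupling bound---is routine.
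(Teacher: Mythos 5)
Your proof is correct (under the standing assumption of this appendix that $f$ is convex and $L$-Lipschitz, so that the subgradient selection is bounded on $\XX$), but it takes a genuinely different route from the paper. The paper stays inside its PABI toolkit: it bounds the one-step divergence $\mathrm{KL}(J\mu\|J\nu)$ for arbitrary $\mu,\nu\in\PP(\XX)$ by combining the nonexpansiveness of $\Pi_{\XX}$ (Lemma~\ref{lemma:nombre_pendiente}) with the Gaussian shift-reduction property (Lemma~\ref{lemma:shift_reduction}), which absorbs the worst-case $W_\infty$ distance of the supports into a term of order $D^2/(4\eta)$; it then converts this to $\Bar{d}(1)\leq\sqrt{1-e^{-O(D^2/\eta)}}<1$ via Bretagnolle--Huber (Proposition~\ref{lemma:Bretagnole-Huber_inequality}), gets geometric decay of $\Bar{d}$ from its submultiplicativity \citep[Lemma~4.11]{levin2017markov}, and concludes the Cauchy property by the same data-processing manipulation you use. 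You instead establish a Doeblin minorization $P(x,\cdot)\geq\theta\,\nu_0$ directly from a pointwise lower bound on the Gaussian density over the compact set (working in $\mathrm{aff}(\XX)$ when $\XX$ is lower-dimensional, which is a correct and necessary fix, since $\Pi_{\XX}$ does factor through the orthogonal projection onto $\mathrm{aff}(\XX)$), and then invoke the classical coupling consequence $\Bar{d}(t)\leq(1-\theta)^t$; your reduction of the Cauchy property to $\Bar{d}(t)\to 0$ via $P^{t+s}(x,\cdot)=J^t\bigl(P^s(x,\cdot)\bigr)$ is the same contraction step as the paper's. The trade-off: the paper's contraction factor is dimension-free and expressed in the same quantity $D^2/\eta$ that drives its mixing-time bounds, whereas your Doeblin constant $\theta$ is exponentially small in $d$ and in $D^2/\eta$; for this lemma only qualitative convergence is needed, so both suffice, and your argument is more elementary and self-contained (no shifted divergences), while also making explicit where boundedness of the subgradients enters---a hypothesis the lemma's wording (``subdifferentiable'') leaves implicit but which the surrounding text of the appendix supplies.
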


\begin{proof}
    Denote by $J$ the transition operator associated with $P$. By Lemma \ref{lemma:nombre_pendiente} applied to the projection (which is nonexpansive) and Lemma \ref{lemma:shift_reduction} with $\alpha=1$ and $a=D$, we have that for any $\mu,\nu\in\mathcal{P}(\XX)$:
    \begin{equation*}
        \mbox{KL}(J\mu||J\nu)\leq \frac{D^2}{4\eta}.
    \end{equation*}

    Then, by Bretagnolle-Huber inequality (Proposition \ref{lemma:Bretagnole-Huber_inequality}) 
    \begin{equation*}
        \Bar{d}(1)=\sup_{\mu,\nu\in\mathbb{P}(\XX)}\normtv{J\mu-J\nu}\leq \sqrt{1-\expo{-\frac{D^2}{4\eta}}}=:\kappa<1.
    \end{equation*}

    Let $\eps>0$ and take $l\geq \left\lceil\frac{\ln(1/\eps)}{\ln(1/\kappa)}\right\rceil$. Then, by the submultiplicativity of $\bar{d}$ \citep[Lemma 4.11]{levin2017markov}, we have that
    \begin{equation*}
        \sup_{\mu,\nu\in\PP(\XX)}\normtv{J^{ l}\mu-J^{ l}\nu}=\bar{d}(l)\leq
        \bar{d}(1)^{l}\leq \kappa^{l}<\eps.
    \end{equation*}
    Fix $x\in \XX$. Taking $n\geq l$ and $m>j\geq 1$, we have that:
    \begin{align*}
        \normtv{P^{n+m}(x,\cdot)-P^{n+j}(x,\cdot)}
        &=
        \normtv{J^{n- l}\left(J^{ l}P^{m}(x,\cdot)-J^{ l}P^{j}(x,\cdot)\right)}\\
        &\leq
        \normtv{J^{l}P^{m}(x,\cdot)-J^{l}P^{j}(x,\cdot)}\\
        &<\eps
    \end{align*}
    where the first inequality follows by the data processing inequality and the second by the way we chose $l$.
\end{proof}

In order to save space, in the proof of the following theorem we will use sometimes the bracket notation to denote integrals with respect a measure. That is, if $g:\XX\to\RR{}$ is an integrable function and $\nu\in\PP(\XX)$:
\begin{equation*}
    \left<g,\nu\right>=\int_{\XX}g(x)\nu(dx).
\end{equation*}

The following theorem is based on the proof of \citealt[Theorem 4.17]{hairer2006ergodic}.

\begin{theorem}
    Let $\XX\subseteq\RR{d}$ be a convex, compact set with diameter $D>0$ and suppose $f:\XX\to\RR{}$ is a subdifferentiable potential. Then the HMC defined by:
    \begin{align*}
        X_0 &\sim \mu_0\in\PP(\XX)\\
        X_{t+1}&=\Pi_{\XX}\left[X_t-\eta\nabla f(X_t)+\sqrt{2\eta}\xi_{t}\right],
    \end{align*}
    where $\xi_{t}\sim\NN\left(0,I_{d\times d}\right)$, has a stationary distribution $\pi_{\eta}$.
\end{theorem}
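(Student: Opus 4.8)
The plan is to combine Lemma~\ref{lemma:cauchy_sequence_probability_map} with the completeness of the space of probability measures under the total variation norm, and then pass to the limit in the fixed-point relation $P^{t+1}=JP^{t}$.

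\textbf{Step 1 (construct the candidate).} Fix an arbitrary $x_0\in\XX$. By Lemma~\ref{lemma:cauchy_sequence_probability_map}, the sequence $(P^{t}(x_0,\cdot))_{t\in\mathbb{N}}$ is Cauchy for $\normtv{\cdot}$. The finite signed measures on $\XX$ form a Banach space under the total variation norm, and $\PP(\XX)$ is closed in it: a total-variation Cauchy sequence of probability measures converges setwise to a nonnegative measure, and since $\XX$ is bounded no mass escapes, so the limit is again a probability measure. Hence there exists $\pi_\eta\in\PP(\XX)$ with $\normtv{P^{t}(x_0,\cdot)-\pi_\eta}\to 0$.

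\textbf{Step 2 (verify invariance).} Recall the Chapman--Kolmogorov identity $P^{t+1}(x_0,\cdot)=J\big(P^{t}(x_0,\cdot)\big)$, i.e. $P^{t+1}(x_0,A)=\int_\XX P(z,A)\,P^{t}(x_0,dz)$ for all $A\in\BB(\XX)$. To show $J\pi_\eta=\pi_\eta$ I would test against an arbitrary bounded measurable $g:\XX\to\RR{}$. Using the adjoint relation $\langle J_{\ast}g,\mu\rangle=\langle g,J\mu\rangle$, the fact that $J_{\ast}$ sends bounded measurable functions to bounded measurable functions, and that total variation convergence implies convergence of integrals against bounded measurable functions (all recorded in Appendix~\ref{appendix:definitions_and_results}), one computes
\begin{equation*}
\langle g,J\pi_\eta\rangle=\langle J_{\ast}g,\pi_\eta\rangle=\lim_{t\to\infty}\langle J_{\ast}g,P^{t}(x_0,\cdot)\rangle=\lim_{t\to\infty}\langle g,JP^{t}(x_0,\cdot)\rangle=\lim_{t\to\infty}\langle g,P^{t+1}(x_0,\cdot)\rangle=\langle g,\pi_\eta\rangle .
\end{equation*}
Since $g$ is arbitrary, $J\pi_\eta=\pi_\eta$, so $\pi_\eta$ is a stationary distribution of the chain. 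One may additionally observe that $\pi_\eta$ is independent of the choice of $x_0$: from $\bar d(1)\le\kappa<1$ and submultiplicativity $\bar d(t)\le\kappa^{t}\to 0$, any two such limits coincide.

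\textbf{Main obstacle.} Once Lemma~\ref{lemma:cauchy_sequence_probability_map} is available, the Dobrushin-type contraction $\bar d(1)<1$ does the real work and nothing deep remains. The two points needing care are: (i) ensuring the Cauchy sequence converges \emph{to a probability measure} rather than to a merely signed measure, which is handled by compactness of $\XX$; and (ii) justifying the interchange of the limit with the operator $J$ in Step~2 — this is why I route the argument through the dual operator $J_{\ast}$ and bounded test functions, rather than trying to pass to the limit directly inside $\int_\XX P(z,A)\,P^{t}(x_0,dz)$, where the integrand $z\mapsto P(z,A)$ need not be continuous. Alternatively, (ii) can be dispatched by the elementary bound $\normtv{J\mu-J\nu}\le\normtv{\mu-\nu}$, valid for every Markov operator, which gives $JP^{t}(x_0,\cdot)\to J\pi_\eta$ in total variation directly.
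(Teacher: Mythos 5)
Your proof is correct, and while your Step 1 coincides with the paper's construction (the paper also fixes $x\in\XX$, invokes Lemma~\ref{lemma:cauchy_sequence_probability_map} and completeness of the total variation norm to obtain $\pi_\eta$), your verification of invariance takes a genuinely more direct route. The paper follows the Krylov--Bogolyubov-style template of \citet[Theorem 4.17]{hairer2006ergodic}: it introduces the Ces\`aro averages $\nu_n=\frac{1}{n}\sum_{k=1}^{n}P^k(x,\cdot)$, exploits the telescoping identity $J\nu_n-\nu_n=\frac{1}{n}\left[P^{n+1}(x,\cdot)-P(x,\cdot)\right]$, and closes with a three-term triangle inequality against bounded continuous test functions, treating the term $\left|\langle g,J\pi_\eta\rangle-\langle g,J\nu_n\rangle\right|$ via the dual operator $J_{\ast}$. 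You instead pass the limit directly along $P^t(x_0,\cdot)$: either through the adjoint identity $\langle g,J\mu\rangle=\langle J_{\ast}g,\mu\rangle$ combined with the fact that total variation convergence yields convergence of integrals of bounded measurable functions, or—even more simply—through the contraction $\normtv{J\mu-J\nu}\leq\normtv{\mu-\nu}$, which gives $JP^t(x_0,\cdot)\to J\pi_\eta$ in total variation while $JP^t(x_0,\cdot)=P^{t+1}(x_0,\cdot)\to\pi_\eta$, so $J\pi_\eta=\pi_\eta$ by uniqueness of limits. This shortcut is legitimate precisely because Lemma~\ref{lemma:cauchy_sequence_probability_map} provides convergence of the iterates themselves, not merely of their averages; the Ces\`aro detour in the paper is a robustness device inherited from the general template (where typically only averaged limits are available) and is not needed here, so your argument is shorter while the paper's stays closer to the cited reference. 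One cosmetic remark: in Step 1 the clause ``since $\XX$ is bounded no mass escapes'' is superfluous, since a total variation limit of probability measures automatically has total mass $1$ (the convergence is setwise on all of $\XX$); this is not a gap.
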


\begin{proof}
    Take $x\in \XX$. By Lemma \ref{lemma:cauchy_sequence_probability_map} and the completeness of the total variation norm, there exists a measure $\pi_{\eta}\in \PP(\XX)$ such that $\lim_{n\to\infty}\normtv{P^n(x,\cdot)-\pi_\eta}=0$.
    
    By C\`esaro convergence, the sequence of measures defined by $\nu_n = \frac{1}{n}\sum_{k=1}^{n}P^k(x,\cdot)$ also converges to $\pi_{\eta}$ in total variation. Notice that the sequence $(\nu_n)_{n\in\mathbb{N}}$ satisfies the identity
    \begin{equation}\label{eqn:defining_relationship_of_nun}
        J\nu_n -\nu_n= \frac{1}{n}\left[P^{n+1}(x,\cdot)-P(x,\cdot)\right].
    \end{equation}

    We will now prove that $J\pi_{\eta}=\pi_{\eta}$. In order to do this, we will prove that for every continuous and bounded function $g$ it holds that
    \begin{equation}\label{ineq:pi_is_a_fixed_point}
        \left|\int_{\XX}g(x)(J\pi_{\eta})(dx)-\int_{\XX}g(x)\pi_{\eta}(dx)\right|<\eps
    \end{equation}
    for every $\eps>0$.

    Let's fix a bounded and continuous function $g$. We can decompose the left side of \eqref{ineq:pi_is_a_fixed_point} into three easier to bound parts via triangle inequality:
    \begin{equation}\label{ineq:analyzing_jpi_pi}
        \left|\langle g,J\pi_{\eta}\rangle-\langle g,\pi_{\eta}\rangle\right|\leq \left|\langle g,J\pi_{\eta}\rangle -\langle g,J\nu_n\rangle\right|+\left|\langle g,J\nu_n\rangle-\langle g, \nu_n\rangle\right|+\left|\langle g,\nu_n\rangle-\langle g,\pi_{\eta}\rangle\right|,
    \end{equation}
    Let us define $n_1,n_2,n_3\in\mathbb{N}$ as:
    \begin{enumerate}
        \item $n_1$ is such that $\forall m\geq n_1$:
        \begin{equation*}
            \left|\left<J_{\ast} g,\pi_\eta\right>-\left<J_{\ast}g,\nu_m\right>\right|<\frac{\eps}{3}.
        \end{equation*}
        This quantity exists because $J_{\ast}g$ is a bounded and measurable function and $\nu_m$ converges in total variation to $\pi_\eta$, so the integral converge.

        \item $n_2$ is such that $\forall m\geq n_2$:
        \begin{equation*}
            \frac{2\max_{x\in \XX}\left|g(x)\right|}{m}<\frac{\eps}{3}.
        \end{equation*}
        The existence of such quantity is obvious. The relevance of this inequality comes from the following:
        \begin{align*}
            \left|\langle g,J\nu_m\rangle-\langle g,\nu_m\rangle\right| &= \left|\langle g,J\nu_m-\nu_m\rangle\right|\\
            &=\left| \left< g, \frac{1}{m}(P^{m+1}(x,\cdot)-P(x,\cdot))\right>\right|\\
            &\leq
            \frac{2\max_{x\in \XX}|g(x)|}{m}\\
            &<\frac{\eps}{3}
        \end{align*}
        for all $m\geq n_2$, where the second equality comes from \eqref{eqn:defining_relationship_of_nun}.

        \item $n_3$ is such that for all $m\geq n_3$
        \begin{equation*}
            \left|\langle g,\nu_m\rangle -\langle g,\pi_{\eta}\rangle \right|<\frac{\eps}{3}.
        \end{equation*}
        This quantity exists since $\nu_m$ converges in total variation to $\pi_{\eta}$, which implies that the integral converge.
    \end{enumerate}
    Taking $n\geq\max\{n_1,n_2,n_3\}$ in \eqref{ineq:analyzing_jpi_pi}, we conclude that $
         \left|\langle g,J\pi_{\eta}\rangle-\langle g,\pi_{\eta}\rangle\right| < \eps$ and therefore $J\pi_{\eta} =\pi_{\eta}$.
\end{proof}

\end{document}